\documentclass{article}

\usepackage[final]{neurips_2022}

\usepackage[utf8]{inputenc} 
\usepackage[T1]{fontenc}    
\usepackage{url}            
\usepackage{booktabs}       
\usepackage{amsfonts}       
\usepackage{nicefrac}       
\usepackage{microtype}      
\usepackage{xcolor}         
\usepackage{algorithm}
\usepackage{amsmath}
\usepackage{amssymb}
\usepackage{mathtools}
\usepackage{amsthm}
\usepackage[numbers]{natbib}
\usepackage{algpseudocode}
\usepackage{wrapfig}

\usepackage{fleqn, tabularx}
\usepackage{multirow}
\usepackage[export]{adjustbox}
\usepackage{colortbl}
\usepackage{multirow,tabularx,booktabs}

\theoremstyle{plain}
\newtheorem{theorem}{Theorem}[section]
\newtheorem{proposition}[theorem]{Proposition}
\newtheorem{lemma}[theorem]{Lemma}

\theoremstyle{definition}
\newtheorem{definition}[theorem]{Definition}
\newtheorem{assumption}[theorem]{Assumption}
\theoremstyle{remark}
\newtheorem{remark}[theorem]{Remark}

\usepackage[colorlinks=true,linkcolor=blue,citecolor=blue]{hyperref}
\usepackage[skins,theorems]{tcolorbox}

\usepackage[capitalize,noabbrev]{cleveref}

\usepackage[textsize=tiny]{todonotes}

\title{Data-Driven Offline Decision-Making via\\ Invariant Representation Learning}

\author{%
  Han Qi$^*$, Yi Su$^*$, Aviral Kumar$^*$, Sergey Levine\\
  Department of Electrical Engineering and Computer Sciences, UC Berkeley \\
  \texttt{\{han2019, aviralk\}@berkeley.edu, yisumtv@google.com}~~~ ($^*$Equal Contribution)
}

\begin{document}
%% editing comment
\newcommand{\cmt}[1]{{\textcolor{red}{#1}}}
\newcommand{\note}[1]{\cmt{Note: #1}}
\newcommand{\ak}[1]{\textcolor{red}{#1}}
\newcommand{\cmtt}[1]{{\textcolor{blue}{#1}}}

%% abbreviations
\newcommand{\x}{\mathbf{x}}
\newcommand{\z}{\mathbf{z}}
\newcommand{\w}{\mathbf{w}}
\newcommand{\data}{\mathcal{D}}

\newcommand{\etal}{{et~al.}\ }
\newcommand{\eg}{e.g.\ }
\newcommand{\ie}{i.e.\ }
\newcommand{\nth}{\text{th}}
\newcommand{\pr}{^\prime}
\newcommand{\tr}{^\mathrm{T}}
\newcommand{\inv}{^{-1}}
\newcommand{\pinv}{^{\dagger}}
\newcommand{\real}{\mathbb{R}}
\newcommand{\gauss}{\mathcal{N}}
\newcommand{\norm}[1]{\left|#1\right|}
\newcommand{\trace}{\text{tr}}

%% specifics for the paper
\newcommand{\reward}{r}
\newcommand{\policy}{\pi}
\newcommand{\mdp}{\mathcal{M}}
\newcommand{\states}{\mathcal{S}}
\newcommand{\actions}{\mathcal{A}}
\newcommand{\observations}{\mathcal{O}}
\newcommand{\transitions}{\mathcal{T}}
\newcommand{\initial}{\mathcal{I}}
\newcommand{\horizon}{H}
\newcommand{\rewardevent}{R}
\newcommand{\probr}{p_\rewardevent}
\newcommand{\metareward}{\bar{\reward}}

\newcommand{\pihi}{\pi^{\text{hi}}}
\newcommand{\pilo}{\pi^{\text{lo}}}
\newcommand{\ah}{\mathbf{w}}

\newcommand{\loss}{\mathcal{L}}
\newcommand{\eye}{\mathbf{I}}

\newcommand{\model}{\hat{p}}

\newcommand{\pimix}{\pi_{\text{mix}}}

\newcommand{\pib}{\bar{\pi}}
\newcommand{\epspi}{\epsilon_{\pi}}
\newcommand{\epsmodel}{\epsilon_{m}}

%% math
\newcommand{\cY}{\mathcal{Y}}
\newcommand{\cX}{\mathcal{X}}
\newcommand{\cZ}{\mathcal{Z}}
\newcommand{\cH}{\mathcal{H}}
\newcommand{\en}{\mathcal{E}}
\newcommand{\cF}{\mathcal{F}}
\newcommand{\be}{\mathbf{e}}
\newcommand{\by}{\mathbf{y}}
\newcommand{\bx}{\mathbf{x}}
\newcommand{\bc}{\mathbf{c}}
\newcommand{\bz}{\mathbf{z}}
\newcommand{\bo}{\mathbf{o}}
\newcommand{\bs}{\mathbf{s}}
\newcommand{\ba}{\mathbf{a}}
\newcommand{\hatf}{\hat{f}}
\newcommand{\tildef}{\tilde{f}}
\newcommand{\bG}{\mathbf{G}}
\newcommand{\bbP}{\mathbb{P}}
\newcommand{\bbE}{\mathbb{E}}
\newcommand{\bbR}{\mathbb{R}}

\newcommand{\ot}{\bo_t}
\newcommand{\st}{\bs_t}
\newcommand{\at}{\ba_t}
\newcommand{\op}{\mathcal{O}}
\newcommand{\opt}{\op_t}
\newcommand{\kl}{D_\text{KL}}
\newcommand{\tv}{D_\text{TV}}
\newcommand{\ent}{\mathcal{H}}
\newcommand{\actionspace}{\mathcal{A}}
\newcommand{\filt}{\mathcal{F}}
\newcommand{\expec}{\mathbb{E}}
\newcommand{\var}{\operatorname{var}}
\newcommand{\piinv}{\pi^{-1}}
\newcommand{\finv}{f^{-1}}
\newcommand{\bigO}{\mathcal{O}}
\newcommand{\atcn}{A_t^{\mathrm{CN}}}
\newcommand{\atts}{A_t^{\mathrm{TS}}}
\newcommand{\bzhi}{\bz^\text{hi}}
\newcommand{\inputspace}{\mathcal{X}}
\newcommand{\yquery}{\tilde{y}^*}
\newcommand{\zquery}{\tilde{\z}^*}
\newcommand{\disc}{\operatorname{Disc}}
\newcommand{\jsd}{D_{\text{JSD}}}
\newcommand{\thetastar}{\theta^\star}
\newcommand{\valdata}{\mathcal{D}_{\text{val}}}
\newcommand{\xopt}{\mathcal{X}_{\text{opt}}}
\newcommand{\lhat}{\widehat{L}}
\newcommand{\tildedata}{\overline{\mathcal{D}}}

\newcommand{\methodname}{\textsc{Iom}}
\newcommand{\conmethodname}{\textsc{Iom}-C}
\newcommand{\Opt}{\mu_\textsc{Opt}}
\newcommand{\Optold}{\textsc{Opt}}
\newcommand{\question}[1]{\cmt{\textbf{Question}: #1}}
\newcommand{\aviral}[1]{\textcolor{red}{#1}}

\maketitle

\begin{abstract}
The goal in offline data-driven decision-making is synthesize decisions that optimize a black-box utility function, using a previously-collected static dataset, with no active interaction. These problems appear in many forms: offline reinforcement learning (RL), where we must produce actions that optimize the long-term reward, bandits from logged data, where the goal is to determine the correct arm, and offline model-based optimization (MBO) problems, where we must find the optimal design provided access to only a static dataset. A key challenge in all these settings is distributional shift: when we optimize with respect to the input into a model trained from offline data, it is easy to produce an out-of-distribution (OOD) input that appears erroneously good. In contrast to prior approaches that utilize pessimism or conservatism to tackle this problem, in this paper, we formulate offline data-driven decision-making as \emph{domain adaptation}, where the goal is to make accurate predictions for the value of optimized decisions (``target domain''), when training only on the dataset (``source domain''). This perspective leads to invariant objective models (\methodname), our approach for addressing distributional shift by enforcing invariance between the learned representations of the training dataset and optimized decisions. In \methodname, if the optimized decisions are too different from the training dataset, the representation will be forced to lose much of the information that distinguishes good designs from bad ones, making all choices seem mediocre. Critically, when the optimizer is aware of this representational tradeoff, it should choose not to stray too far from the training distribution, leading to a natural trade-off between distributional shift and learning performance.
% \footnote{This is an extended version of the NeurIPS 2022 conference paper titled: ``Data-Driven Offline Model-Based Optimization via Invariant Representation Learning''.}
\end{abstract}

\vspace{-0.27cm}
\section{Introduction}
\vspace{-0.27cm}
Many real-world applications of machine learning involve learning how to make better decisions from data. When we must make a sequence of decisions (e.g., control a robot), this can be formulated as offline reinforcement learning (RL)~\citep{lange2012batch,levine2020offline}, whereas in problems where we must synthesize only one decision (e.g., a molecule that can effectively catalyze a reaction), this can be formulated as a multi-armed bandit or a data-driven model-based optimization (also known as black-box optimization~\citep{angermueller2020population}). In all of these cases, an algorithm is provided with data from previous experiments consisting of $(\bx,y)$ tuples, where $\bx$ represents a decision (e.g., a bandit arm, a design such as a molecule or protein, or a sequence of actions in RL) and $y$ represents its utility (e.g., certain property of the molecule or the long-term reward of a trajectory). The goal is to generate the best possible decision, typically one that is better than the best one in the dataset.
All of these problem domains are united by a common challenge: in order to improve, the decisions learned by the algorithm must differ from the distribution of the designs in the training data, inducing \emph{distributional shift}~\citep{kumar2019model,kumar2019stabilizing}, which must be carefully handled.

Current methods for solving such data-driven decision-making problems try to learn some form of a proxy model $\hat{f}(\cdot)$ that maps a decision $\bx$ to its objective value $y$ via supervised learning on the static dataset, and then optimize the decision $\bx$ against this learned model (in reinforcement learning, the proxy model is typically the action-value function and is learned via Bellman backups, thanks to the Markovian structure). However, overestimation errors in the learned model will erroneously drive the optimizer towards low-value designs that ``fool'' the proxy into producing a high values~\cite{kumar2019model}. To handle this, prior works have proposed to regularize the learned model to be conservative by penalizing its values on unseen designs~\citep{kumar2020conservative,trabucco2021conservative,yu2021roma,jin2020pessimism}, or explicitly constraining the optimizer (e.g., by using density estimation~\citep{brookes19a,kumar2019model}, referred to as ``policy constraints'' in offline RL~\citep{levine2020offline}).

In this paper, we take a different perspective for the distributional shift challenges in offline data-driven decision-making, and formulate it as domain adaptation, where the ``target'' domain consists of the distribution of decisions that the optimization procedure finds, and the source domain corresponds to the training distribution. Framing the problem in this way, we derive a class of methods that regulate distributional shift at the \emph{representation level}, and admit effective hyperparameter tuning strategies. The intuition behind our approach is that, if the internal representations of the learned model are made \emph{invariant} to the differences between the decisions seen in the training data vs. the optimized distribution, then excessive distributional shift is naturally discouraged because excessive invariance would lead the out-of-distribution points to attain mediocre values on average. That is, as the optimized decisions deviate more from the training data, the requirement to maintain invariance will cause the model's representation to lose information, and will hence force the predicted value to become closer to the average $y$ value in the dataset. This will force the optimizer to stay within the distribution of the dataset as recall that out-of-distribution points will only appear mediocre under the learned model. Of course, the representation cannot be perfectly invariant in general, because the optimized points are different from the training data, and hence the method must strike a balance. In practice, one can instantiate this idea using an adversarial training procedure. 

\begin{wrapfigure}{r}{0.45\textwidth}
\vspace{-0.75cm}
    \includegraphics[width=0.45\textwidth]{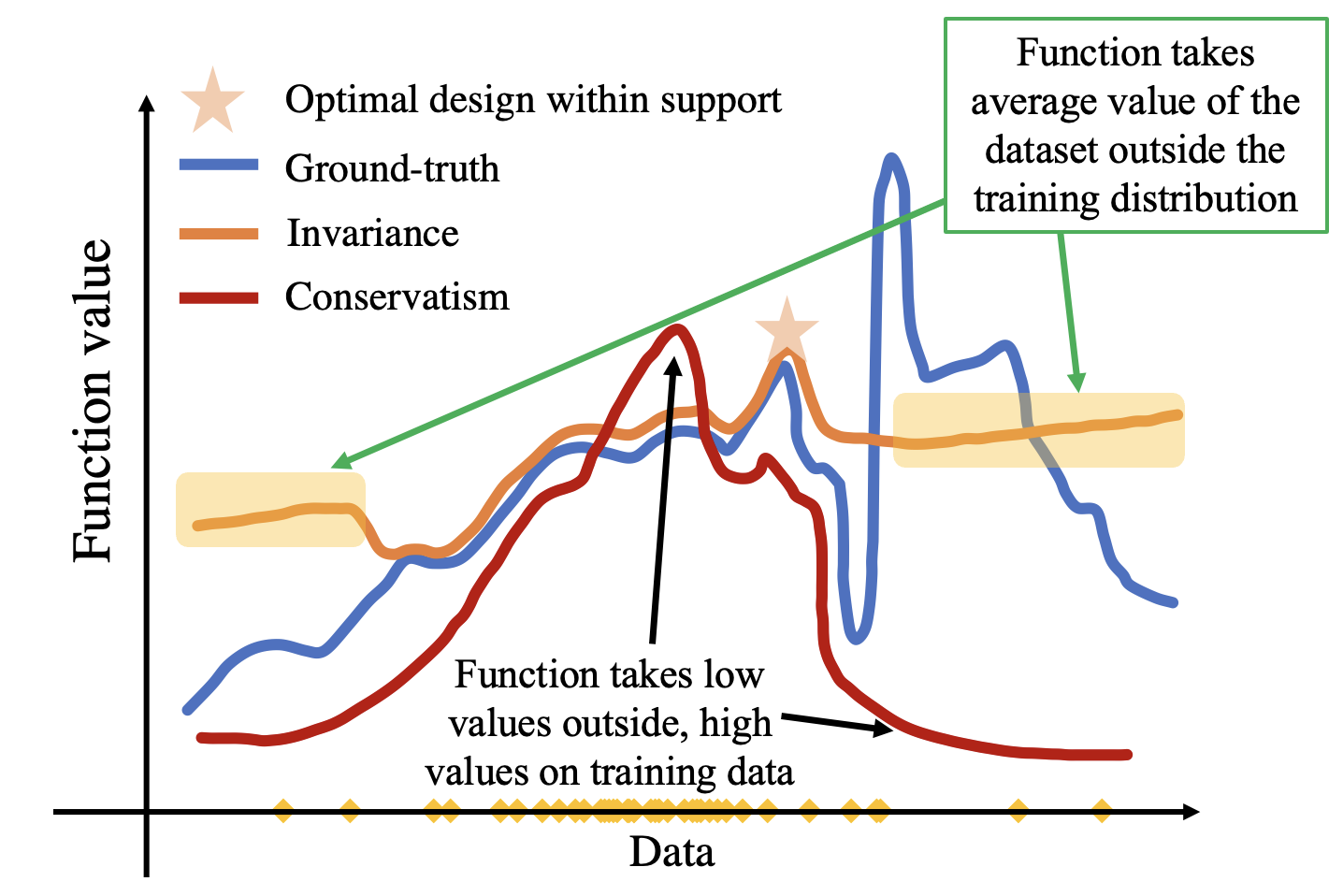}
    \vspace{-0.2cm}
    \hspace*{-1.3cm}\caption{\footnotesize \textbf{Illustration showing the intuition behind \methodname} on a 1D task. While conservatism (COMs)~\citep{trabucco2021conservative} pushes down the learned values on designs (decisions) not observed in the dataset and pushes up the learned values on designs (decisions) in the dataset, training with invariance induces the out-of-distribution designs (decisions) to attain objective values close to the average value in the training dataset. This discourages the optimizer from finding out-of-distribution designs (decisions).}
    \vspace{-0.3cm}
\end{wrapfigure}
Our main contribution is a new class of methods for offline data-driven decision-making, derived from a novel connection with domain adaptation. In this paper, we instantiate this idea in the setting of offline data-driven model-based optimization (MBO) (i.e., offline data-driven bandits), leaving the sequential offline reinforcement learning setting to future work.  As discussed above, our approach, invariant objective models (\methodname), optimizes the decision -- in this case, a ``design'' -- against a learned model that admits invariant representations between optimized designs and the dataset. This enables controlling distributional shift and also provides a way to derive effective offline hyperparameter tuning strategies that we show actually work well in practice. \methodname\ can be implemented simply by combining any supervised regression procedure with a regularizer that enforces invariance. The regularizer is implemented using any discrepancy measure between two distributions, and we instantiate \methodname\ using the $\chi^2$-discrepancy measure via the least-squares generative adversarial network.
This procedure trains a discriminator to discriminate between the \emph{representations} on the training dataset and optimized designs, whereas the learned representations are trained to be as invariant as possible to fool the discriminator. We theoretically derive \methodname\ from our domain adaptation formulation of MBO and we also use this formulation to derive tuning strategies for several common hyperparameters.  Empirically, we evaluate \methodname\ on several tasks from Design-Bench~\citep{trabucco2021designbench} and find that it outperforms the best prior methods, and additionally, admits appealing offline  tuning strategies unlike the prior methods.

\vspace{-0.3cm}
\section{Preliminaries}
\label{sec:prelims}
\vspace{-0.3cm}

\textbf{Offline data-driven decision-making and its variants.} The goal in offline data-driven decision-making is to produce decisions that maximize some objective function, using experience from a provided static dataset. Two instances of this problem are offline reinforcement learning (RL), and offline data-driven model-based optimization (MBO). As shown in Table~\ref{tab:terminology}, in offline RL, the decision is a policy $\pi$ that prescribes an action $\ba$ for any state $\bs$, and the objective function is the long-term discounted reward attained by the policy $\pi$, denoted as $J(\pi)$. In offline MBO, the decision corresponds to a design ($\bx$), and the objective function is the unknown black-box utility ($f(\bx)$). We will utilize the notation from offline MBO in this paper for simplicity, but the ideas developed in this paper can be extended to the RL setting.     

\begin{table}[t]
\label{tab:terminology}
 \vspace{-0.4cm}
  \centering
  \footnotesize
  \def\arraystretch{0.9}
  \setlength{\tabcolsep}{0.42em}
\begin{tabularx}{0.95\linewidth}{c X X}
\toprule
\textbf{Quantity} & \textbf{Offline RL} & \textbf{Offline MBO}\\
\midrule
Decision & a policy $\pi$ mapping states to actions & a design $\bx$ (e.g., a molecule)  \\
Objective function & long-term discounted reward, $J(\pi) = \mathbb{E}_{\ba_{0:\infty} \sim \pi}[\sum_{t} \gamma^t r(\bs, \ba_t)]$ & the objective value, $f(\bx)$\\
\midrule
Training dataset & a dataset $\mathcal{D} = \{(\tau_i, r_i) \}_{i=1}^N$ sampled from a dist. over rollouts $\tau$, $\mu(\tau)$ & a dataset $\mathcal{D} = \{(\bx_i, y_i) \}_{i=1}^N$ sampled from the distribution of designs, $\mu(\bx)$  \\
Optimization problem & using $\mathcal{D}$ find $\arg \max_{\pi}~ J(\pi) $ & using $\mathcal{D}$ find $\arg \max \mathbb{E}_{\bx \sim \Opt}[f(\bx)]$ \\
\midrule
Proxy model & learn a model of $J(\pi)$, typically in the form of a value function, $\widehat{Q}(\bs, \ba)$ & learn a model of $f$, $\widehat{f}(\bx)$ \\
A typical method & optimize the learned Q-function $\widehat{Q}$ & optimize the learned model $\widehat{f}$ \\
\midrule
Distributional shift & shift between $\mu(\ba|\bs)$ and $\pi(\ba|\bs)$ & shift between $\mu(\bx)$ and $\Opt(\bx)$ \\
\bottomrule
    \end{tabularx}
    \vspace{0.05cm}
         \caption{\label{tab:terminology} \footnotesize{\textbf{Instantiation of our generic formulation of offline decision-making} in the context of offline RL and offline MBO. While the rest of the paper will adopt the terminology from offline MBO, the ideas we present in this paper can also be applied to offline RL. 
     }}
\vspace{-0.5cm}
\end{table}

\textbf{Offline model-based optimization (MBO).} As mentioned above, the goal in offline model-based optimization (MBO)~\citep{trabucco2021designbench} is to produce designs that maximize some objective function $f(\bx)$, by only utilizing a dataset $\mathcal{D} = \{(\bx_i, y_i)\}_{i=1}^n$ of designs $\bx_i$ and their corresponding objective values $y_i$. No active queries to the groundtruth function are allowed. Typically offline MBO is formulated as finding the optimal design $\bx^\star$. However, we will use a slightly more general formulation, where the goal is to optimize a \emph{distribution} over designs $\mu(\bx)$. We will use this formulation largely for convenience of analysis, where the classic case is recovered when $\mu(\bx)$ is a Dirac delta function, though we also note that optimizing distributions over designs is often studied in settings such as synthetic biology, where the goal is to explicitly diversify the designs~\citep{biswas2021low}. Thus, our goal is to find a distribution over designs $\Opt(\bx)$ using only the static dataset $\mathcal{D}$, such that it maximizes the \emph{expected value} of $f(\bx)$ (the expected value under distribution $\mu$ is referred to as $J(\mu)$). This can be formalized as follows ($\mathsf{Alg}$ refers to an algorithm): 
\begin{align}
    \Opt := \arg \max_{\mathsf{Alg}}~ \bbE_{\bx \sim \mu(\bx)}[f(\x)] := J(\mu)~~~~ \text{s.t.}~~ \mu = \mathsf{Alg}(\mathcal{D}). \vspace{-0.1cm}
\label{eqn:mbo}
\end{align}
\textbf{Model-based optimization methods.} The methods we consider in this paper fit a learned model ${f}_\theta(\bx)$ to $(\bx_i, y_i)$ pairs in the dataset via a standard supervised regression objective, with an additional regularizer.
We will assume that ${f}_\theta$ is composed of two components: a representation $\phi \in \mathbb{R}^d$ that maps a given $\bx$ into a $d$-dimensional vector, and a forward model, ${f}_\theta$ that converts the representational output into a scalar objective value, i.e., ${f}_{\phi}(\bx) \coloneqq {f}_\theta(\phi(\bx))$. We will use the notation $f_\theta$ and $f_\phi$ interchangeably. 
Given a learned model ${f}_\theta(\phi(\bx))$, one approach to obtain the optimized distribution $\Opt$ is by optimizing the expected value ${J}_\theta(\mu) := \bbE_{\bx \sim \mu}[{f}_\theta(\phi(\bx))]$ under the learned model. This distribution $\Opt(\bx)$ can be instantiated in many ways: while some prior work~\citep{kumar2019model,brookes19a} represents $\Opt$ via a parametric density model over $\mathcal{X}$ and optimizes this density model to obtain $\Opt$, other prior work~\citep{trabucco2021conservative} uses a non-parametric representation for $\Opt$ using a set of optimized ``design particles''. 

In this paper, we represent $\Opt$ using a set of design particles, each of which is obtained by optimizing the learned function $f_\theta(\phi(\bx))$ with respect to the input $\bx$ via grad. ascent, starting from \emph{i.i.d.} samples from the dataset. Formally:
\begin{equation}
\label{eqn:gradient_ascemt}
\x^i_{k+1} \leftarrow \x^i_{k} + \eta \nabla_\bx f_\theta(\phi(\bx))\vert_{\bx = \x^i_k}, \text{~~for~~} k \in [1, T], ~~~ \x^i_0 \sim \mathcal{D}   
\end{equation}

The optimized distribution $\Opt$ is then given by $\Opt = \delta\{\x^1_T, \x^2_T, \cdots\}$. 

\textbf{The challenge of distributional shift in offline MBO}~\citep{kumar2019model,brookes2019conditioning,fannjiang2020autofocused,trabucco2021conservative}. When the learned model $f_\theta(\phi(\bx))$ is trained via supervised regression on the offline dataset $\mathcal{D}$: $\arg\min_{\theta} \frac{1}{n}\sum_{i=1}^n({f}_\theta(\phi(\bx_i)) - y_i)^2$, the ERM principle states that $f_\theta(\phi(\bx))$ will reasonably approximate $f(\bx)$ in expectation only under the distribution of the training dataset $\mathcal{D}$, i.e., $\mu_\text{data}$, however $f_\theta(\phi(\bx_i)) \neq y_i$ in general on points not in the training distribution. Then, when optimizing the learned function, \emph{overestimation errors} in the value of $f_\theta(\phi(\bx))$ will inevitably lead the optimizer to converge to a distribution $\Opt$ over designs that have a high value under the learned function(i.e., high $J_\theta(\Opt)$) but not a high ground truth value $J(\Opt$). In general, this distribution $\Opt$ will be different from $\mu_\text{data}$, since the learned function is likely to be more correct in expectation under $\mu_\text{data}$.

\vspace{-0.23cm}
\section{Data-Driven Offline Model-based Optimization as Domain Adaptation}
\label{sec:method}
\vspace{-0.23cm}

In this section, we will formulate data-driven offline MBO as domain adaptation, which will allow us to derive our method, invariant objective models (\methodname). Instead of attempting to simply constrain the optimizer to directly avoid distributional shift, our approach modifies the representations learned by the model $\hat{f}_\theta$ by adding an invariance regularizer during training such that there is minimal distribution shift under the learned invariant representation during optimization.

If we can ensure that the learned model $f_\theta$ is accurate on the distribution found by the optimizer, $\Opt$, then we can ensure that we will obtain good designs. Therefore, to handle the distributional shift, we can treat $\mu_\text{data}$ as the ``source domain'' and the optimized distribution $\Opt$ as the ``target domain''. Just as the goal in domain adaptation is to train on the source domain and make accurate predictions on the target domain, we will aim to train $f_\theta(\bx)$ on $\mu$ and with the goal of making more accurate predictions under $\Opt$. Of course, learning to make accurate predictions on \emph{any} unseen distribution is impossible for any domain adaptation method~\citep{zhao2019learning}, but in the case of MBO, the choice of the target distribution $\Opt$ is also made by the algorithm itself, such that invariance can be enforced \emph{both} by changing the representation and changing the target distribution $\Opt$.

\vspace{-0.2cm}
\subsection{Invariant Objective Models (\methodname)}
\label{sec:mmd_method}
\vspace{-0.2cm}

\methodname\ controls distributional shift between the optimized distribution $\Opt$ and the training distribution $\mu_\text{data}$ at the representational level, by regularizing the distribution of the features $\phi(\bx)$ under $\Opt$ to match that under $\mu_\text{data}$. We will first present the intuition behind the method, and then formalize it by setting up a bi-level optimization problem that we will also analyze theoretically.

\textbf{Intuition:} As the optimizer deviates from the training distribution $\mu$, the invariance regularizer forces the distribution of optimized designs to resemble the distribution over representations $\phi(\bx)$ under $\mu_\text{data}$,
i.e.,  $\bbP_{\mu_{\Optold}}(\phi(\bx)) \approx \bbP_{\mu_\text{data}}(\phi(\bx))$. When $\mu_{\Optold}$ is far from the training distribution (i.e., if $D_\mathrm{KL}(\mu_{\Optold}, \mu_\text{data})$ is very high), it is easier to enforce invariance as it does not conflict with fitting $f_\theta$ on the training distribution. 
This representational invariance in turn causes the expected value of the learned function $f_\theta$ under $\Opt$ to appear (roughly) equal to the average objective value in the training dataset, i.e., $\mathbb{E}_{\bx \sim \Opt}[{f}_\theta(\phi(\bx))] \approx \mathbb{E}_{\bx \sim \mu_\text{data} }[{f}_\theta(\phi(\bx))]$, if $\Opt$ is too far away from the training distribution.
As a result, the expected value of the learned model $f_\theta(\phi(\bx))$ under $\Opt$, $J_\theta(\Opt)$, will be worse than the value of the best design within the training distribution (since the best design input sampled from $\mu_\text{data}$ will attain a higher objective value than the average value under $\mu_\text{data}$). This would act as a guard to prevent the optimizer diverging too far away from the dataset, as out-of-distribution designs no longer appear promising. Iteratively regularizing $\phi$ to enforce invariance between $\Opt$ and $\mu$ until convergence will restrict the optimizer from selecting erroneously overestimated out-of-distribution inputs. We supplement this intuition with an illustration shown in Figure~\ref{fig:intuition_fig}. We will now present a formalization of this intuition in the form of a bi-level optimization problem that aims to maximize a lower-bound on the expected objective value attained under $\Opt$, and derive the training objectives for \methodname\ using this bi-level optimization formulation. 
\begin{wrapfigure}{r}{0.5\textwidth}
\vspace{-0.8cm}
\begin{minipage}{0.5\textwidth}
\begin{algorithm}[H]
\caption{\methodname: Training and Optimization}
\label{alg:iom}
\begin{small}
\begin{algorithmic}[1]
\State \textbf{Input}: training data $\data$, number of gradient steps $T=50$ to optimize $\Opt$ starting from the training distribution $\mu$, training iteration $K$, batch size $n$,
\State \textbf{Initialize}: representation model $\phi_\eta(\cdot)$, learned model $\widehat{f}_{\theta}(\cdot)$, set of optimized design particles $\{\mathbf{x}^*_i\}_{i=1}^m$
\For{$k=1\cdots, K$}, 
\State Sample $n$ training points $(\x_{i}, y_i)\sim\data$
\State Run one gradient step (Equation~\ref{eqn:training_ioms}) w.r.t $\theta_k$ and ~~~~~~~~~~~~~$\phi_k$ to obtain $\phi_{k+1}$ and $f_{\theta_{k+1}}(\phi_{k+1}(\cdot))$.
\State Run one gradient step to optimize design particles w.r.t. $f_\theta(\phi(\bx))$: $\mathbf{x}^*_i \leftarrow \mathbf{x}^*_i + \eta \nabla_\mathbf{x} f_{\theta_k}(\phi_k(\mathbf{x}^*_i))$.
\EndFor

\end{algorithmic}
\end{small}
\end{algorithm}
\end{minipage}
\vspace{-0.5cm}
\end{wrapfigure}
However, do note that, our practical method does not aim to solve a bi-level optimization exactly due to computational infeasibility.

\textbf{Bi-level optimization for invariant objective models (\methodname).} To enforce invariance at a representational level, we utilize an invariance regularizer. Denoted as $\text{disc}_{\cH}(p, q)$, our invariance regularizer is a discrepancy measure between two probability distributions $p$ and $q$, w.r.t. a certain hypothesis class $\cH$. For example, 
if $\cH$ is the class of all $1$-Lipschitz functions, $\text{disc}_\cH$ is the $1$-Wasserstein distance; 
if $\cH$ is the class of linear functions, $\text{disc}_\cH$ is the maximum mean discrepancy (MMD)~\citep{gretton2012kernel} distance. In practice, we implement this via a generative adversarial network as we discuss in the next subsection.

In addition to training ${f}_\theta$ via standard supervised regression on the labels in the training data, \methodname\ trains the representation $\phi(\bx)$ and the model $\widehat{f}(\cdot)$ to additionally minimize the discrepancy $\text{disc}_\mathcal{H}(\bbP_\mu(\phi(\bx)), \bbP_{\mu_{\Optold}}(\phi(\bx)))$
between distributions of representations under the optimized distribution, $\Opt$ and the training distribution $\mu$. This objective can be formalized as:
\begin{align} 
\label{eqn:bi_level}
	&\max_{\Opt} \hspace{0.1cm} ~~\mathbb{E}_{\widehat{\bx} \sim \Opt}\left[{f}_\theta^*(\phi^*(\widehat{\bx}))\right] \nonumber \\
% 	&\text{~~s.t.~~} \text{disc}_{\cH}(\bbP_{\mu}(\phi(\bx)), \Opt) \leq \varepsilon', \nonumber\\
	& \text{~~s.t.~~} (\phi^*, {f}^*_\theta) = \arg\min_{\phi, \widehat{f}}~~ \frac{1}{n}\sum_{i=1}^n(\widehat{f}(\phi(\bx_i)) - y_i)^2 + \lambda \cdot  \text{disc}_{\cH}(\bbP_{\mu_\text{data}}(\phi(\bx)), \bbP_{_{\mu_{\Optold}}}(\phi({\bx}))),
\end{align}
where $\lambda$ denotes a weighting hyperparameter. In Section~\ref{sec:practical_alg}, we will describe how we can convert this bi-level optimization problem into a practical MBO algorithm that can be implemented with high capacity deep neural networks. This is followed by a theoretical analysis to formally justify the objective in Section~\ref{section:invariant_representations}, and  a discussion of how the hyperparameter $\lambda$ can be tuned against a validation set in Section~\ref{sec:cross_validation}, without any online access to the ground truth function. 

\vspace{-0.25cm}
\subsection{Optimizing the Bi-Level Problem and the Practical \methodname\ Algorithm}
\label{sec:practical_alg}
\vspace{-0.25cm}

In this section, we will describe how we can convert the abstract bi-level optimization problem above into an objective that can be trained practically and discuss some practical implementation details. \methodname\ alternates between solving the bi-level optimization problem with respect to the distribution $\Opt$ and the learned model $f_\theta$ and $\phi$ independently.
For training the learned model and the representation $f_\theta$ and $\phi$, \methodname\ solves the inner optimization by running one-step of gradient descent ($\mathsf{GD}$), utilizing the current snapshot of $\Opt$ (denoted as $\Opt^t$):
\begin{equation} 
\label{eqn:training_ioms}
	(\phi_{t+1}, \theta_{t+1}) = \mathsf{GD}_{\theta, \phi; t}\left( \frac{1}{n}\sum_{i=1}^n({f}_\theta(\phi(\bx_i)) - y_i)^2 + \lambda \cdot  \text{disc}_{\cH}(\bbP_{\mu_\text{data}}(\phi(\bx)), \bbP_{_{\mu^t_{\Optold}}}(\phi({\bx}))) \right). 
\end{equation}
Simultaneously, we update the design particles representing $\Opt^t := \delta\{\bx^1_t, \bx^2_t, \cdots\}$ towards optimizing the learned function $f_{\theta_{t+1}}(\phi_{t+1}(\bx))$:
$\bx^i_{t+1} \leftarrow \bx^i_t + \eta \nabla_\bx f_{\theta_{t+1}}(\phi_{t+1}(\bx))|_{\bx = \bx^i_t}, ~\forall i.$ Pseudocode for \methodname\ is shown in Algorithm~\ref{alg:iom}. 

\textbf{Practical implementation details:} We model the representation $\phi(\bx)$ and the learned function ${f}_\theta(\cdot)$ each as two-hidden layer ReLU networks with sizes 2048 and 1024, respectively. We utilize the $\chi^2$-discrepancy, $\text{disc}_\mathcal{H}$ in our experiments: $\text{disc}_\phi(p, q) := \chi^2([\phi(\bx)]_{\bx \sim p}, [\phi(\bx)]_{\bx \sim q})$.
This divergence can be instantiated in its variational form via a least-squares generative adversarial network (LS-GAN)~\citep{mao2017least} on the representations $\phi(\bx)$. Concretely, we train an LS-GAN discriminator to discriminate between the representations $\phi(\bx)$ under the training distribution $\mu(\bx)$ and the optimized distribution $\Opt(\bx)$, whereas the feature representations under these distributions are optimized to be as invariant to each other as possible. 
We also evaluated a variety of discrepancy measures, including the maximum mean discrepancy (MMD)~\citep{gretton2012kernel}, but found the $\chi^2$-divergence to be the best performing version across the board. More implementation details can be found in Appendix~\ref{appsec: exp}.

\vspace{-0.23cm}
\subsection{Theoretical Analysis of Invariant Representation Learning for MBO}
\label{section:invariant_representations}
\vspace{-0.23cm}
In this section, we will formally show that under certain standard assumptions, solving the bi-level optimization problem (Equation~\ref{eqn:bi_level}) enables us to find a better design than the dataset distribution $\mu_\text{data}$. To show this, we will combine theoretical tools for analyzing offline RL algorithms~\citep{cheng2022adversarially,kumar2020conservative,yu2020mopo} and domain adaptation~\citep{zhao2019learning}. We show that the expected value of the ground truth objective under any given distribution $\pi(\bx)$ of design inputs can be lower bounded in terms of the expected value under the learned function modulo some error terms. Then we will show that these error terms are controlled tightly when solving the bi-level problem, and finally appeal
to uniform concentration to show that the performance of $\Opt$ is better with high probability. Our goal isn't to derive the tightest possible bound for \methodname, but show that \methodname\ can attain reasonable performance guarantees, and this domain adaptation perspective can tackle distributional shift.

\textbf{Notation and assumptions.} Following standard assumptions~\citep{lee2021representation}, we will analyze the setting when the learned representation lies in some space of functions, $\phi \in \Phi$, and the learned function $f_\theta$ lies in $f_\theta \in \mathcal{F}$. We will consider that the optimized distribution $\Opt$ belongs to a function class, $\Opt \in \Pi$. Our analysis will require that the ground truth model $f$ is approximately realizable under some representation $\phi \in \Phi$ and for some $g \in \mathcal{F}$. That is, we assume:
\begin{assumption}[$\varepsilon-$realizability]
\label{assumption:realizability}
For any distribution $\pi$ over designs, there exists a representation $\phi \in \Phi$ and a $g \in \mathcal{F}$ such that $\min_{g} \max_{\pi \in \Pi} \mathbb{E}_{\bx \sim \pi}\left[|f(\bx) - g(\phi(\bx))|\right] \leq \varepsilon_{\mathcal{F}, \Phi}$.   
\end{assumption}
\vspace{-0.1cm}
In addition, we assume that $\forall f \in \mathcal{F}, ||f||_\infty < \infty$ and is Lipschitz continuous with constant $C$: $||f||_\mathrm{L} \leq C$. This is true for most parameteric function classes, such as neural networks. Then, we can lower bound the average objective value under any distribution in terms of the invariance:   
\begin{proposition}[(Informal) Lower-bounding the ground truth value under $\pi$]
\label{prop:representation_error}
Under Assumption~\ref{assumption:realizability} and the Lipschitz continuity on the function $f_{\theta}$, the ground truth objective for any given $\pi \in \Pi$ can be lower bounded in terms of the learned objective model $f_\theta \in \mathcal{F}$ and representation $\phi \in \Phi$, with high probability $\geq 1 - \delta$ that:
\begin{align}
\label{eq:plugging_back_in}
     J(\pi) -& {J}_\theta(\pi) \geq \underbrace{{J}(\mu_\text{data}) - {J}_\theta(\mu_\text{data})}_{(\blacksquare)}-\underbrace{C_\mathcal{F} \cdot \text{disc}_{\cH}(\bbP_{\mu_\text{data}}(\phi(\bx)), \bbP_{\pi}(\phi(\bx)))}_{(*)} - \varepsilon_{\mathcal{F}, \Phi} - \varepsilon_{\text{stat}}, \nonumber
\end{align}
where $C_{\mathcal{F}}$ is a uniform constant depending on the function class $\mathcal{F}$ and $\varepsilon_\text{stat}$ refers to statistical error that decays inversely with the dataset size, $|\mathcal{D}|$.  
\end{proposition}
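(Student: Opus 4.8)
The plan is to turn the left-hand side $J(\pi) - J_\theta(\pi) = \mathbb{E}_{\bx\sim\pi}[f(\bx) - f_\theta(\phi(\bx))]$ into the right-hand side by a short sequence of add-and-subtract steps, paying a controlled price at each one. First I would invoke $\varepsilon$-realizability (Assumption~\ref{assumption:realizability}) to introduce the realizable surrogate $g \in \mathcal{F}$ with $g\circ\phi \approx f$: writing $f - f_\theta\circ\phi = (f - g\circ\phi) + (g - f_\theta)\circ\phi$ and using $\mathbb{E}_\pi[f - g\circ\phi] \geq -\mathbb{E}_\pi[|f - g\circ\phi|] \geq -\varepsilon_{\mathcal{F},\Phi}$, the problem reduces to lower-bounding $\mathbb{E}_\pi[(g - f_\theta)\circ\phi]$, a quantity that now depends on $\bx$ only through the representation $\phi$.

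The central step is to transport this expectation from $\pi$ to $\mu_\text{data}$ and to charge the transport error to the discrepancy. I would decompose $\mathbb{E}_\pi[(g-f_\theta)\circ\phi] = \mathbb{E}_{\mu_\text{data}}[(g-f_\theta)\circ\phi] + (\mathbb{E}_\pi - \mathbb{E}_{\mu_\text{data}})[(g-f_\theta)\circ\phi]$. Because $g, f_\theta \in \mathcal{F}$ are bounded and $C$-Lipschitz, the difference $h := g - f_\theta$ lies (after normalizing by a class-dependent constant $C_\mathcal{F}$) inside the hypothesis class $\cH$ defining the discrepancy; by the integral-probability-metric characterization of $\text{disc}_\cH$, and since $\mathbb{E}_\pi[h\circ\phi] = \mathbb{E}_{\bbP_\pi(\phi)}[h]$ is an expectation under the pushforward of $\phi$, this yields $(\mathbb{E}_\pi - \mathbb{E}_{\mu_\text{data}})[h\circ\phi] \geq -C_\mathcal{F}\,\text{disc}_\cH(\bbP_{\mu_\text{data}}(\phi(\bx)), \bbP_\pi(\phi(\bx)))$, which is exactly term $(*)$. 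For the remaining on-distribution piece $\mathbb{E}_{\mu_\text{data}}[(g-f_\theta)\circ\phi]$ I would run the realizability substitution in reverse, $g\circ\phi = f - (f - g\circ\phi)$, to get $\mathbb{E}_{\mu_\text{data}}[(g-f_\theta)\circ\phi] \geq (J(\mu_\text{data}) - J_\theta(\mu_\text{data})) - \varepsilon_{\mathcal{F},\Phi}$, surfacing the benchmark term $(\blacksquare)$. Collecting the three contributions gives the claimed inequality, with the two realizability penalties folded into a single $\varepsilon_{\mathcal{F},\Phi}$.

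All of the above is stated for population expectations and the population discrepancy, whereas in practice the model is fit and the discrepancy is measured on the $n$ training samples. The final step is therefore a uniform-concentration argument: I would pass from the empirical distribution $\widehat{\mu}_\text{data}$ to $\mu_\text{data}$ (and from the empirical to the population discrepancy) simultaneously over all $f_\theta \in \mathcal{F}$, $\phi \in \Phi$, and all test functions in $\cH$. Using the boundedness $\|f\|_\infty < \infty$ together with a complexity measure of the classes (Rademacher complexity or covering numbers), a standard symmetrization plus union-bound argument produces a residual $\varepsilon_\text{stat}$ that decays with the dataset size $|\mathcal{D}|$ and holds with probability at least $1-\delta$, completing the proof.

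I expect the main obstacle to be this last concentration step rather than the preceding algebra: one must ensure the discrepancy class $\cH$ is rich enough that the normalized difference $(g-f_\theta)/C_\mathcal{F}$ genuinely lies in $\cH$ (so the lower bound on the transport error is valid), while still being simple enough to admit uniform convergence of the empirical discrepancy, and these two requirements pull in opposite directions. A secondary subtlety is that realizability is posited for a single representation $\phi$ uniformly over $\pi \in \Pi$, so I would need $\mu_\text{data}$ to be covered by the same bound (or by an analogous source-domain realizability statement) for the $(\blacksquare)$ term to come out as cleanly as written.
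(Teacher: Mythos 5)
Your overall skeleton (add-and-subtract, transport charged to the IPM, concentration at the end) matches the paper's in spirit, but there is a genuine gap at your very first step: you invoke realizability to produce a surrogate $g$ satisfying $g\circ\phi\approx f$ for the \emph{learned} representation $\phi$. Assumption~\ref{assumption:realizability} does not give you this; it only asserts the existence of \emph{some} pair $(g,\phi^*)$ with $g\circ\phi^*\approx f$, and there is no reason the representation returned by the inner optimization of Equation~\ref{eqn:bi_level} coincides with (or is even close to) that $\phi^*$. Your decomposition $f-f_\theta\circ\phi=(f-g\circ\phi)+(g-f_\theta)\circ\phi$ therefore cannot be launched, and the same issue invalidates the reverse substitution you use to surface $(\blacksquare)$. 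This is not cosmetic: the discrepancy term $(*)$ in the statement is measured under the learned $\phi$, so any appeal to realizability necessarily produces a discrepancy under $\phi^*$ that must then be related back to the one under $\phi$, and that translation is where the real work lies.

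The paper's proof is organized precisely to confront this. It first peels off $(\blacksquare)$ directly (no realizability needed there), then splits the residual $\Delta(\pi,\mu_\text{data})$ into (i) the transport of the learned $f_\theta\circ\phi$ from $\pi$ to $\mu_\text{data}$, which yields the discrepancy under the learned $\phi$ via the IPM property and Lipschitzness, and (ii) the transport of the true $f$, which is where realizability enters and yields $\text{disc}_\cH(\bbP_\pi(\phi^*(\bx)),\bbP_{\mu_\text{data}}(\phi^*(\bx)))$. Converting the latter into a discrepancy under the learned $\phi$ consumes most of the proof: it requires the additional continuity condition (Assumption~\ref{assumption:continuity}) on how the minimizer of the $\lambda$-regularized objective varies with $\lambda$, a uniform concentration step (Lemma~\ref{lemma:uniform_conc}) to pass from the empirical to the population objective, and a final minimization over $\lambda$. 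This is also why the paper's formal bound carries $2\varepsilon'_{\mathcal{F},\Phi}$ with an extra bias $h(B_0,C_\mathcal{H})$ rather than the single $\varepsilon_{\mathcal{F},\Phi}$ your argument would produce. Your closing concern about whether $(g-f_\theta)/C_\mathcal{F}$ genuinely lies in $\cH$ is legitimate (the paper handles it only implicitly through the Lipschitz bound on $\mathcal{F}$), but it is secondary to the $\phi$-versus-$\phi^*$ mismatch, which is the idea your proof is missing.
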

A proof for Proposition~\ref{prop:representation_error} is in Appendix~\ref{sec:proof}.
The $\blacksquare$ term can further be lower bounded using a standard concentration error bound for ERM, and can be controlled if we train ${f}_\theta(\phi(\cdot))$ minimize prediction error against ground truth values of $f$. Now we utilize Proposition~\ref{prop:representation_error} and a uniform concentration argument to lower-bound the value under the optimized distribution $\Opt$.  

\begin{proposition}[(Informal) Performance guarantee for \methodname] 
\label{proposition:improvement}
Under Assumption~\ref{assumption:realizability}, the expected value of the ground truth objective under $\Opt$, $J(\Opt)$ is lower bounded by:
\begin{align}
    J(\Opt) \gtrsim J(\mu_\text{data})  - \mathcal{O}\left(\sqrt{\frac{\log \frac{|\mathcal{F}| |{\Phi}| |{\Pi}|}{\delta}}{|\mathcal{D}|}} + \frac{\log \frac{|\mathcal{F}| |{\Phi}| |{\Pi}|}{\delta}}{|\mathcal{D}|} \right) + \underbrace{{J}_\theta(\Opt) - {J}_\theta(\mu_\text{data})}_{(\circ)} - (*). 
\end{align}
\end{proposition}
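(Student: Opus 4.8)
The plan is to obtain Proposition~\ref{proposition:improvement} as a near-immediate consequence of Proposition~\ref{prop:representation_error}, with essentially all of the genuine work going into upgrading the per-distribution high-probability bound into one that holds for the data-dependent choice $\Opt$. First I would instantiate Proposition~\ref{prop:representation_error} at $\pi = \Opt$ and rearrange to isolate $J(\Opt)$, using that $\blacksquare = J(\mu_\text{data}) - J_\theta(\mu_\text{data})$ and $(*) = C_\mathcal{F}\,\text{disc}_{\cH}\big(\bbP_{\mu_\text{data}}(\phi(\bx)),\bbP_{\Opt}(\phi(\bx))\big)$:
\[
J(\Opt) \;\geq\; J(\mu_\text{data}) + \underbrace{\big(J_\theta(\Opt) - J_\theta(\mu_\text{data})\big)}_{(\circ)} - (*) - \varepsilon_{\mathcal{F},\Phi} - \varepsilon_\text{stat}.
\]
This already matches the target inequality term-for-term, so the only remaining task is to show that $\varepsilon_{\mathcal{F},\Phi} + \varepsilon_\text{stat}$ is dominated by the stated $\mathcal{O}\big(\sqrt{\log(|\mathcal{F}||\Phi||\Pi|/\delta)/|\mathcal{D}|} + \log(|\mathcal{F}||\Phi||\Pi|/\delta)/|\mathcal{D}|\big)$ expression, treating the realizability error $\varepsilon_{\mathcal{F},\Phi}$ as a lower-order constant absorbed into $\gtrsim$.

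The crux, and the step I expect to be the main obstacle, is that $\Opt$ is not fixed a priori but is itself produced by the optimizer run on $\mathcal{D}$ (Equation~\ref{eqn:gradient_ascemt}), together with the learned $\phi \in \Phi$ and $f_\theta \in \mathcal{F}$ that are also fit to the same data. Consequently the concentration bound underlying $\varepsilon_\text{stat}$ in Proposition~\ref{prop:representation_error}, which holds for a single fixed $\pi$, cannot be applied naively: both the empirical-to-population gap of $J(\mu_\text{data}) - J_\theta(\mu_\text{data})$ and the empirical estimate of the discrepancy $\text{disc}_{\cH}$ must hold simultaneously across the whole triple $(\phi, f_\theta, \pi) \in \Phi \times \mathcal{F} \times \Pi$. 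I would therefore replace the pointwise statement by a uniform-convergence argument, taking a union bound over these classes, which amounts to substituting $\delta \mapsto \delta/(|\mathcal{F}||\Phi||\Pi|)$ and hence turning each $\log(1/\delta)$ into $\log(|\mathcal{F}||\Phi||\Pi|/\delta)$.

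Finally, to recover the precise two-term shape of the bound, I would use the boundedness and Lipschitz assumptions on $\mathcal{F}$ ($\|f\|_\infty < \infty$, $\|f\|_\mathrm{L} \leq C$) together with a Bernstein/Freedman-type concentration inequality: the variance term contributes the leading $\sqrt{\log(|\mathcal{F}||\Phi||\Pi|/\delta)/|\mathcal{D}|}$, while the uniform range bound contributes the lower-order $\log(|\mathcal{F}||\Phi||\Pi|/\delta)/|\mathcal{D}|$, which is exactly the claimed $\mathcal{O}(\cdot)$. The realizability slack $\varepsilon_{\mathcal{F},\Phi}$ is then folded into the informal $\gtrsim$. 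The delicate point to get right is ensuring the union bound is taken over \emph{all three} learned objects and that the adaptive selection of $\Opt$ is legitimately covered by the uniform event, rather than implicitly assuming $\Opt$ is independent of $\mathcal{D}$; this is precisely where the $\log|\Pi|$ factor (and the dependence on the capacities of $\Phi$ and $\mathcal{F}$) becomes unavoidable.
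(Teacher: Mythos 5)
Your proposal is correct and matches the paper's own proof: both instantiate Proposition~\ref{prop:representation_error} at $\pi = \Opt$, handle the data-dependence of $\Opt$ by a uniform concentration (union bound) over $\mathcal{F} \times \Phi \times \Pi$ so that $\log(1/\delta)$ becomes $\log(|\mathcal{F}||\Phi||\Pi|/\delta)$, and then rearrange, absorbing the realizability slack into the informal $\gtrsim$. Your Bernstein-style explanation of the two-term shape is a slightly more explicit account of what the paper dismisses as ``the squared term is higher order,'' but it is the same argument.
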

\vspace{-0.1cm}

A proof of Proposition~\ref{proposition:improvement} can be found in Appendix~\ref{sec:proof}. This proposition implies that as long as the optimizer finds designs that improve the learned objective function $f_\theta \in \mathcal{F}$, such that $(\circ)$ is positive, and the discrepancy term $(\star)$ is minimized, then the distribution over designs found by \methodname, $\Opt$, will compete favorably against the distributions of designs in the dataset. Of course, these bounds present the worst-case guarantees for \methodname, but as we  will show in our experiments, \methodname\ does clearly improve over the best designs in the offline dataset.  
In addition, this theoretical analysis also motivates the design of our scheme for performing hyperparameter tuning and model selection. This scheme will be based directly on the result from Proposition~\ref{prop:representation_error}. We will discuss this connection when discussing tuning in Section~\ref{sec:cross_validation}.

Finally, we would like to highlight how invariance leads to ``average'' values if $\Opt$ deviates too far away from the data distribution $\mu$. If $\Opt$ deviates too far, such that $\text{disc}_{\cH}(\bbP_{\mu_\text{data}}(\phi), \bbP_{\Opt}(\phi)) \leq \varepsilon'$, the learned function is bounded too close to the average learned value on the data i.e., $|\widehat{J}(\Opt) - \widehat{J}(\mu_\text{data})| \leq C' \cdot \varepsilon'$. Thus, the learned function simply reverts to ``predicting the mean objective value'' of the training distribution as $\varepsilon'$ decreases,
thereby losing any information about the identity of the optimized distribution, when the optimizer goes too far out of $\mu$.

\vspace{-0.2cm}
\section{Offline Workflow and Tuning for \methodname}
\label{sec:cross_validation}
\vspace{-0.2cm}

A central challenge in offline model-based optimization is that of offline workflow and hyperparameter tuning: not only must an algorithm be able to produce designs using a static dataset, but it must also prescribe a \emph{fully offline} workflow for tuning hyperparameters so that the produced designs are as good as possible. This includes both explicit hyperparameters (such as the coefficient of conservatism or invariance, $\lambda$ for \methodname) and implicit hyperparameters (such as the early stopping point). The primary challenge stems from the fact that unlike supervised learning, the final performance in MBO is determined by the optimized distribution $\Opt$, which is different from the training distribution, $\mu_\text{data}$, and we must reason about this performance fully offline.

In this section, we discuss how \methodname\ admits a particularly convenient and \emph{fully offline} hyperparameter tuning procedure that works well empirically. The goal of our tuning problem is to find a tuple consisting of the learned representation $\phi^\lambda$, the learned model, $f^\lambda_\theta$, and the optimized distribution $\Opt^\lambda$, out of a given set of tuples obtained for different values of $\lambda$. In other words, we wish to find the $\lambda$ from the set of models $M_{\lambda}:=({f}^{\lambda}_\theta, {\phi}^{\lambda}, \mu^{\lambda}_{\Opt})$, such that the corresponding $\Opt^\lambda$ attains the highest value under the ground truth objective. Crucially note that this must be done fully offline.

\textbf{Our key idea:} The primary question we wish to answer is: how can we identify if a given tuple $(f^\lambda_\theta, \phi^\lambda, \Opt^\lambda)$ leads to good performance? In order for a tuple trained via \methodname\ to perform well, we need two primary conditions: \textbf{(1)} it should be nearly invariant, and as a result, robust to distributional shift, and \textbf{(2)} the learned $f^\lambda_\theta$ must attain good generalization performance within the training distribution. Intuitively \textbf{(1)} guards us against distributional shift, and \textbf{(2)} enables us to find as good of a learned model on the dataset, that does not overfit or underfit as a result of the invariance regularizer. This forms the basis of our tuning method.

\textbf{Offline tuning of \methodname:} To instantiate the idea practically, we first run \methodname\ with a set of hyperparameters $\Lambda = \{\lambda_1, \lambda_2, \cdots\}$. Then for each run, we record the validation in-distribution error and the value of the invariance regularizer on a validation set. We then perform tuning in two steps: first, we filter all $\lambda$ values that attain near-perfect invariance under a user-specified margin $\varepsilon$, i.e., $\text{disc}_{\cH}(\bbP_{\mu_\text{data}}({\phi}^{\lambda}(\bx)), \bbP_{{\mu^{\omega}_{\Optold}}}({\phi}^{\lambda}({\bx}))) \leq \epsilon$, and are hence guaranteed to be robust to distributional shift. Second, we now pick models that attain good performance within the training distribution by selecting $\lambda$ values that attain the smallest validation prediction error: $(f_\theta(\phi(\bx)) - y)^2$, in addition to picking the early stopping point based on the smallest validation error. This enables us to pick $\lambda$ that gives rise to a generalizing model $f_\theta^\lambda, \phi^\lambda$, while being most robust to distributional shift. This still leaves a user-defined margin $\varepsilon$ as a free parameter, but we find that it is comparatively insensitive and can be selected heuristically, as we show in our experiments in Section~\ref{sec:ablations}.

Formally, this procedure corresponds to solving the following optimization:
\begin{align}
\label{eqn:tuning_strategy}
    \lambda^* : =  \arg\min_{\lambda\in\Lambda}~~ \frac{1}{n}\sum_{i=1}^{n_{\text{val}}}({f}_\theta^{\lambda}({\phi}^{\lambda}(\bx_i)) - y_i)^2 \hspace{0.2cm}
    \text{s.t.} \hspace{0.2cm} \text{disc}_{\cH}(\bbP_{\mu_{\text{val}}}({\phi}^{\lambda}(\bx)), \bbP_{{\mu^{\lambda}_{\Optold}}}({\phi}^{\lambda}({\bx}))) \leq \epsilon
\end{align}
Finally, for a more formal explanation of the soundness of this strategy, we note that applying Equation~\ref{eqn:tuning_strategy} provides a tight lower bound on the ground truth objective value in Proposition~\ref{prop:representation_error}. Up to statistical error (which decays as $|\mathcal{D}|$ increases), the prediction error on a held-out validation set estimates $(\blacksquare)$ in Proposition~\ref{prop:representation_error}, and a smaller discrepancy (i.e., $\mathrm{disc}_\mathcal{H} \leq \varepsilon$) controls the $(\star)$ term in Prop.~\ref{prop:representation_error}. $\varepsilon_\text{stat}$ is statistical error, and $\varepsilon_{\mathcal{F}, \Phi}$ only depends on the function classes involved ($\mathcal{F}, \Phi$). This tuning $\lambda$ per Equation~\ref{eqn:tuning_strategy} can be justified as aiming to maximize a lower bound on $J(\Opt)$.  

\textbf{Practical implementation.} When implementing this procedure in practice, we must first select the value of $\varepsilon$. For obtaining perfect invariance, the value of $\varepsilon$ must be equal to 0.25 (when the discriminator is trained using a $\chi^2$-divergence least-squares GAN objective), and therefore, $\varepsilon$ must be set close to 0.25. However, in practice, instead of selecting $\varepsilon$, we can select a fraction of all possible $\lambda$ values that pass the filter. In our experiments, we sort the runs based on the absolute difference between the value of invariance regularizer (i.e., the discriminator training objective) and 0.5, and let the top 15\% (top 3 out of the 7 $\lambda$ values we tune on) of the runs be selected for the second step.     

In Section~\ref{sec:exp}, we validate this offline tuning strategy, both against other alternative offline strategies for tuning \methodname\ as well as offline strategies for tuning prior methods (COMs~\citep{trabucco2021conservative} and gradient descent~\citep{trabucco2021designbench}). We find that our offline tuning strategy performs comparably to full online tuning without needing any online queries, whereas other strategies for \methodname\ and other methods, lead to significantly poor performance compared to online tuning.

\vspace{-0.2cm}
\section{Related Work}
\vspace{-0.2cm}
Model-based optimization (MBO) or black-box optimization~\citep{brookes19a,Angermueller2020Model-based,snoek15scalable,snoek2012practical,mirhoseini2020chip,yu2021roma} has been studied widely in prior works. Among these works, the central challenge is in handling the distributional shift that arises between the distribution of designs seen in the dataset and those taken by the optimizer~\citep{brookes19a, fannjiang2020autofocused, kumar2019model}. For methods that learn a model, this typically manifests itself as \emph{model exploitation}, where the optimizer can find out-of-distribution designs for which the model erroneously predicts good values, analogously to adversarial examples. This distributional shift is distinct from what is studied in prior works that tackle misspecification in GPs and bandits~\citep{neiswanger2021uncertainty,nguyen2021value,kirschner2021bias} or methods that tackle robustness across contexts~\citep{kirschner2020distributionally} in the online setting. 

To address distribution shift, several recent works have proposed to be conservative. This is majorly the case in other decision-making problems such as offline reinforcement learning (RL)~\citep{levine2020offline} and contextual bandits~\citep{nguyen2021offline}, where instead of optimizing for a design, we must find the optimal action at each state. In offline MBO, prior works have tried to be conservative in two different ways: either by constraining the optimized designs to lie close to the training designs~\citep{brookes19a,fannjiang2020autofocused,kumar2019model,peng2019advantage,kumar2019stabilizing} using generative models~\citep{kingma2013autoencoding,goodfellow2014generative}, or by learning a conservative ~\citep{trabucco2021conservative,kumar2021data,kumar2020conservative} or calibrated~\citep{Fu21nemo,yu2021roma} learned model. While the latter class of methods has worked well in practice~\citep{trabucco2021designbench,kumar2021data}, these methods can sometimes be overly conservative and are quite sensitive to hyperparameters. Sensitivity to hyperparameters is acceptable as long as one can attain good performance using reasonable offline tuning strategies, without accessing the groundtruth function, but we find in our experiments (Section~\ref{sec:ablations}) this is not easy for conservative methods. On the contrary, \methodname\ not only does not require a generative model over the design space, which can be intractable with high-dimensions (e.g., 5000-d designs in HopperController), but also attains better performance and provides a convenient tuning rule.

In this work we formulate data-driven model-based optimization as domain adaptation~\citep{ben2010theory,ganin2015unsupervised}, and derive algorithms for model-based optimization from this perspective. This perspective has been used to devise decision-making methods for treatment effect estimation~\citep{johansson2016learning, johansson2018learning, yao2018representation} and off-policy evaluation in RL~\citep{liu2018representation, lee2021representation}. While these prior works do use an invariance regularizer, they primarily focus on obtaining more accurate estimations, whereas our goal is to utilize invariance to combat distributional shift during optimization. Thus, our usage of invariant representations is tailored towards the optimizer, instead of aiming for accurate predictions everywhere like these prior works, which is impossible in general. Furthermore, our analysis uses invariant representations to derive procedures for tuning and early stopping, while these prior works do not explore these perspectives.

\vspace{-0.3cm}
\section{Experimental Evaluation}
\label{sec:exp}
\vspace{-0.3cm}
The goal of our empirical evaluation is to answer the following questions: \textbf{(1)} Does \methodname\ outperform prior state-of-the-art methods for data-driven model-based optimization? \textbf{(2)} Is our offline tuning strategy for \methodname\ better than other offline tuning strategies for \methodname\ or other prior methods? \textbf{(3)} How sensitive is \methodname\ to the choice of the hyperparameter $\lambda$? In this section, we will answer these questions via a comparative evaluation of \methodname\ on several continuous data-driven design tasks from the standard tasks from the Design-Bench suite~\citep{trabucco2021designbench}. 

\vspace{-0.3cm}
\subsection{Empirical Evaluation on Benchmark Tasks}
\label{sec:empirical_evals}
\vspace{-0.3cm}
We compare \methodname\ to a variety of prior methods for model-based optimization: CbAS~\citep{brookes19a}, Autofocused CbAS~\citep{fannjiang2020autofocused} and MINs~\citep{kumar2019model} that utilize generative models for constraining distributional shift; COMs~\citep{trabucco2021conservative} and RoMA~\citep{yu2021roma} train robust models  of the objective function by via conservatism and smoothness, respectively; gradient-free optimization methods such as REINFORCE~\citep{Williams92}, CMA-ES~\citep{Hansen06} and conventional Bayesian optimization BO-qEI~\citep{reparameterization2017}. We also compare the standard approach of learning a na\"ive model of the objective function, which is then optimized via gradient ascent. To better understand the benefits of enforcing invariance, we also compare \methodname\ to \methodname-C, which additionally applies a conservatism term on top of invariance that pushes up the learned values of designs in the dataset. Additionally, we add a prior method that optimizes designs against a lower-confidence bound estimate computed using a Gaussian process posterior in Appendix~\ref{subsec:app_result}.

\textbf{Tasks.} We evaluate on four tasks with continuous-valued input space from the Design-Bench~\citep{trabucco2021designbench} benchmark for offline model-based optimization (MBO). Here we briefly summarize these tasks:  \textbf{(1)} Superconductor: this task requires optimizing the properties of superconducting materials to maximize the operating temperature; the dataset consists of 17010 training points, and each input $\bx\in\bbR^{86}$ represents a serialization of the chemical formula. \textbf{(2)} D'Kitty and \textbf{(3)} Ant robot morphology design: these tasks require selecting the optimal morphology of a quadrupedal robot, in order to maximize how quickly it can walk or crawl in a particular direction; the morphologies are represented by $\bx\in\bbR^{56}$ and $\bx\in\bbR^{60}$, respectively. \textbf{(4)} Hopper controller: this task reframes the classic hopper task~\citep{gym} as a bandit problem, where instead of sequentially controlling the hopper, the goal is to directly output the parameters of a neural network controller that maximizes hopping speed; note that this task, as set up in Design-Bench, is \emph{not} a RL problem, but rather a bandit problem with a 5126-dimensional continuous action space corresponding to neural network weights.

\textbf{Evaluation protocol.} Following prior work~\citep{trabucco2021designbench, trabucco2021conservative, yu2021roma,Fu21nemo}, for each method, we query the learned model to obtain the top performing $N$ points, i.e., $\bx_1, \bx_2, \cdots, \bx_N$. We report the maximum objective value among these $N$ samples. This protocol reasonably reflects a real-world design process, where a number of computationally produced designs are tested and the best one is used for deployment.

\textbf{Hyperparameter tuning.} For fair comparison against prior methods studied in \citet{trabucco2021designbench}, we follow the uniform tuning strategy for reporting results in this section: we run \methodname\ on each task for a given set of $\lambda$ values, and pick a single value of $\lambda$ across every task. Note that this does require access to online evaluation, but prior works do use the same protocol. We will evaluate the efficacy of our offline tuning approach in Section~\ref{sec:ablations} extensively.  

\begin{figure}[htp]
\vspace{-0.35cm}
    \centering
    \includegraphics[height=3.2cm]{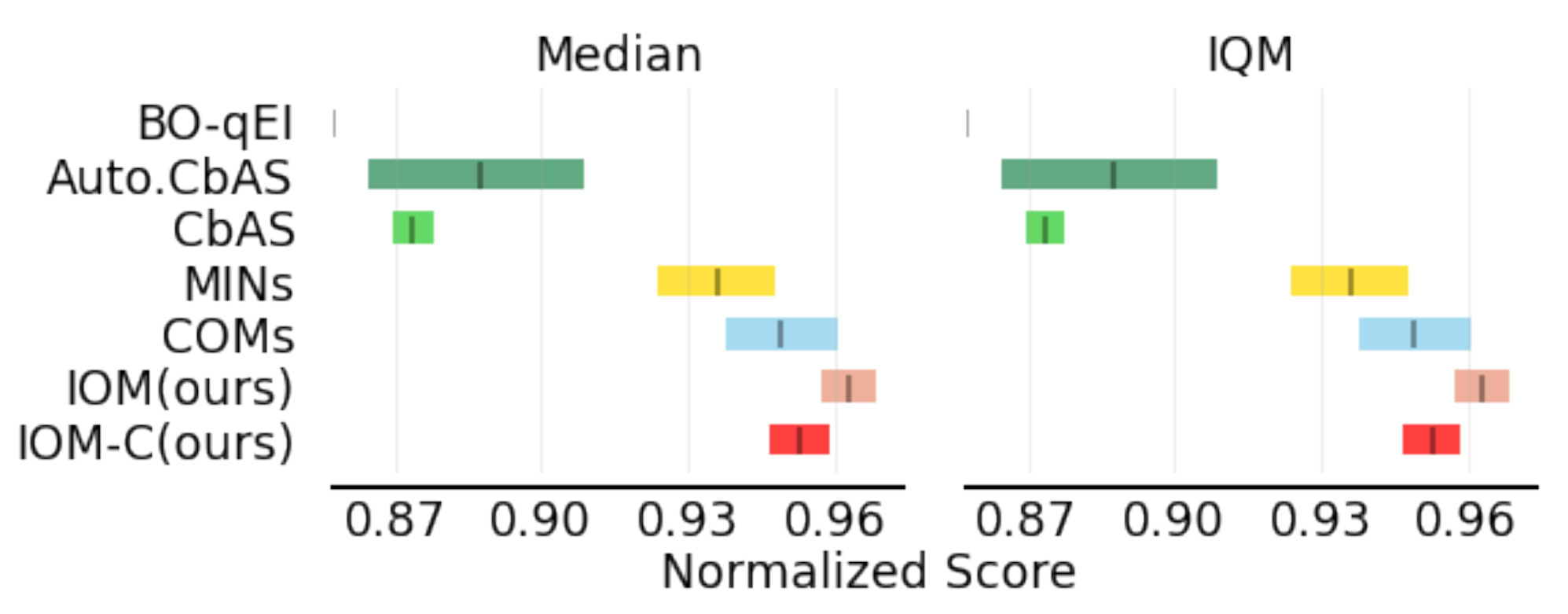}
    ~\vline~
    \includegraphics[height=3.5cm]{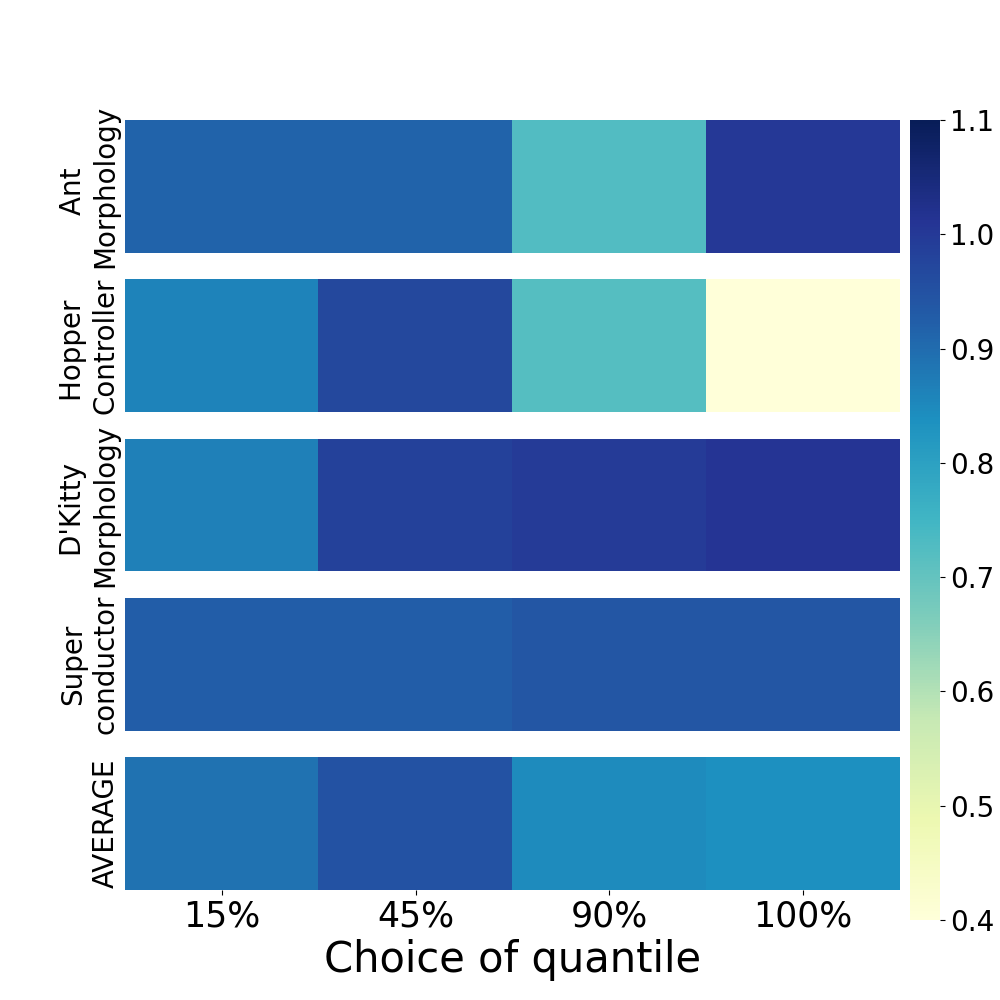}
    \vspace{-0.2cm}

    \caption{\footnotesize \textbf{Left}: Median and IQM~\citep{agarwal2021deep} (with 95\% Stratified Bootstrap CIs) for the aggregated normalized score on Design-bench. \methodname\ improves over the prior methods, including those which use explicit conservatism although \methodname\ does not. As we could not find the individual runs for RoMA~\citep{yu2021roma}, we report the mean and standard deviation for RoMA (copied from the results in~\citet{yu2021roma}), along with more detailed results for other baselines in Appendix~\ref{subsec:app_result}. \textbf{Right}: The offline tuning performance matrix under different quantiles to filter the models based on discriminator loss, compared with the oracle uniform tuning, which shows our tuning strategy is robust to this hyper-parameter.}
    \label{fig:perf}
    \vspace{-0.35cm}
\end{figure}
\textbf{Results.} Following the convention proposed by \citet{trabucco2021designbench}, we evaluate all methods in terms of normalized score (higher is better). In Figure~\ref{fig:perf} (left), we show the median and interquantile mean (IQM)~\citep{agarwal2021deep} for the aggregated scores across all tasks. Due to space limits, we only report the scores for the top-performing baselines (See Appendix~\ref{appsec: exp} for the results for all baselines). \methodname\ significantly outperforms all of the prior methods, including state-of-the-art methods that employ conservatism, such as COMs~\citep{trabucco2021conservative}. This indicates that invariance alone (\methodname) can effectively handle distribution shifts and avoid over-estimation for out-of-distribution actions. Additionally, a comparison between the performance of \methodname\ and \methodname\ with conservatism (\methodname-C) suggests that adding conservatism does not improve performance further. This further corroborates the fact that invariance is effective in preventing the optimizer from going out of distribution and conservatism is not needed. That said, we would like to clarify that this result does not mean that invariance will always be better than conservatism on any given offline MBO problem. In order to understand why \methodname\ outperforms COMs on these tasks, we examine the in-distribution validation error of \methodname\ and COMs in Appendix~\ref{app: additional} and find that \methodname\ learns less distorted functions that generalize better.

\vspace{-0.27cm}
\subsection{Analysis of the Offline Workflow for Tuning \methodname} 
\label{sec:ablations}
\vspace{-0.25cm}
We will now present experiments to understand the effectiveness of our offline tuning scheme (Section~\ref{sec:cross_validation}) for tuning \methodname. 

It is important to highlight that, in general, offline tuning presents significant challenges for MBO. Prior works in bandits and reinforcement learning~\citep{zanette2021exponential} have suggested that offline tuning should not work at all and, to the best of our knowledge, no prior work in offline MBO has proposed a principled and effective offline tuning method. In this section, we empirically validate our tuning strategy in comparison to other potential strategies, and compare results from our  approach to an oracle strategy that uses online evaluation of the true function.

\textbf{Comparisons.} We compare our offline tuning strategy to various offline strategies: (1). only tunes \methodname\ based on validation-set prediction error, analogous to supervised learning (called ``prediction error only''); (2). a strategy that tunes \methodname\ based on only the value of the invariance regularizer (called ``invariance only''); (3). a method that utilizes our tuning strategy for picking $\lambda$, but not for picking the checkpoint (``uniform checkpoint selection'') and two offline tuning strategies for tuning other prior methods: (a) tuning COMs~\citep{trabucco2021conservative} using validation-set prediction error (called ``prediction error only''), conservatism loss (called ''conservatism loss only'') and both prediction error and conservatism loss, one by one (called ``two-step tuning''); and (b) tuning an objective model trained via gradient ascent~\citep{trabucco2021designbench} using validation MSE error (``gradient ascent''). We report the drop in performance relative to the corresponding oracle online strategy that actually computes the ``groundtruth'' values of optimized designs in the simulator to choose hyperparameters in hindsight, i.e., the relative drop in performance of the method (\methodname, COM, grad. ascent) when tuned offline using the particular strategy compared to oracle tuning for the same method.

\textbf{Results.} We present our results in Figure~\ref{fig:ablation} (right) comparing the efficacy of various tuning schemes across various methods averaged over all tasks. While offline tuning does lead to a reduction in performance compared to oracle online tuning, we find that the drop is smallest for our offline tuning strategy applied to \methodname. All other tuning strategies based on either only prediction error or the invariance regularizer lead to worse performance. This indicates that our tuning strategy is effective in tuning \methodname. Furthermore, offline tuning strategies that utilize a validation set with prior methods, such as COMs and gradient ascent, also lead to a significant decrease in performance relative to the oracle. This is quite appealing as it suggests that \methodname\ is more amenable to offline tuning strategies relative to prior offline MBO algorithms. Even if these prior methods can perform well when provided access to online evaluations, they might start to fail in real-world MBO problems when one needs to tune them fully offline. We discuss computational complexity of tuning the methods in Appendix~\ref{app: additional}.

\textbf{Ablation analyses.} Next we aim to study the sensitivity of our offline tuning strategy with respect to the non-rigorous, user-specified parameter $\varepsilon$ in Equation~\ref{eqn:tuning_strategy} and checkpoint selection based on the validation prediction error. Figure~\ref{fig:perf} (right) shows the offline tuning performance heatmap for all tasks (along their aggregated performance in the bottom row) under different choice of quantiles (where quantile $x\%$ means the top $x\%$ of the models according to the value of the invariance regularizer, and the given value of $\epsilon$ were chosen). The color in each position of the matrix corresponds to the ratio of performance of our offline tuning strategy for a given quantile value, compared to that of uniform oracle tuning. As shown in the bottom row, aggregated over all tasks, we can see our offline selection strategy is quite robust to the hyperparameter $\epsilon$. Figure~\ref{fig:ablation} (left) evaluates the efficacy of our checkpoint selection strategy, we plot the objective value $J(\Opt)$ attained as a function of the number of training epochs (i.e., checkpoints), alongside the values for the prediction MSE in the validation set for the corresponding epochs. The checkpoints selected by our early stopping scheme are shown via the vertical line. Observe that indeed the checkpoint for the lowest validation MSE corresponds to a rather good optimized distribution $\Opt$ for different tasks, no matter the optimal training epochs happen at the beginning or near the end, which validates our early stopping strategy. More detailed results showing the efficacy of early stopping for both \methodname\ can be found in Appendix~\ref{subsec:app_result}.

\begin{figure}[htp]
\vspace{-0.3cm}
    \centering

    \includegraphics[width=0.57\linewidth]{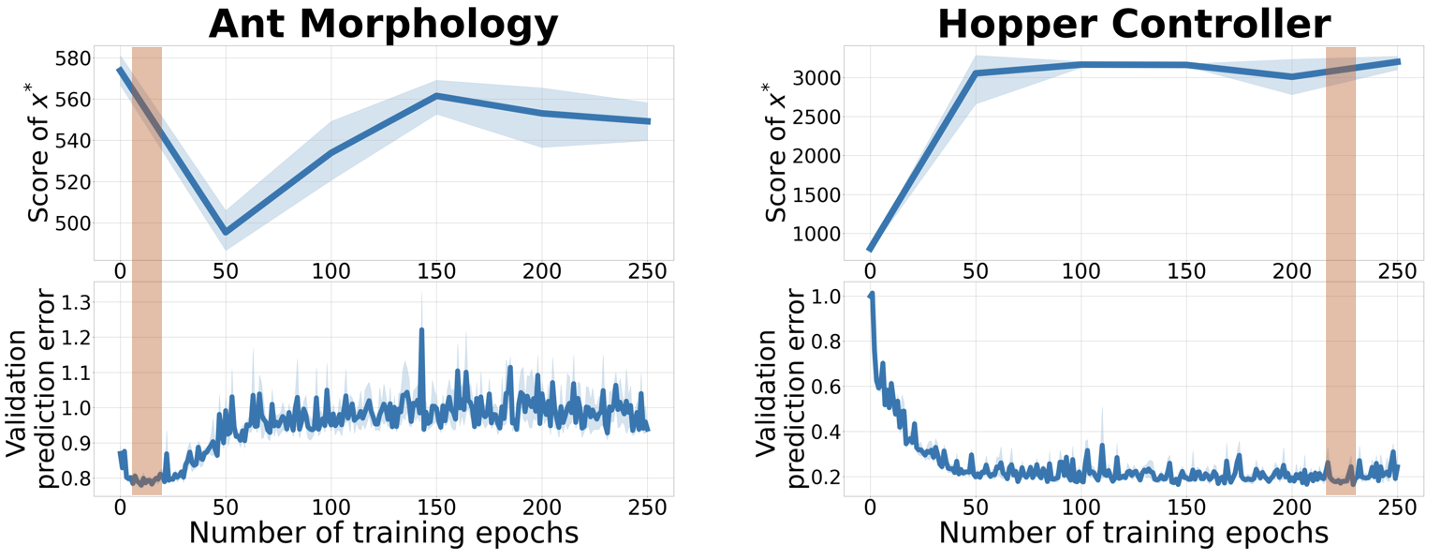}
    ~\vline~
    \includegraphics[width=0.37\linewidth]{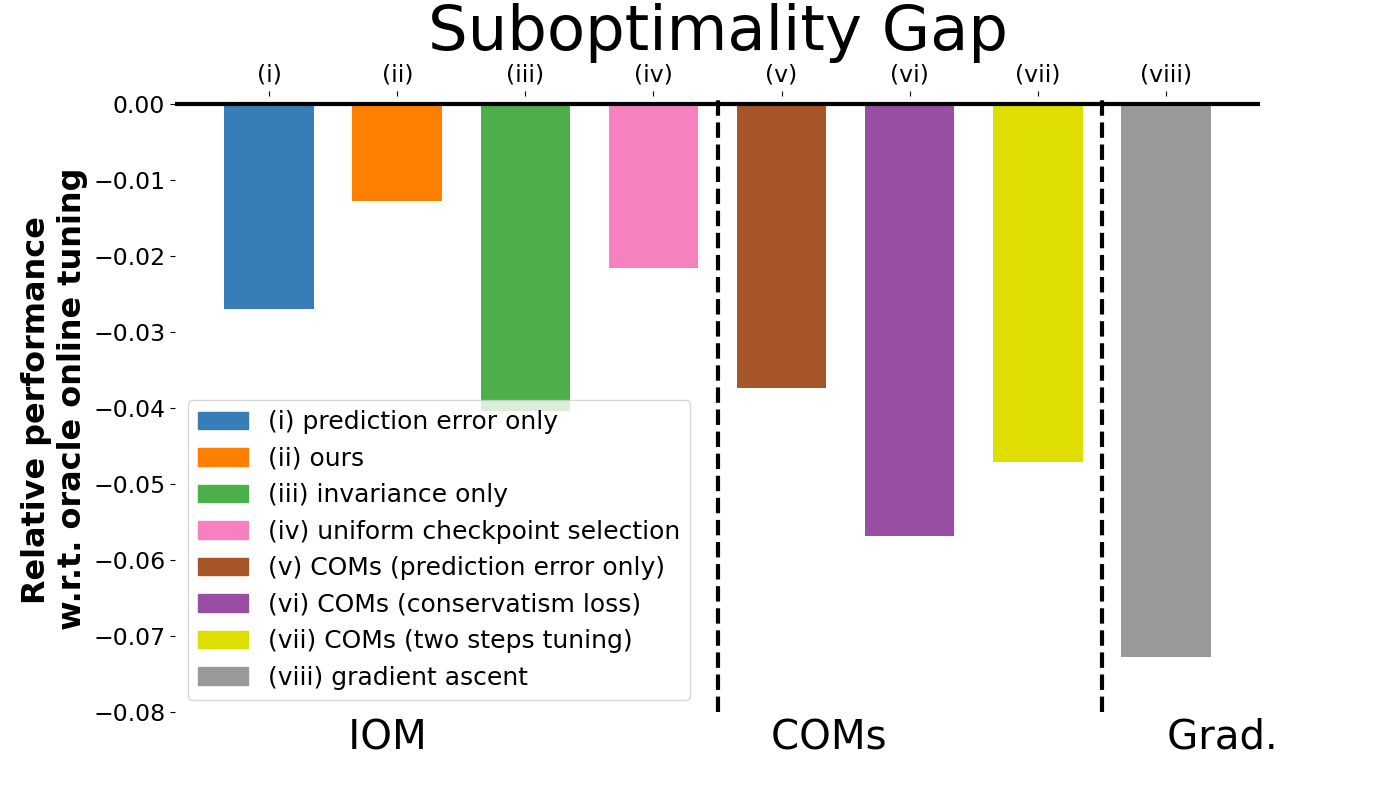}
    \vspace{-0.2cm}
     \caption{\footnotesize \textbf{Left}: Visualizing our checkpoint selection scheme on two tasks, and no matter the best checkpoint happens at the beginning or end of the training, our strategy based on validation in-distribution MSE could capture the best performance. \textbf{Right}: Offline tuning vs oracle uniform online tuning for \methodname, COMs and Gradient-ascent. Among all the methods, \methodname\ achieves the smallest regret for offline tuning, and our specific tuning strategy works best compared with other tuning strategies for \methodname.}
    \label{fig:ablation}
\end{figure}

\vspace{-0.4cm}
\section{Discussion}
\label{sec:discussion}
\vspace{-0.3cm}
In this work we study offline data-driven model based optimization (MBO), where the goal is to produce optimized designs, purely using a static dataset. We proposed to view offline MBO through the lens of domain adaptation, and proposed a method that trains a model of the objective with an additional regularizer to promote invariance between the representations learned by the model in expectation under the training data distribution and the distribution of optimized designs. We show that this leads to a practically effective data-driven optimization method, and an appealing workflow for selecting the hyperparameters with offline data. Our method outperforms prior approaches, which illustrates the practical benefits of invariance. We also validate the effectiveness of the accompanying tuning strategy. While we evaluate our method on MBO problems, our approach can in principle also be extended to contextual bandits and even to, sequential reinforcement learning. But we admit that there might be some limitations: \textbf{(1)} fully data-driven offline optimization may on its own be insufficient (due to limitations of staying close to the dataset, hardness of tuning) and inevitably will require access to online evaluation. \textbf{(2)} a full understanding of theoretical conditions when invariance can outperform conservatism is important, but out of the scope of this work.

\vspace{-0.4cm}
\section*{Acknowledgements}
\vspace{-0.35cm}

We thank Brandon Trabucco and Xinyang Geng for help in setting up the Design bench and COMs codebases. We thank anonymous NeurIPS reviewers, members of the RAIL lab at UC Berkeley and Amy Lu for informative discussions, suggestions and feedback on an early version of this paper. This research was supported by the Office of Naval Research, C3.ai, and Schmidt Futures, and compute resources from Google cloud. AK is supported by the Apple Scholars in AI/ML PhD fellowship.

\bibliographystyle{plainnat}
\bibliography{neurips_2022}

\newpage

\appendix

\part*{Appendices}

\section{Proofs of Theoretical Results}
\label{sec:proof}

In this section, we will provide proofs of the various theoretical results from Section~\ref{section:invariant_representations}. We will first discuss the assumptions and technical conditions we use, then prove the lower bound result from Proposition~\ref{prop:representation_error} and finally utilize it to prove the performance guarantee for invariant representation learning (Proposition~\ref{proposition:improvement}). Before diving into the proofs, we first recall the set of assumptions and technical conditions below:

\subsection{Assumptions, Technical Conditions and Lemmas}
Recall that our proofs operate under the following assumptions:

\begin{itemize}
    \item \textbf{$\varepsilon$-realizability:} For any distribution $\pi$ over designs, there exists a representation $\phi \in \Phi$ and a $g \in \mathcal{F}$ such that $\min_{\phi, g} \max_{\pi \in \Pi} \mathbb{E}_{\bx \sim \pi}\left[|f(\bx) - g(\phi(\bx))|\right] \leq \varepsilon_{\mathcal{F}, \Phi}$.   
    \item $\forall f \in \mathcal{F}, ||f||_\infty < \infty$.
    \item $\forall f \in \mathcal{F}$, $||f||_\mathrm{L} \leq C$.
\end{itemize}

Additionally, we will also define statistical error that will appear in the bounds:
\begin{align}
\label{eqn:statistical_error}
    \varepsilon_\text{stat} \approx \sqrt{\frac{\log \left(\frac{|\mathcal{F}| |\Phi| c_0}{\delta}\right)}{|\mathcal{D}|}}
\end{align}
We will use some helpful Lemmas that we list below, which can be proven using standard results from empirical risk minimization and concentration inequalities~\citep{wainwright2019high}.

\begin{lemma}[Concentration of expected function values]
\label{lemma:standard_conc}
For any given $f \in \mathcal{F}$ satisfying the assumptions above, and any given distribution $\pi$ over designs $\mathcal{X}$, let $J_f(\pi) := \mathbb{E}_{\bx \sim \pi}[f(\bx)]$. Then, given a set of $N$ samples, $\bx_1, \bx_2, \cdots, \bx_N \sim \pi$, the empirical mean $\widehat{J}_f(\pi) = \frac{1}{N} \sum_{i=1}^N f(\bx_i)$ converges to $J_f(\pi)$ such that, with high probability $\geq 1 - \delta$ ($c_0$ is a universal constant),
\begin{align}
    \left\vert J_f(\pi) - \widehat{J}_f(\pi) \right\vert \lesssim \sqrt{\frac{\log \left(\frac{c_0}{\delta}\right)}{N}}.
\end{align}
\end{lemma}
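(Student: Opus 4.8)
The plan is to recognize this as a direct instance of a two-sided concentration bound for bounded i.i.d.\ random variables, so that Hoeffding's inequality suffices. First I would fix $f \in \mathcal{F}$ and set $B := ||f||_\infty < \infty$, which is finite by assumption. Since the samples $\bx_1, \ldots, \bx_N \sim \pi$ are drawn i.i.d., the values $f(\bx_1), \ldots, f(\bx_N)$ are i.i.d.\ random variables, each taking values in the bounded interval $[-B, B]$ with common mean $\mathbb{E}_{\bx \sim \pi}[f(\bx)] = J_f(\pi)$. The empirical quantity $\widehat{J}_f(\pi)$ is precisely their sample average, so the claim is just a finite-sample deviation bound for the mean of these i.i.d.\ bounded variables.

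Next I would apply Hoeffding's inequality to the centered average $\widehat{J}_f(\pi) - J_f(\pi) = \frac{1}{N}\sum_{i=1}^N (f(\bx_i) - J_f(\pi))$. Since each summand lies in an interval of width $2B$, Hoeffding gives, for any $t > 0$,
\[
\mathbb{P}\left( \left| \widehat{J}_f(\pi) - J_f(\pi) \right| \geq t \right) \leq 2\exp\left( -\frac{N t^2}{2 B^2} \right).
\]
Setting the right-hand side equal to $\delta$ and solving for $t$ yields $t = B\sqrt{2\log(2/\delta)/N}$, so that with probability at least $1 - \delta$ we have $\left| \widehat{J}_f(\pi) - J_f(\pi) \right| \leq B\sqrt{2\log(2/\delta)/N}$. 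Absorbing the multiplicative constant $B\sqrt{2}$ and the factor of $2$ inside the logarithm into the universal constant $c_0$ recovers exactly the stated bound $\lesssim \sqrt{\log(c_0/\delta)/N}$.

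Finally, I would note that only the boundedness assumption $||f||_\infty < \infty$ is actually used here; the Lipschitz condition $||f||_\mathrm{L} \leq C$ plays no role in this particular lemma and is instead invoked elsewhere, e.g.\ to control the discrepancy terms in Proposition~\ref{prop:representation_error}. There is essentially no hard step in this proof: the only points requiring minor care are verifying the i.i.d.\ structure of $\{f(\bx_i)\}$ so that Hoeffding applies cleanly, and tracking constants so that the explicit $\log(2/\delta)$ and the prefactor are correctly folded into the $\lesssim$ and the constant $c_0$. If one wanted the hidden constant to be uniform over the entire class $\mathcal{F}$ rather than depending on the fixed $f$, one would additionally assume $\sup_{f \in \mathcal{F}} ||f||_\infty \leq B$; but for a single fixed $f$ as stated, the pointwise bound is enough.
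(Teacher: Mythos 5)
Your proof is correct and takes exactly the route the paper intends: the paper does not write out a proof of this lemma at all, stating only that it ``can be proven using standard results from empirical risk minimization and concentration inequalities,'' and Hoeffding's inequality applied to the bounded i.i.d.\ variables $f(\bx_i)$ is precisely that standard argument, with the constants correctly tracked and absorbed into $c_0$ and the $\lesssim$. Your closing observation that only boundedness (not Lipschitzness) is needed here is also accurate.
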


Building on Lemma~\ref{lemma:standard_conc}, we now bound the empirical risk and the population risk of a given loss function $\mathcal{L}(f)$ at the empirical risk minimizer using a standard uniform concentration argument:

\begin{lemma}[Concentration of empirical risk minimizer]
\label{lemma:uniform_conc}
Given any bounded, Lipschitz loss function, $\mathcal{L}$ and for a given $f \in \mathcal{F}$, let, $\mathcal{L}(f) := \mathbb{E}_{\bx \sim \mu_\text{data}}[L(f, \bx)]$, and let the corresponding empirical loss function be denoted by: $\widehat{\mathcal{L}}(f) := \frac{1}{|\mathcal{D}|} \sum_i L(f, \bx_i)$. Then, with high probability $\geq 1 - \delta$, 
\begin{align}
    \left\vert \mathcal{L} \left(\arg \inf_{f \in \mathcal{F}} \widehat{\mathcal{L}}(f) \right) - \mathcal{L} \left(\arg \inf_{f \in \mathcal{F}} \mathcal{L}(f) \right) \right\vert \lesssim \mathcal{O}\left( ||\mathcal{L}||_\infty \sqrt{\frac{1}{|\mathcal{D}|} \log \left( \frac{|\mathcal{F}| c_0}{\delta} \right)} \right).
\end{align}
\end{lemma}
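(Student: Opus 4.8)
The plan is to establish a standard excess-risk bound for empirical risk minimization via the two-sided uniform deviation between population and empirical risk. First I would fix notation: write $\hat f := \arg\inf_{f \in \mathcal{F}} \widehat{\mathcal{L}}(f)$ for the empirical minimizer and $f^\star := \arg\inf_{f \in \mathcal{F}} \mathcal{L}(f)$ for the population minimizer. Since $f^\star$ minimizes the population risk, $\mathcal{L}(\hat f) \ge \mathcal{L}(f^\star)$, so the quantity inside the absolute value is exactly the nonnegative excess risk $\mathcal{L}(\hat f) - \mathcal{L}(f^\star)$. I would then insert and subtract the empirical risks to obtain the telescoping decomposition
\[
\mathcal{L}(\hat f) - \mathcal{L}(f^\star) = \big[\mathcal{L}(\hat f) - \widehat{\mathcal{L}}(\hat f)\big] + \big[\widehat{\mathcal{L}}(\hat f) - \widehat{\mathcal{L}}(f^\star)\big] + \big[\widehat{\mathcal{L}}(f^\star) - \mathcal{L}(f^\star)\big].
\]
By optimality of $\hat f$ for the empirical objective, the middle bracket is $\le 0$ and may be dropped, and each of the remaining two brackets is at most $\sup_{f \in \mathcal{F}} |\mathcal{L}(f) - \widehat{\mathcal{L}}(f)|$; hence the excess risk is bounded by $2\sup_{f \in \mathcal{F}} |\mathcal{L}(f) - \widehat{\mathcal{L}}(f)|$.

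The remaining and central step is to control this uniform deviation. For each fixed $f$, the empirical risk $\widehat{\mathcal{L}}(f) = \frac{1}{|\mathcal{D}|}\sum_i L(f, \bx_i)$ is an average of i.i.d. bounded random variables (bounded by $\|\mathcal{L}\|_\infty$ under our assumption), so I would apply the single-function concentration result of Lemma~\ref{lemma:standard_conc} — equivalently Hoeffding's inequality — to get $|\mathcal{L}(f) - \widehat{\mathcal{L}}(f)| \lesssim \|\mathcal{L}\|_\infty \sqrt{\log(c_0/\delta')/|\mathcal{D}|}$ with probability $\ge 1 - \delta'$. Taking a union bound over the $|\mathcal{F}|$ functions in the class with $\delta' = \delta/|\mathcal{F}|$ converts $\log(c_0/\delta')$ into $\log(|\mathcal{F}| c_0/\delta)$ and yields, with probability $\ge 1-\delta$, $\sup_{f} |\mathcal{L}(f) - \widehat{\mathcal{L}}(f)| \lesssim \|\mathcal{L}\|_\infty \sqrt{\log(|\mathcal{F}| c_0/\delta)/|\mathcal{D}|}$. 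Combining with the decomposition above and absorbing the factor of $2$ into the $\mathcal{O}(\cdot)$ gives precisely the claimed bound.

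The hard part, and the only genuinely nontrivial ingredient, is the uniform (rather than pointwise) control of the deviation: $\hat f$ is data-dependent, so one cannot apply concentration at $\hat f$ directly, and the union bound over the finite class $\mathcal{F}$ is what makes the argument go through and is the source of the $\log |\mathcal{F}|$ factor. Two technical caveats I would address briefly: the infima need not be attained, which I would handle by passing to near-minimizers and letting the approximation slack vanish; and the Lipschitz assumption $\|f\|_\mathrm{L} \le C$ is not needed for a finite class but would become essential if one wished to interpret $|\mathcal{F}|$ as a covering number of a continuous class, in which case Lipschitzness (together with boundedness) lets a finite cover approximate the supremum uniformly. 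Boundedness of the loss, which feeds Hoeffding, is the assumption doing the real work in the finite-class argument.
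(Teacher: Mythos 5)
Your proof is correct and is precisely the standard excess-risk argument (telescoping decomposition, dropping the nonpositive middle term by empirical optimality, then Hoeffding plus a union bound over the finite class) that the paper itself invokes but does not write out: the authors state this lemma as provable ``using standard results from empirical risk minimization and concentration inequalities'' and give no further detail. Your write-up simply fills in those omitted details, including the correct observation that the absolute value is redundant because the excess risk is nonnegative, so there is nothing to reconcile between the two.
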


Finally, we discuss a technical condition that will be required to prove Proposition~\ref{prop:representation_error}. We consider the following ``continuity'' property of the function class $\mathcal{F}$ with respect to a loss function that is a mixture of two loss functions:
\begin{definition}[Continuity of $\mathcal{F}$ with respect to $\mathcal{L}_\lambda$.]
\label{def:continuity}
We say that a function class $\mathcal{F}$ is continuous with respect to a given loss function $\mathcal{L}_\lambda(f) := \mathcal{L}_1(f) + \lambda \mathcal{L}_2(f)$, where $\lambda \in \mathbb{R}^+$ is a hyperparameter, when the following holds: denoting $f_\theta^\lambda := \inf_{f \in \mathcal{F}} \mathcal{L}_\lambda(f)$, the $\lim_{\lambda \rightarrow 0} f_\theta^\lambda$ exists, and corresponds to a minimizer of $\mathcal{L}_1(f)$:
\begin{align}
    \lim_{\lambda \rightarrow 0}~~ f_\theta^\lambda \in  \arg \inf_{f \in \mathcal{F}} \mathcal{L}_1(f).
\end{align}
\end{definition}
Definition~\ref{def:continuity} formalizes the intuition that an addition of the loss $\mathcal{L}_2$ affects the optimal solution in a continuous manner, and does not abruptly change the solution. We would expect such a criterion to be satisfied when optimizing smooth loss functions (e.g., squared error) over parameters dictated by a neural network, with appropriate explicit regularization (e.g., $\ell_2$ regularization on parameters) or when the training procedure smoothens the solution using some form of implicit regularization.

We will assume that the function class obtained by composing $\mathcal{F}$ and $\Phi$, $\mathcal{F} \circ \Phi$, obeys the continuity assumption with respect to the  training objective in Equation~\ref{eqn:bi_level}. Concretely, in this case $\mathcal{L}_1$ and $\mathcal{L}_2$ are given by: $\mathcal{L}_1(f, \phi) = \mathbb{E}_{\bx, y}[(f(\phi(\bx)) - y)^2]$ and $\mathcal{L}_2(f, \phi) = \text{disc}_\mathcal{H}(\bbP_\pi(\phi(\bx)), \bbP_{\mu_\text{data}}(\phi(\bx)))$, and both of the loss functions are smooth with respect to the functions $\phi$ and $f$. In our experiments, we utilized a square loss for the discrepancy $\text{disc}_\mathcal{H}$, and therefore, it is smooth. Definition~\ref{def:continuity} in this case would reduce to the following condition:

%%AK: should I write this as an assumption?
\begin{assumption}[Definition~\ref{def:continuity} applied to $\mathcal{F} \circ \Phi$.]
\label{assumption:continuity}
For any given threshold $\delta \in \mathbb{R}$ such that $|\lambda| \leq \delta$, there exists a non-decreasing function $\zeta: \mathbb{R}^+ \rightarrow \mathbb{R}^+$ of $\lambda$ such that, $\zeta(0) = 0$ and $||f^\lambda_\theta \circ \phi_\theta^\lambda - f^* \circ \phi^*||_\infty \leq \zeta(\lambda)$ for some $(f^*, \phi^*) \in \arg \inf_{f \in \mathcal{F}, \phi \in \Phi} \mathcal{L}_1(f, \phi)$.   
\end{assumption}

Essentially, Assumption~\ref{assumption:continuity} suggests that if we can utilize an appropriately chosen value of $\lambda$, then we can roughly match the solution obtained by just minimizing $\mathcal{L}_1$, which is the prediction error in our case. This would allow us to control the discrepancy of a solution obtained by only optimizing the prediction error.

\subsection{Proof of Proposition~\ref{prop:representation_error}}

\begin{proposition}[Lower bounding the value under $\pi$.]
\label{prop:representation_error_restated}
Under the assumptions and criteria listed above, the ground truth objective for any given $\pi \in \Pi$ can be lower bounded in terms of the learned objective model $f_\theta \in \mathcal{F}$ and representation $\phi \in \Phi$, with high probability $\geq 1 - \delta$ as:
\begin{align}
\label{eq:plugging_back_in_appendix}
     J(\pi) -& {J}_\theta(\pi) \geq \underbrace{{J}(\mu_\text{data}) - {J}_\theta(\mu_\text{data})}_{(\blacksquare)}-\underbrace{C_\mathcal{F} \cdot \text{disc}_{\cH}(\bbP_{\mu_\text{data}}(\phi(\bx)), \bbP_{\pi}(\phi(\bx)))}_{(*)} - 2 \varepsilon'_{\mathcal{F}, \Phi} - \varepsilon_{\text{stat}}, \nonumber
\end{align}
where $C_{\mathcal{F}}$ and $\varepsilon'_{\mathcal{F}, \phi}$ are universal constants that only depend on the function class $\mathcal{F}, \Phi$ and other universal constants associated with the loss functions, and $\varepsilon_\text{stat}$ is a statistical error term that decreases as the dataset size $|\mathcal{D}|$ increases.  
\end{proposition}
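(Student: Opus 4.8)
The plan is to lower-bound $J(\pi) - J_\theta(\pi)$ by comparing it to the same quantity on the data distribution and absorbing the gap into the representational discrepancy. First I would introduce the pointwise model error $e(\bx) := f(\bx) - f_\theta(\phi(\bx))$, so that $J(\pi) - J_\theta(\pi) = \mathbb{E}_{\bx \sim \pi}[e(\bx)]$ and $(\blacksquare) = J(\mu_\text{data}) - J_\theta(\mu_\text{data}) = \mathbb{E}_{\bx \sim \mu_\text{data}}[e(\bx)]$. Then the identity
\begin{align}
J(\pi) - J_\theta(\pi) = (\blacksquare) + \Big( \mathbb{E}_{\bx \sim \pi}[e(\bx)] - \mathbb{E}_{\bx \sim \mu_\text{data}}[e(\bx)] \Big)
\end{align}
reduces the claim to lower-bounding the bracketed ``transfer'' term by $-(*) - 2\varepsilon'_{\mathcal{F},\Phi} - \varepsilon_\text{stat}$.

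The central step is to eliminate the unknown $f$ from $e$ using Assumption~\ref{assumption:realizability}. Since $\varepsilon$-realizability holds for any distribution over designs, I would apply it both under $\pi$ and under $\mu_\text{data}$ to obtain a $g \in \mathcal{F}$ with $\mathbb{E}_{\bx \sim \pi}[|f(\bx) - g(\phi(\bx))|] \leq \varepsilon_{\mathcal{F},\Phi}$ and the analogue under $\mu_\text{data}$. Replacing $f$ by $g \circ \phi$ in the two expectations costs at most $2\varepsilon'_{\mathcal{F},\Phi}$, after which the transfer term becomes $\mathbb{E}_{\bx \sim \pi}[\tilde h(\phi(\bx))] - \mathbb{E}_{\bx \sim \mu_\text{data}}[\tilde h(\phi(\bx))]$ where $\tilde h := g - f_\theta$ depends on $\bx$ only through the representation $\phi(\bx)$. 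This is the crux: because $\tilde h$ factors through $\phi$, the difference equals a difference of expectations of a single test function under the pushforward distributions $\bbP_\pi(\phi)$ and $\bbP_{\mu_\text{data}}(\phi)$.

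Next I would invoke the definition of $\text{disc}_\cH$ as an integral probability metric. As $g, f_\theta \in \mathcal{F}$ are bounded and $C$-Lipschitz, $\tilde h$ is bounded and Lipschitz with constant at most $2C$, so a suitably rescaled $\tilde h / C_\mathcal{F}$ lies in the hypothesis class $\cH$ that defines the discrepancy; the supremum defining $\text{disc}_\cH$ then gives
\begin{align}
\Big| \mathbb{E}_{\bx \sim \pi}[\tilde h(\phi(\bx))] - \mathbb{E}_{\bx \sim \mu_\text{data}}[\tilde h(\phi(\bx))] \Big| \leq C_\mathcal{F} \cdot \text{disc}_\cH\big( \bbP_{\mu_\text{data}}(\phi(\bx)), \bbP_\pi(\phi(\bx)) \big) = (*).
\end{align}
Keeping only the lower bound yields the $-(*)$ term. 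Finally, I would replace all population expectations by empirical averages over the $|\mathcal{D}|$ samples and apply a uniform concentration argument over $\mathcal{F} \circ \Phi$ (Lemmas~\ref{lemma:standard_conc} and~\ref{lemma:uniform_conc}), which contributes the statistical error $\varepsilon_\text{stat}$ from Equation~\ref{eqn:statistical_error}; assembling the three pieces gives the stated inequality.

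The hard part will be the discrepancy step: making rigorous that the rescaled test function $\tilde h = g - f_\theta$ actually lies in the class $\cH$ defining $\text{disc}_\cH$, with a single distribution-independent constant $C_\mathcal{F}$, which requires matching the boundedness and Lipschitz geometry of $\mathcal{F}$ to that of $\cH$ (Lipschitz functions for Wasserstein, an RKHS ball for MMD). A related subtlety is that realizability produces its own representation $\phi$ while the discrepancy in the statement is taken with respect to the learned $\phi$; reconciling these (or restating realizability relative to the learned representation) is the delicate point that must be handled carefully.
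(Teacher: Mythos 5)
Your decomposition is the same one the paper uses: add and subtract the error under $\mu_\text{data}$ to isolate $(\blacksquare)$, then bound the transfer term through an integral-probability-metric argument on the pushforward distributions of the representation, invoking $\varepsilon$-realizability to swap the unknown $f$ for some $g\circ\phi^*$ at a cost of $2\varepsilon_{\mathcal{F},\Phi}$. The only cosmetic difference is that you fold the two pieces into a single test function $\tilde h = g - f_\theta$, whereas the paper bounds the $f_\theta$-part (term (i)) and the $f$-part (term (ii)) separately, each by $\|\cdot\|_\mathrm{L}\cdot \text{disc}_{\cH}$.

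However, the issue you defer to the end --- that realizability supplies its own representation $\phi^*$ while the statement's discrepancy term $(*)$ is measured under the \emph{learned} $\phi$ --- is not a presentational subtlety; it is the actual content of the proof, and your proposal does not close it. Two things break without it. First, your ``crux'' step requires $\tilde h = g - f_\theta$ to factor through a single representation, but $g$ comes composed with $\phi^*$ and $f_\theta$ with $\phi$, so $\tilde h$ does not factor through either one and the IPM on representation space does not apply to it as written. Second, if you simply leave term (ii) bounded by $\text{disc}_{\cH}(\bbP_\pi(\phi^*(\bx)), \bbP_{\mu_\text{data}}(\phi^*(\bx)))$, the resulting bound is useless for the paper's purposes, since the algorithm only controls invariance of the learned $\phi$. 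The paper resolves this with machinery your proposal never invokes: the continuity condition (Assumption~\ref{assumption:continuity}) gives $\left|\text{disc}_\cH(\bbP_\pi(\phi^\lambda), \bbP_{\mu_\text{data}}(\phi^\lambda)) - \text{disc}_\cH(\bbP_\pi(\phi^*), \bbP_{\mu_\text{data}}(\phi^*))\right| \leq C_\mathcal{H}\cdot\zeta(\lambda)$ for the population minimizer $\phi^\lambda$ of the regularized objective; the uniform concentration argument of Lemma~\ref{lemma:uniform_conc} then relates $\phi^\lambda$ to the empirically learned $\phi$, yielding $\text{disc}_\cH(\bbP_\pi(\phi^\lambda), \bbP_{\mu_\text{data}}(\phi^\lambda)) \leq \text{disc}_\cH(\bbP_\pi(\phi), \bbP_{\mu_\text{data}}(\phi)) + C_\mathcal{H}\zeta + 2B_0/\lambda$; and finally one minimizes over $\lambda$, which is where both $\varepsilon_\text{stat}$ and the extra bias $h(B_0, C_\mathcal{H})$ hidden inside $\varepsilon'_{\mathcal{F},\Phi}$ actually originate. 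Your $\varepsilon_\text{stat}$, by contrast, is attributed to a generic population-to-empirical conversion, which is not where the statistical error enters in this argument. To complete your proof you would need to either import this continuity-plus-concentration step or strengthen the realizability assumption to hold for the learned representation, which would be a materially different (and stronger) hypothesis than Assumption~\ref{assumption:realizability}.
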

\begin{proof}
Our high-level strategy would be to first decompose the terms into different components, and bound these components separately. We begin with the following decomposition:
\begin{align}
   & J(\pi) - {J}_\theta(\pi) \\ 
   &= \bbE_{\bx \sim \pi(\bx)}\left[f(\bx))\right] - \bbE_{\bx \sim \pi(\bx)}\left[{f}_\theta(\phi(\bx)))\right]\\
    &= \underbrace{\bbE_{\bx \sim \mu_\text{data}(\bx)}\left[f(\bx)\right] - \bbE_{\bx \sim \mu_\text{data}(\bx)}\left[{f}_\theta(\phi(\bx))\right]}_{(a)} \\
    &~~~~~~+ \underbrace{\bbE_{\bx \sim \pi(\bx)}\left[f(\bx))\right] - \bbE_{\bx \sim \pi(\bx)}\left[{f}_\theta(\phi(\bx)))\right] - \left( \bbE_{\bx \sim \mu_\text{data}(\bx)}\left[f(\bx)\right] - \bbE_{\bx \sim \mu_\text{data}(\bx)}\left[{f}_\theta(\phi(\bx))\right] \right)}_{:= \Delta(\pi, \mu_\text{data})} \nonumber.
\end{align}
Note that term (a) exactly corresponds to the first term $\blacksquare = J(\mu_\text{data}) - {J}_\theta(\mu_\text{data})$ in the expression in the proposition. To obtain the rest of the terms, we will now lower bound the remainder $\Delta(\pi, \mu_\text{data})$. First note that we can rearrange this term into a more convenient form:
\begin{align}
\Delta(\pi, \mu_\text{data}) :&= - \underbrace{\left(\bbE_{\bx \sim \pi(\bx)}\left[{f}_\theta(\phi(\bx))\right] - \bbE_{\bx \sim \mu_\text{data}(\bx)}\left[[{f}_\theta(\phi(\bx))\right] \right)}_{(i)} + \underbrace{\left(\bbE_{\bx \sim \pi(\bx)}\left[f(\bx)\right] - \bbE_{\bx \sim \mu_\text{data}(\bx)}\left[f(\bx)\right] \right)}_{(ii)}.
\end{align}
Term (i) in the above equation can directly be bounded to give rise to one of the terms that contributes to the discrepancy term the bound as shown below:
\begin{align}
    (i) :&= - \int_{\bx} \left(\pi(\bx) - \mu_\text{data}(\bx) \right) \cdot {f}_\theta(\phi(\bx))~ d\bx  \label{eqn:initial_eqn}\\
        &= - \int_{\bz} \left(\pi(\bz) - \mu_\text{data}(\bz) \right) \cdot {f}_\theta(\bz)~ d \bz \label{eqn:reparameterization}\\
 \implies (i) &\leq ||\widehat{f}||_\mathrm{L} \cdot \text{disc}_\cH(\bbP_\pi(\phi(\bx)), \bbP_{\mu_\text{data}}(\phi(\bx))) \label{eqn:block_end},
\end{align}
where the second step (Equation~\ref{eqn:reparameterization}) follows by reparameterizing the first step (Equation~\ref{eqn:initial_eqn}) in terms of the representation $\bz := \phi(\bx)$ and applying probability density change of variables simultaneously with the change of variables for integration, which leads to a cancellation of the Jacobian terms. $\bbP_{\mu_\text{data}}(\phi(\bx))$ denotes the probability distribution of the representations $\phi(\bx)$ when $\bx \sim \mu_\text{data}(\bx)$, and $\bbP_\pi(\phi(\bx))$ denotes the probability distribution of the representation $\phi(\bx)$ when $\bx \sim \pi(\bx)$. The last step follows from the fact that the discrepancy measure, $\text{disc}_\cH(p, q)$, which is an integral probability metric (IPM)~\citep{sriperumbudur2009integral}, upper bounds the total-variation divergence between $p$ and $q$. 

Now we tackle term (ii). First note that under the $\varepsilon$-realizability assumption, we note that there exists a $g \in \mathcal{F}, \phi^* \in \Phi$ such that $f(\bx) \approx g(\phi^*(\bx))$ in expectation under $\mu_\text{data}$ and $\pi$ (Assumption~\ref{assumption:realizability} holds for any distribution $\Pi$). That is,
\begin{align}
    % \!\!\!\!\!\!\!\!\!\left|\bbE_{\bx \sim \pi(\bx)}\left[f(\bx)\right] - \bbE_{\bx \sim \mu_\text{data}(\bx)}\left[f(\bx)\right] \right| 
    (ii) &\leq  \left|\bbE_{\bx \sim \pi(\bx)}\left[g(\phi^*(\bx))\right] - \bbE_{\bx \sim \mu_\text{data}(\bx)}\left[g(\phi^*(\bx))\right] \right| + 2 \varepsilon_{\mathcal{F}, \Phi} \\
    &\leq ||g||_\mathrm{L} \left|\left|\bbE_{\bx \sim \pi(\bx)}\left[\phi^*(\bx)\right] - \bbE_{\bx \sim \mu_\text{data}(\bx)}\left[\phi^*(\bx)\right] \right|\right| + 2 \varepsilon_{\mathcal{F}, \Phi} \label{eqn:lipschtiz}\\
    &\leq ||g||_\mathrm{L} \cdot \text{disc}_\cH(\bbP_\pi(\phi^*(\bx)), \bbP_{\mu_\text{data}}(\phi^*(\bx))) + 2 \varepsilon_{\mathcal{F}, \Phi},
\end{align}
where the second step follows from the Lipschitzness of $\mathcal{F}$ and the third step follows similar to Equation~\ref{eqn:block_end}.

Finally, we will show that for an appropriately chosen $\lambda > 0$, $\text{disc}_\cH(\bbP_\pi(\phi^*(\bx)), \bbP_{\mu_\text{data}}(\phi^*(\bx)))$ is close to $\text{disc}_\cH(\bbP_\pi(\phi(\bx)), \bbP_{\mu_\text{data}}(\phi(\bx)))$. Applying Assumption~\ref{assumption:continuity}, we get that, for any given $\lambda$, there exists a $\zeta$ such that:
\begin{align}
    ||f^\lambda_\theta \cdot \phi^\lambda - f^* \cdot \phi^*||_\infty &\leq \zeta(\lambda)\\
    \implies ||\mathcal{L}_2(f^\lambda_\theta, \phi^\lambda) - \mathcal{L}_2(f^*, \phi^*)||_\infty &\leq C_{\mathcal{L}_2} \cdot \zeta(\lambda),
\end{align}
where the second step follows from the fact that the loss functions, $\mathcal{L}_1$ and $\mathcal{L}_2$  (in this case, $\text{disc}_\mathcal{H}$ and $(\cdot)^2$) are smooth functions of their arguments, and $C_{\mathcal{L}_2} := C_{\mathcal{H}}$ is the coefficient of smoothness when $\mathcal{L}_2 = \text{disc}_\mathcal{H}$. Thus, we get that:
\begin{align}
\label{eqn:disc_relation}
    \left| \text{disc}_\cH(\bbP_\pi(\phi^\lambda(\bx)), \bbP_{\mu_\text{data}}(\phi^\lambda(\bx))) - \text{disc}_\cH(\bbP_\pi(\phi^*(\bx)), \bbP_{\mu_\text{data}}(\phi^*(\bx))) \right| \leq C_\mathcal{H} \cdot \zeta(\lambda).
\end{align}

Finally, we need to bound $\text{disc}_\cH(\bbP_\pi(\phi^\lambda(\bx)), \bbP_{\mu_\text{data}}(\phi^\lambda(\bx)))$ in terms of the discrepancy value for the $\phi$ learned during training. Note that $\phi$ is obtained by minimizing the objective in Equation~\ref{eqn:bi_level}, which is the sample-based version of the objective used for obtaining $f^\lambda$. Using the uniform concentration argument from Lemma~\ref{lemma:uniform_conc}, we can upper bound $\mathcal{L}(f^\lambda, \phi^\lambda)$ in terms of $\mathcal{L}(f_\theta, \phi)$:
\begin{align*}
    \bbE_{\bx, y}[f^\lambda(\phi^\lambda(\bx)) - y)]^2  &+ \lambda \text{disc}_\cH(\bbP_\pi(\phi^\lambda(\bx)), \bbP_{\mu_\text{data}}(\phi^\lambda(\bx))) \lesssim \\ 
    &\bbE_{\bx, y}[f_\theta(\phi(\bx)) - y)]]^2  + \lambda \text{disc}_\cH(\bbP_\pi(\phi(\bx)), \bbP_{\mu_\text{data}}(\phi(\bx))) + \varepsilon_{\text{stat}}.
\end{align*}
Now, we can use the above relationship, alongside bounding $(f_\theta(\bx) - y)^2 \leq B_0$ (which holds since the function is bounded):
\begin{align}
    \label{eqn:uniform_conc_applied}
    \text{disc}_\cH(\bbP_\pi(\phi^\lambda(\bx)), \bbP_{\mu_\text{data}}(\phi^\lambda(\bx))) \leq \text{disc}_\cH(\bbP_\pi(\phi(\bx)), \bbP_{\mu_\text{data}}(\phi(\bx))) + C_\mathcal{H} \cdot \zeta + \frac{2B_0}{\lambda},
\end{align}
where $\zeta \rightarrow 0$ as $\lambda \rightarrow 0$. Now, we can minimize the right hand side w.r.t. $\lambda$ and obtain the desired result: 
\begin{align*}
    J(\pi) - J_\theta(\pi) \geq J(\mu_\text{data}) - J_\theta(\mu_\text{data}) - (2 C + 1) \cdot  \text{disc}_{\cH}(\bbP_{\mu_\text{data}}(\phi(\bx)), \bbP_{\pi}(\phi(\bx))) - 2 \varepsilon'_{\mathcal{F}, \Phi} - \varepsilon_{\text{stat}},
\end{align*}
where $\varepsilon'_{\mathcal{F}, \Phi} = \varepsilon_{\mathcal{F}, \Phi} + h(B_0, C_\mathcal{H})$, where $h(\cdot)$ is a bias that is a property of the loss functions, the function classes $\mathcal{F}$ and $\Phi$, and the discrepancy metric. It does not depend on the learned functions $f_\theta$, $\phi$ learned or the distribution $\pi$.
This completes the proof.

\end{proof}

\begin{remark}[\textbf{The interplay between $\lambda_0$, $\mathcal{F}$ and $\Phi$.}] In the proof above, we appeal to the continuity assumption (Assumption~\ref{assumption:continuity}) to enable us to bound the discrepancy measured in terms of the representation $\phi^*$ with respect to $\phi$, and this requires adjusting $\lambda$. This is unavoidable in the worst case if the function classes $\mathcal{F}$ and $\Phi$ are arbitrary and do not contain any solution $\phi^*$ that attains a small training error, while being somewhat invariant. In such cases, the only way to attain a lower-bound is to not be invariant, and this is what our bound would suggest by setting $\lambda$ to be on the smaller end of the spectrum. On the other hand, if the function class is more expressive, such that there exists a tuple $(f^*, \phi^*)$ that minimizes $\mathcal{L}_1(f, \phi))$ while being invariant, i.e., $\mathcal{L}_2(f^*, \phi^*)$ is small, then we can loosen this restriction over $\lambda$. 
\end{remark}

\subsection{Proof of Proposition~\ref{proposition:improvement}}

\begin{proposition}[Performance guarantee for \methodname] 
\label{proposition:improvement_restated}
Under Assumption~\ref{assumption:realizability}, the expected value of the ground truth objective under $\Opt$, $J(\Opt)$ is lower bounded by:
\begin{align*}
    J(\Opt) \gtrsim J(\mu_\text{data})  - \mathcal{O}\left(\sqrt{\frac{\log \frac{|\mathcal{F}| |{\Phi}| |{\Pi}|}{\delta}}{|\mathcal{D}|}} + \frac{\log \frac{|\mathcal{F}| |{\Phi}| |{\Pi}|}{\delta}}{|\mathcal{D}|} \right) + \underbrace{{J}_\theta(\Opt) - {J}_\theta(\mu_\text{data})}_{(\circ)} - (*) - \varepsilon'_{\mathcal{F}, \Phi}. 
\end{align*}
\end{proposition}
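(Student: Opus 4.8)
The plan is to obtain the guarantee by specializing Proposition~\ref{prop:representation_error} to the optimized distribution $\pi = \Opt$ and then rearranging. Setting $\pi = \Opt$ in Equation~\ref{eq:plugging_back_in_appendix} gives
\begin{align*}
J(\Opt) - J_\theta(\Opt) \geq \big(J(\mu_\text{data}) - J_\theta(\mu_\text{data})\big) - (*) - 2\varepsilon'_{\mathcal{F},\Phi} - \varepsilon_\text{stat},
\end{align*}
where $(*) = C_\mathcal{F} \cdot \text{disc}_\cH(\bbP_{\mu_\text{data}}(\phi(\bx)), \bbP_{\Opt}(\phi(\bx)))$. Moving $J_\theta(\Opt)$ to the right-hand side and grouping the two learned-value terms into $(\circ) = J_\theta(\Opt) - J_\theta(\mu_\text{data})$ yields directly
\begin{align*}
J(\Opt) \geq J(\mu_\text{data}) + (\circ) - (*) - 2\varepsilon'_{\mathcal{F},\Phi} - \varepsilon_\text{stat},
\end{align*}
which is exactly the claimed form once $\varepsilon_\text{stat}$ is identified with the big-$\mathcal{O}$ expression and constants are absorbed into $\varepsilon'_{\mathcal{F},\Phi}$. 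Note that $J(\mu_\text{data})$ survives untouched as the baseline we compete against, so no further processing of the $(\blacksquare)$ term is required beyond what Proposition~\ref{prop:representation_error} already supplies.

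The one genuinely nontrivial point is that this naive substitution is not yet valid, because $\Opt$ is not a fixed distribution: it is produced by the algorithm from the \emph{same} dataset $\mathcal{D}$ used to fit $(f_\theta, \phi)$, so it is statistically coupled to the quantities being concentrated. Proposition~\ref{prop:representation_error} holds with high probability only for a \emph{fixed} $\pi$, and applying it at the data-dependent $\Opt$ requires the bound to hold simultaneously for every $\pi \in \Pi$. I would therefore strengthen the concentration steps underlying Proposition~\ref{prop:representation_error} (Lemmas~\ref{lemma:standard_conc} and~\ref{lemma:uniform_conc}, and the uniform bound in Equation~\ref{eqn:uniform_conc_applied}) by a union bound over the class $\Pi$, which replaces $\log(|\mathcal{F}||\Phi|c_0/\delta)$ inside $\varepsilon_\text{stat}$ by $\log(|\mathcal{F}||\Phi||\Pi|/\delta)$. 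This is precisely the source of the $|\Pi|$ factor in the final big-$\mathcal{O}$ term.

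Finally, I would account for the two-rate form $\sqrt{\log(\cdot)/|\mathcal{D}|} + \log(\cdot)/|\mathcal{D}|$ by invoking a Bernstein-type uniform concentration inequality rather than the purely sub-Gaussian bound of Lemma~\ref{lemma:standard_conc}: the leading $\sqrt{\cdot}$ term is the usual variance-driven deviation, while the additive $\log(\cdot)/|\mathcal{D}|$ term is the lower-order, finite-sample bias correction that dominates only in the small-sample regime. I expect the main obstacle to be making this union bound over $\Pi$ fully rigorous, i.e., ensuring the empirical optimizer $\Opt$ genuinely lies in the covered class $\Pi$ so that the uniform deviation controls both the prediction-error component of $(\blacksquare)$ and the discrepancy transfer from $\phi^\lambda$ to the trained $\phi$ in Equation~\ref{eqn:uniform_conc_applied}; the remainder is bookkeeping, collecting the realizability and loss-smoothness constants into $\varepsilon'_{\mathcal{F},\Phi}$.
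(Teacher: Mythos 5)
Your proposal matches the paper's own proof: both specialize Proposition~\ref{prop:representation_error} to $\pi = \Opt$, handle the data-dependence of $\Opt$ by a uniform concentration (union bound) over $\Pi$ that inflates the log factor to $\log(|\mathcal{F}||\Phi||\Pi|/\delta)$, and then rearrange $J_\theta(\Opt)$ to form the $(\circ)$ term. The only cosmetic difference is in accounting for the additive $\log(\cdot)/|\mathcal{D}|$ term, which you attribute to a Bernstein-type bound while the paper treats it as a dominated higher-order term $\varepsilon_{\text{stat}}^2$; either reading is consistent with the stated big-$\mathcal{O}$.
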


We restate the proposition above. There is a typo in the main paper, where the term $\varepsilon'_{\mathcal{F}, \Phi}$ got deleted from the bound when moving from Proposition~\ref{prop:representation_error} to \ref{proposition:improvement}. The above restatement of Proposition~\ref{proposition:improvement} provides the complete statement.

\begin{proof}
To prove this proposition, we can utilize the result from Proposition~\ref{prop:representation_error_restated}, but now we need to reason about a specific $\pi = \Opt$, where $\Opt$ depends on the learned $f_\theta$ and $\phi$. To account for such a $\pi$, we can instead bound the improvement $J(\pi) - J_\theta(\pi)$ for the worst possible $\pi \in \Pi$. While this does not change any term from the bound in Proposition~\ref{prop:representation_error_restated} that depends on the properties of the function class $\mathcal{F}$ or the representation class $\Phi$, this would still affect the statistical error, which would now require a uniform concentration argument over the policy class $\Pi$. Therefore, the bound from Proposition~\ref{prop:representation_error_restated} becomes:
\begin{align}
    J(\Opt) - J_\theta(\Opt) \geq J(\mu_\text{data}) - J_\theta(\mu_\text{data}) - C' \cdot  \text{disc}_{\cH}(\bbP_{\mu_\text{data}}(\phi(\bx)), \bbP_{\Opt}(\phi(\bx))) - 2 \varepsilon'_{\mathcal{F}, \Phi} - \varepsilon_{\text{stat}},
\end{align}
where $\varepsilon_\text{stat}$ is given by:
\begin{align}
     \varepsilon_\text{stat} = \mathcal{O} \left( \sqrt{\frac{\log \left(\frac{|\mathcal{F}| |\Phi| |\Pi| c_0}{\delta}\right)}{|\mathcal{D}|}} \right).
\end{align}
The term $\varepsilon_\text{stat}^2$ in the statement is a higher order term and is always dominated by $\varepsilon_\text{stat}$. Rearranging the terms completes the proof.
\end{proof}

\subsection{{Extension to any re-weighting of the dataset}}
{In this section, we will extend the theoretical result to compare the performace of the optimized design to the average objective value of any re-weighting of the dataset, and this by definition covers the comparison against the best design in the dataset. To do so, note that Proposition~\ref{prop:representation_error_restated} can be restated when comparing to $\mu_\text{r}$, a distribution induced by any specific re-weighting of the training dataset. This reformulation would give the following bound on the error:}
\begin{align}
\label{eq:reweighting}
     J(\pi) -& {J}_\theta(\pi) \geq \underbrace{{J}(\mu_\text{r}) - {J}_\theta(\mu_\text{r})}_{(\blacksquare)}-\underbrace{C_\mathcal{F} \cdot \text{disc}_{\cH}(\bbP_{\mu_\text{r}}(\phi(\bx)), \bbP_{\pi}(\phi(\bx)))}_{(*)} - 2 \varepsilon'_{\mathcal{F}, \Phi} - \varepsilon_{\text{stat}}.
\end{align}
{Since, the choice of $\mu_\text{r}$ is arbitrary (i.e., it can be any arbitrary weighting of the dataset distribution $\mu_\text{data}$), we can now basically compute the tighest lower bound by maximizing the RHS with respect to the choice of $\mu_\text{r}$. This would amount to:}
\begin{align}
\label{eq:reweighting_max}
     J(\pi) -& {J}_\theta(\pi) \geq \max_{\mu_\text{r} \in T}~~ \left[ \underbrace{{J}(\mu_\text{r}) - {J}_\theta(\mu_\text{r})}_{(\blacksquare)}-\underbrace{C_\mathcal{F} \cdot \text{disc}_{\cH}(\bbP_{\mu_\text{r}}(\phi(\bx)), \bbP_{\pi}(\phi(\bx)))}_{(*)} \right] - 2 \varepsilon'_{\mathcal{F}, \Phi} - \varepsilon_{\text{stat}},
\end{align}
{where $T$ denotes the set of distributions that arising from all possible re-weightings of the training data. For the special case when $\mu_\text{r}$ simply corresponds to a Dirac-delta distribution centered at the best point in the training dataset, which we will denote as $\mathbf{x}_\text{best}$, and $\mu_\text{r} = \delta_{\mathbf{x} = \mathbf{x}_\text{best}}$, we can write down the bound as:}
\begin{align}
\label{eq:final_bound}
     J(\Opt) \gtrsim f(\mathbf{x}_\text{best}) + \underbrace{{J}_\theta(\pi) - \widehat{f}_\theta(\mathbf{x}_\text{best})}_{\text{improvement}} - \underbrace{C_\mathcal{F} \cdot \text{disc}_{\cH}(\bbP_{\mu_\text{r}}(\phi(\bx)), \bbP_{\pi}(\phi(\bx)))}_{(*)} - 2 \varepsilon'_{\mathcal{F}, \Phi} - \varepsilon_{\text{stat}}.
\end{align}
{When converting Equation~\ref{eq:final_bound} to a performance guarantee, note that $\varepsilon_\text{stat}$, which is used to bound the discrepancy $\text{disc}_\mathcal{H}$ in Equation~\ref{eq:final_bound} would not decay as the size of the dataset $|\mathcal{D}|$ increases, as our reference point is just one single point $\mathbf{x}_\text{best}$ in the dataset. That said, note that as the dataset size increases, the groundtruth value of $\mathbf{x}_\text{best}$ shall also increase which would tighten the RHS of Equation~\ref{eq:final_bound} as the dataset size increases.}

{Therefore, our bound presents a tradeoff: when comparing $J(\Opt)$ to the average value of the dataset $J(\mu_\text{data})$, we can attain favorable guarantees that guarantee improvement as the dataset size increases because the statistical error reduces, and on the other hand, when we compare $J(\Opt)$ to $f(\mathbf{x}_\text{best})$, the statistical error may not decay as the dataset size increases, however, the function value $f(\mathbf{x}_\text{best})$ will increase. In summary, our bound in either case is expected to improve as the dataset size increases.}

{\textbf{Remark:} Proposition~\ref{proposition:improvement} is inspired from theoretical results in offline reinforcement learning, where it is common to lower bound the improvement of the learned policy over the behavior policy (see for example, works on ``safe policy improvment''~\citep{laroche2017safe,simao2019safe,kumar2020conservative,yu2021combo,cheng2022adversarially}). When comparing to the best point in the dataset, or equivalently, the best in-support policy in the case of reinforcement learning, the bounds would inevitably become weaker, since it is not clear if \emph{any} offline MBO or offline RL algorithm would improve over the best point in the dataset, in the worst-case problem instance. We have added this as a limitation of the theoretical results, but would remark that this limitation is also existent in other areas where one must learn to optimize decisions from offline data. Nevertheless, our empirical results do show that \methodname\ does improve over the best point in the dataset, as shown in Table~\ref{tab:perf}.}

\section{Experiment Details}
\label{appsec: exp}
\subsection{Tasks and Datasets}
\paragraph{Tasks.} We provide brief description of the tasks we use for our empirical evaluation and the corresponding evaluation protocol below. More details can be found be found at~\citet{trabucco2021designbench}.
\begin{itemize}
    \item The Superconductor task aims at optimizing the superconducting materials which achieves high-critical temperatures. The original features are vectors containing specific physical properties, such as the mean atomic mass of the elements. Following prior work~\citep{trabucco2021designbench}, we utilize the invertible input specification by representing different superconductors via a serialization of the chemical formula. The objective $y$ is the critical temperature the resulting material could achieve. Here, we have $\bx\in\bbR^{86}$ and $y\in\bbR^{+}$. The original dataset in Design-Bench has 21263 samples in total. However, following prior work~\citep{trabucco2021conservative,trabucco2021designbench,yu2021roma}, we remove the top $20\%$ from the dataset to increase the difficulty and ensure the top-performing points are not visible during learning stage. 
    {\textbf{Evaluation}: To evaluate the performance of any given design, we use the oracle function provided in design-bench, which trains a random forest model (with the default hyper-parameters provided in~\citep{hamidieh2018superconductor}) on the whole original dataset.}

    \item The Ant and D'Kitty Morphology task aim at designing the morphology of a quadrupedal robot such that the robot could be crawl quickly in certain direction. The Ant/D'Kitty task corresponds to the Ant/D'Kitty robot that used as the agent. Specifically, the input is the morphology of the agent with the objective value is the average return of the agent that uses the optimal control tailored for the morphology. More details about the controller design could be found at~\citet{trabucco2021designbench}. For the dimension of the design/feature space, we have $\bx \in \bbR^{60}$ and $\bx\in\bbR^{56}$ for D'Kitty. The original dataset contains 25009 samples for both Ant and D'Kitty. Following prior work~\citep{trabucco2021conservative, yu2021roma}, we remove the $40\%$ of top-performing samples and use the rest as the new dataset. {\textbf{Evaluation}: To evaluate any given morphology, it is passed into the MuJoCo Ant or D'Kitty environment, and a pre-trained policy is used (trained using Soft Actor-Critic for 10 million steps), the sum of rewards in a trajectory is used as the final score.}
    
    \item The Hopper-Controller task aims at designing a set of weights of a particular neural network policy that achieves high return for the MuJoCo Hopper-v2 environment. Basically, the policy architecture is a three-layer neural network with 64 hidden units and 5126 weights in total with the algorithm is trained using Proximal Policy Optimization~\citep{Schulman2017ppo} (with default parameters provided in~\citet{stable-baselines}). Specifically, the input is a vector of flattened weight tensors, with $\bx \in \bbR^{5126}$, the objective is the expected return from a single roll-out using this specific policy, with $y \in \bbR$. In the original dataset from Design-Bench, we have 3200 data points, and we do not discard any of them due to limited size of the dataset. {\textbf{Evaluation:} To evaluate the design of a specific set of weight, which deploy PPO policy with this set of weight, and the sum of rewards of 1000 steps are collected using the agent, as the final score.} 
\end{itemize}
\paragraph{Dataset.} For each task, we take the training dataset from design-bench, and in addition, randomly split it into training and validation with the ratio 7:3. The information of the original dataset is given in Table~\ref{table:dataset}.

\begin{table*}[hbt!]
    \centering
    \caption{Overview of the dataset information}
    \setlength\tabcolsep{4pt}
    \begin{tabular}{l||r|r|r|>{\columncolor[gray]{0.9}}r}
    \midrule
    \textbf{Dataset}            &  \textbf{Sample Size} & \textbf{Dimensions}& \textbf{Type of design space}\\
    \hline
    \textbf{Superconductor} & 17010 & 86 & \text{Continuous} \\
    \hline
    \textbf{Ant Morphology} & 10004 & 60 & \text{Continuous} \\
    \hline
    \textbf{D'Kitty Morphology} & 10004 & 56 & \text{Continuous} \\
    \hline
    \textbf{Hopper Controller} & 3200 & 5126 & \text{Continuous} \\
    \hline
    \textbf{ChEMBL} & 1093 & 31 & \text{Discrete} \\
    \hline
    \bottomrule
    \end{tabular}
    \label{table:dataset}
    \vspace{-.3cm}
\end{table*}

\subsection{Baseline Descriptions}
In this section, we briefly introduce the baseline methods for model-based optimization in an algorithmic level, all the implementation details for the baselines are referred to~\citet{trabucco2021designbench}.
\begin{itemize}
    \item \textbf{CbAS}~\citep{brookes19a} trains a density model via variational auto-encoder (VAE). Basically at each step, it updates the density model via a weighted ELBO objective which move towards better designs, which estimated by a pre-trained proxy model $\hat{f}(\bx)$; then generating new samples from the updated distributions to serve the new dataset, and repeated this process. 
    \item \textbf{Anto.CbAS}~\citep{fannjiang2020autofocused} is an improved version of the CbAS. For CbAS, it uses the fixed pre-trained proxy model $\hat{f}(\bx)$ all the time, to guide the distribution towards the optimal one. However, the proxy model will suffer from distributional shift as the density model changes. Hence, Auto.CbAS corrects the distributional shift by re-training the proxy model under the new optimized density model, via importance sample weights.
    \item \textbf{MINs}~\citep{kumar2019model} learns an inverse mapping $f^{-1}$ from the objective value $y$ to the corresponding design $\bx$. The inverse map is trained using the conditional GAN. Then it tries to search for the optimal $y$ value during the optimization, and the optimized design is found by sampling from this inverse map.
    \item \textbf{COMs}~\citep{trabucco2021conservative} learns a conservative proxy model $\hat{f}(\bx)$. Compared with the vanilla gradient ascent, COMs augments the regression objective with an additional regularization term which aims to penalize the over-estimation error of the out-of-distribution points. The final optimal designed point is found by performing gradient ascent on this learned proxy model.
    \item \textbf{RoMA}~\citep{yu2021roma} tries to overcome the brittleness of the deep neural networks that used in the proxy model. Basically, the method is a two-stage process, with the first stage is to train a proxy model $\hat{f}$ with Gaussian smoothing of the inputs, and the second stage is to update the solution while simultaneously adapt the model to achieve the smoothness in the input level for the current optimized point.  
    \item \textbf{BO-qEI}~\citep{reparameterization2017} refers the Bayesian Optimization with quasi-Expected Improvement acquisition function. The method tries to fit the proxy model $\hat{f}$, and maximizing it via Gaussian process.
    \item \textbf{CMA-ES}~\citep{Hansen06} refers the covariance matrix adaptation. The idea is to maintain a Gaussian distribution of the optimized design. At each step, samples of designs are drawing from the Gaussian distribution, and the covariance matrix is re-fitted by the top-performing designs estimated by a pre-trained proxy model $\hat{f}(\bx)$.
    \item \textbf{Grad./Grad.Min/Grad.Mean}~\citep{trabucco2021conservative} are a family of methods that use gradient ascent to find the optimal design. All of them require a pre-trained proxy model. For Grad., the proxy model is trained by minimizing the $L_2$ norm of $\hat{f}(\bx)$ and the ground-truth value $y$. For Grad.Min, it learns 5 different proxy models, and use the minimum of it as the predicted value. For Grad.Mean, it uses the mean of the 5 models. In the optimization stage, gradient ascent on the proxy model is used to find the optimized point.
    \item \textbf{REINFORCE}~\citep{Williams92} pre-trains a proxy model $\hat{f}(\bx)$ and use the policy gradient method to find the optimal design distributions that maximizes the $\hat{f}(\bx)$.
\end{itemize}

\subsection{Implementation Details}
For all the baseline methods, we directly report performance numbers associated with Design-Bench~\citep{trabucco2021designbench}. Below, we now discuss implementation details for the method developed in this paper, \methodname. Our implementation builds off of the official implementation of COMs, provided by the authors~\citet{trabucco2021conservative}

\textbf{Data preprocessing.} Following prior work~\citep{trabucco2021conservative}, we perform standardization for both the input $\bx$ and objective value $y$ before training the objective model $\widehat{f}$. Essentially, we standardize $\bx$ to $\tilde{\bx}:=\frac{\bx-\mu_{\bx}}{\sigma_{\bx}}$, with $\mu_{\bx}$ and $\sigma_{\bx}$ stands for the empirical mean and variance for $\bx$, similarly for $\tilde{y}$ with $\tilde{y}:=\frac{y-\mu_y}{\sigma_y}$.

\textbf{Neural networks representing $\widehat{f}$ and $\phi$.} We parameterize the learned model $\widehat{f}$ and the representation $\phi(\cdot) \in \mathbb{R}^{64}$ using three hidden layer neural networks, that assume identical architectures for all the tasks. The representation network, $\phi_\eta(\bx)$ is a neural network with two hidden layers of size 2048, followed by Leaky ReLU (using default leak value of 0.3). The representation is of size $128$. In some preliminary experiments, we observed that our method, \methodname\ was pretty robust to the dimension of the representation, so we simply chose to utilize $128$-dimensional representations for all the task, with no tuning. For the network representing $\widehat{f}$, we utilize a neural network with two hidden layers of size 1024, followed by Leaky ReLU (using default leak 0.3), the output is a 1-dimensional scalar used to predict the value $y$.

\textbf{Hyperparameters.} We utilize identical hyperparameters for all tasks, and these hyperparameters are taken directly from the implementation of \citet{trabucco2021conservative}. For training the learned model ${f}_\theta(\phi(\cdot))$, we use Adam optimizer with default learning rate $0.001$ and $(\beta_1, \beta_2) = (0.9, 0.999)$. The batch size is 128 and the number of training epochs is 300. For training the representation network, the optimizer is Adam with learning rate 0.0003 and default $(\beta_1, \beta_2) = (0.9, 0.999)$. We represent $\Opt$ as a non-parameteric distribution represented by 128 particles, initialized from 128 randomly sampled inputs $\bx$ in the dataset. We then alternate between performing one gradient step on $\Opt$ and one gradient step on $f_\theta$ and $\phi$. To select a value for the hyperparameter $\lambda$ (weight for the invariance regularizer term), we run our method with a wide range of $\lambda$ values  $\Lambda = \{0.1, 0.5, 1.0, 2.0, 5.0, 10.0, 100.0\}$. In the main experimental results we report, following the convention in prior work~\citep{trabucco2021designbench}, we choose $\lambda$ to be uniform across all tasks, but $\lambda$ is chosen via online evaluation. For our tuning procedure, we apply our tuning procedure where we keep the top 45\% models which achieve a
discriminator loss closes to 0.25, and select the one with the smallest validation prediction error among it . We perform an ablation of this strategy and the quantile of models that are kept in Section~\ref{sec:ablations}.

For the \conmethodname\ variant we show in our experiments, we set the initial value of the Lagrange multiplier $\alpha=0.3$--and this value is taken directly from \citet{trabucco2021conservative}--and update it using Adam optimizer with learning rate 0.01. 

We release the code for our method along with additional details at the following anonymous website: \url{https://sites.google.com/view/iom-neurips}.
% Code:
% \url{https://anonymous.4open.science/r/IOMsubmit-265E}

\subsection{Additional Results}
In this section, we provide additional experimental results.
\label{subsec:app_result}

\subsection{Overall Performance Comparisons} 
In Table~\ref{tab:perf}, we provide the comparison of \methodname\ and \conmethodname\ with all baselines (including RoMA). To understand the trend in aggregated results, please check the plots in Figure~\ref{fig:perf_all}.
\begin{figure}[htp]
\vspace{-0.2cm}
\centering
    \includegraphics[height=5cm]{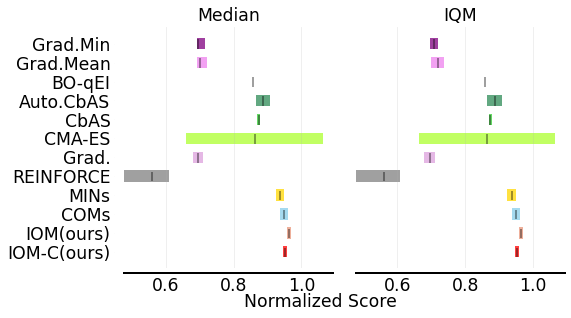}
    \caption{\footnotesize Median and IQM~\citep{agarwal2021deep} (with 95\% Stratified Bootstrap CIs) for the aggregated normalized score on Design-bench, for all the baseline methods. As we could not find the individual runs for RoMA~\citep{yu2021roma}, we report the mean and standard deviation for RoMA (copied from the results in~\citet{yu2021roma}) in Appendix~\ref{subsec:app_result}. }
    \label{fig:perf_all}
\end{figure}

\begin{table*}[hbt!]
    \centering
    \setlength\tabcolsep{4pt}
    \caption{\textbf{Comparative evaluation of \methodname\ and \conmethodname\ against prior methods} in terms of the mean 100th-percentile score (following the protocol in \citet{trabucco2021designbench}) and its standard deviation over 5 trials (the number for ROMA is copied from~\citet{yu2021roma}, and it is over 16 trails). {We have now additionally evaluated the LCB+GP baseline that optimizes the design against the lower-confidence bound estimate obtained from the posterior of a GP fit to the training data. Observe that \methodname\ outperforms this approach as well.}}
    \begin{tabular}{l||r|r|r|r}
    \midrule
    {}            &    \textbf{Superconductor} &     \textbf{Ant Morphology} &     \textbf{D'Kitty Morphology} &     \textbf{Hopper Controller}\\
    \midrule
    $\mathcal{D}$ (\textbf{best}) & 0.399 & 0.565 & 0.884 & 1.0 \\
    Auto. CbAS    & 0.421 $\pm$ 0.045 &    0.882 $\pm$ 0.045 &          0.906 $\pm$ 0.006 &      0.137 $\pm$ 0.005 \\
    CbAS          & \textbf{0.503 $\pm$ 0.069} &          0.876 $\pm$ 0.031 &          0.892 $\pm$ 0.008 &          0.141 $\pm$ 0.012\\
    MINs          & 0.469 $\pm$ 0.023 &          0.913 $\pm$ 0.036 &          0.945 $\pm$ 0.012 &          0.424 $\pm$ 0.166  \\
    COMs & 0.439 $\pm$ 0.033 & 0.944 $\pm$ 0.016 & 0.949 $\pm$ 0.015 & \textbf{2.056 $\pm$ 0.314}\\
    RoMA & \textbf{0.562 $\pm$ 0.030} & 0.875 $\pm$ 0.013 & \textbf{1.036 $\pm$ 0.042} & 1.867 $\pm$ 0.282 \\
    BO-qEI        & 0.402 $\pm$ 0.034 &          0.819 $\pm$ 0.000 &          0.896 $\pm$ 0.000 &          0.550 $\pm$ 0.118 \\
    CMA-ES        & 0.465 $\pm$ 0.024 &         \textbf{ 1.214 $\pm$ 0.732} &          0.724 $\pm$ 0.001 &          0.604 $\pm$ 0.215\\
    Grad.         & \textbf{0.518 $\pm$ 0.024} &          0.293 $\pm$ 0.023 &          0.874 $\pm$ 0.022 &          1.035 $\pm$ 0.482 \\
    Grad. Min     & 0.506 $\pm$ 0.009 &          0.479 $\pm$ 0.064 &          0.889 $\pm$ 0.011 &         \textbf{ 1.391 $\pm$ 0.589}\\
    Grad. Mean     & 0.499 $\pm$ 0.017 &          0.445 $\pm$ 0.080 &          0.892 $\pm$ 0.011 &         \textbf{ 1.586 $\pm$ 0.454}\\
    REINFORCE     & 0.481 $\pm$ 0.013 &          0.266 $\pm$ 0.032 &          0.562 $\pm$ 0.196 &         -0.020 $\pm$ 0.067 \\
    {LCB+GP}     & {N/A}  &   {0.765 $\pm$ 0.056}
       &    {0.871 $\pm$ 0.009}   &   {1.736 $\pm$ 0.261}     \\
    \midrule
    \textbf{\methodname\ (Ours)} & \textbf{0.504 $\pm$ 0.040} & 0.977  $\pm$  0.009 & 0.949 $\pm$  0.010 & \textbf{2.444 $\pm$ 0.080}\\ 
    \textbf{\conmethodname\ (Ours)} & \textbf{0.511 $\pm$ 0.037}  & 0.966  $\pm$ 0.012  & 0.940  $\pm$ 0.007  & \textbf{1.482  $\pm$ 0.650} \\
    \bottomrule
    \end{tabular}
    \label{tab:perf}
    \vspace{-.3cm}
\end{table*}

\section{Additional results for early stopping the optimization of $\Opt$}
We provide additional results for the effectiveness of our tuning strategy for early stopping. Results for the additional two tasks are presented in Figure~\ref{fig:app_checkpoint}. The top row shows the objective value of the optimized point as a function of the number of training epochs, and the bottom row shows the prediction error on the validation set. We use the vertical line to denote the point with the smallest MSE by our early stopping strategy. Among all tasks, our strategy is able to find a good checkpoint and this verifies the efficacy of our approach on different domains.

\begin{figure}[htp]
    \centering
    \includegraphics[width=0.99\textwidth]{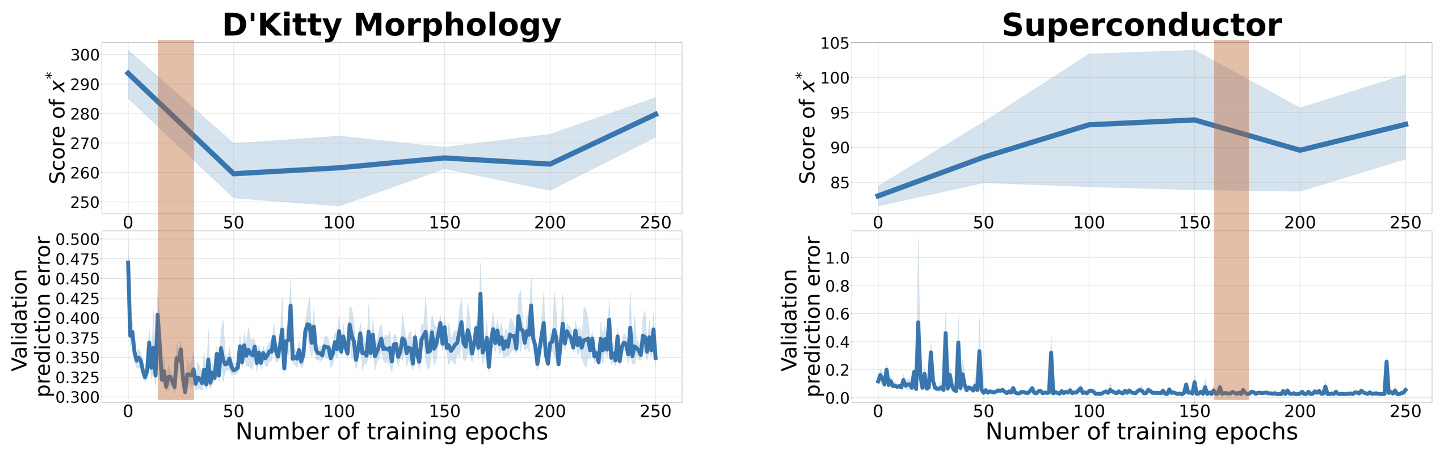}
    \caption{\footnotesize \textbf{Visualizing our early stopping scheme on two more tasks.} As per Section~\ref{sec:cross_validation}, the checkpoint selected is the epoch that has the smallest validation prediction error, and early stopping at this point generally performs well.}
    \label{fig:app_checkpoint}
\end{figure}
    
\subsection{Performance of \methodname\ and \conmethodname\ under different invariance regularizer $\lambda$}
\label{app:lambda}

In this section, we provide the final performance of \methodname\ and \conmethodname, under different values of the hyperparameter $\lambda$ used for training (Equation~\ref{eqn:training_ioms}). Results for all tasks are displayed in Figure~\ref{fig:all_scores}. As is the case with any other prior method for data-driven decision making that adds a regularizer, both \methodname\ and \conmethodname\ are also sensitive to the hyper-parameter $\lambda$, which demonstrates the necessity of our proposed tuning scheme. And fortunately, as observed in Section~\ref{sec:exp}, our tuning scheme can indeed attain good performance, for many choices of the $\epsilon$ (see Figure~\ref{fig:ablation}), making it easier to tune this hyperparameter $\lambda$ in practice. 

\begin{figure*}[ht]
    \vspace{-0.05in}
    \centering
    {\includegraphics[width=0.49\textwidth]{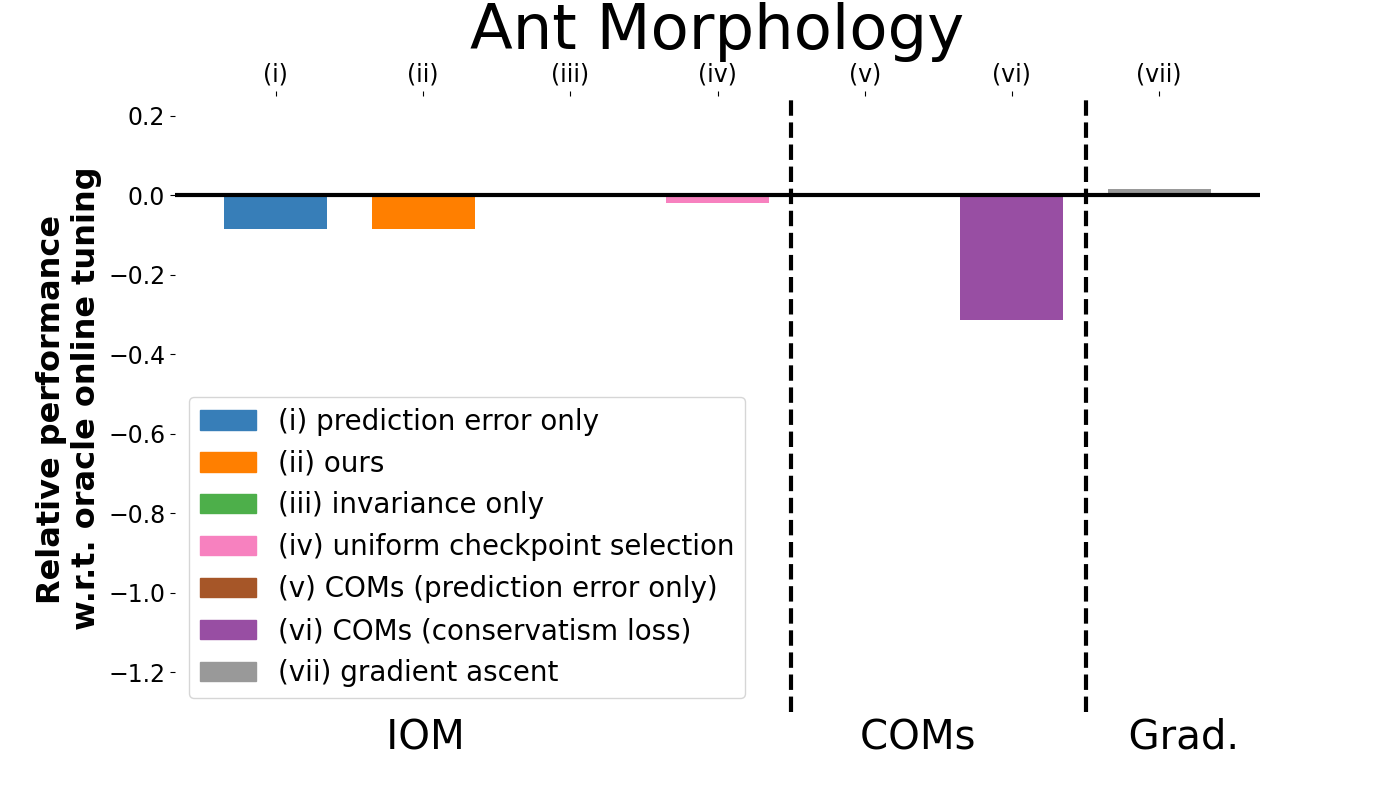}} 
    {\includegraphics[width=0.49\textwidth]{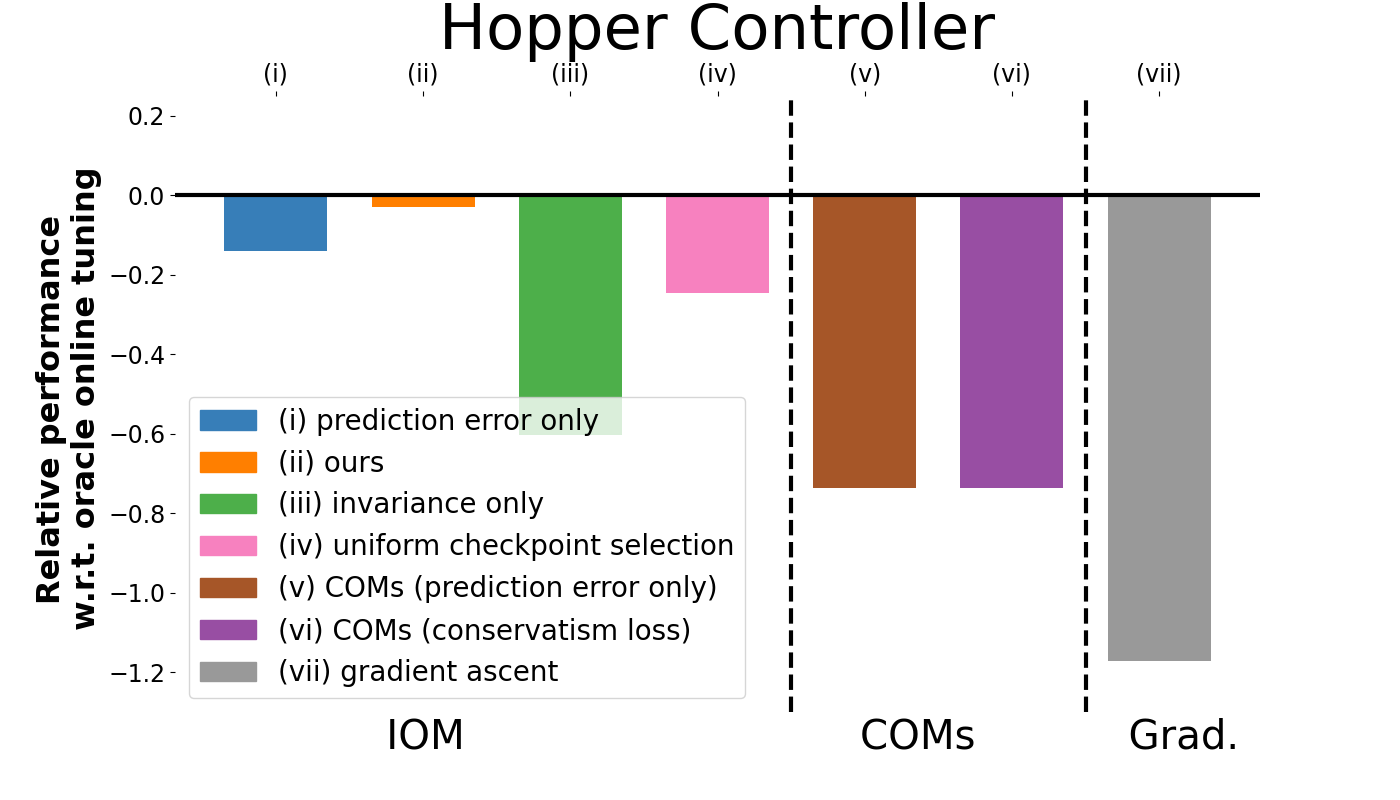}} 
    {\includegraphics[width=0.49\textwidth]{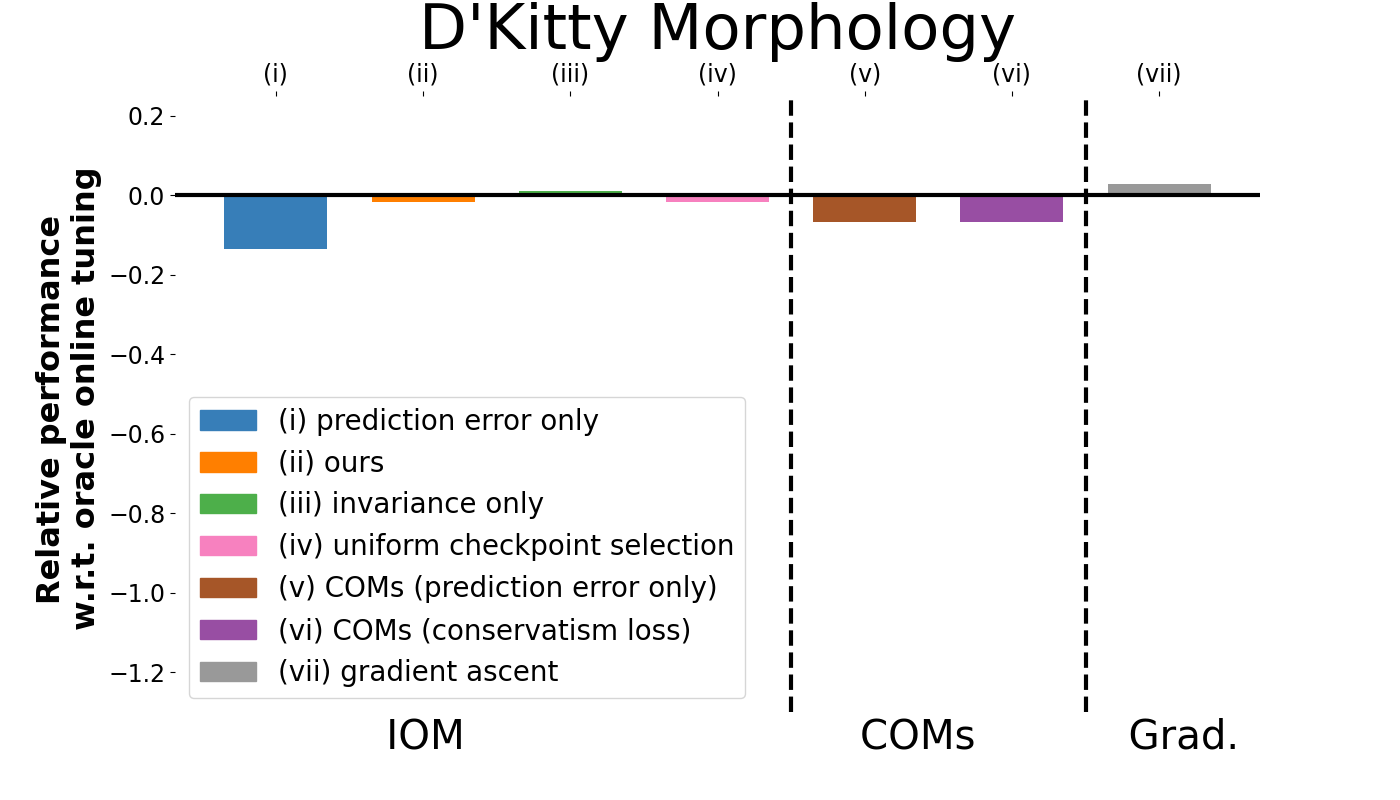}} 
    {\includegraphics[width=0.49\textwidth]{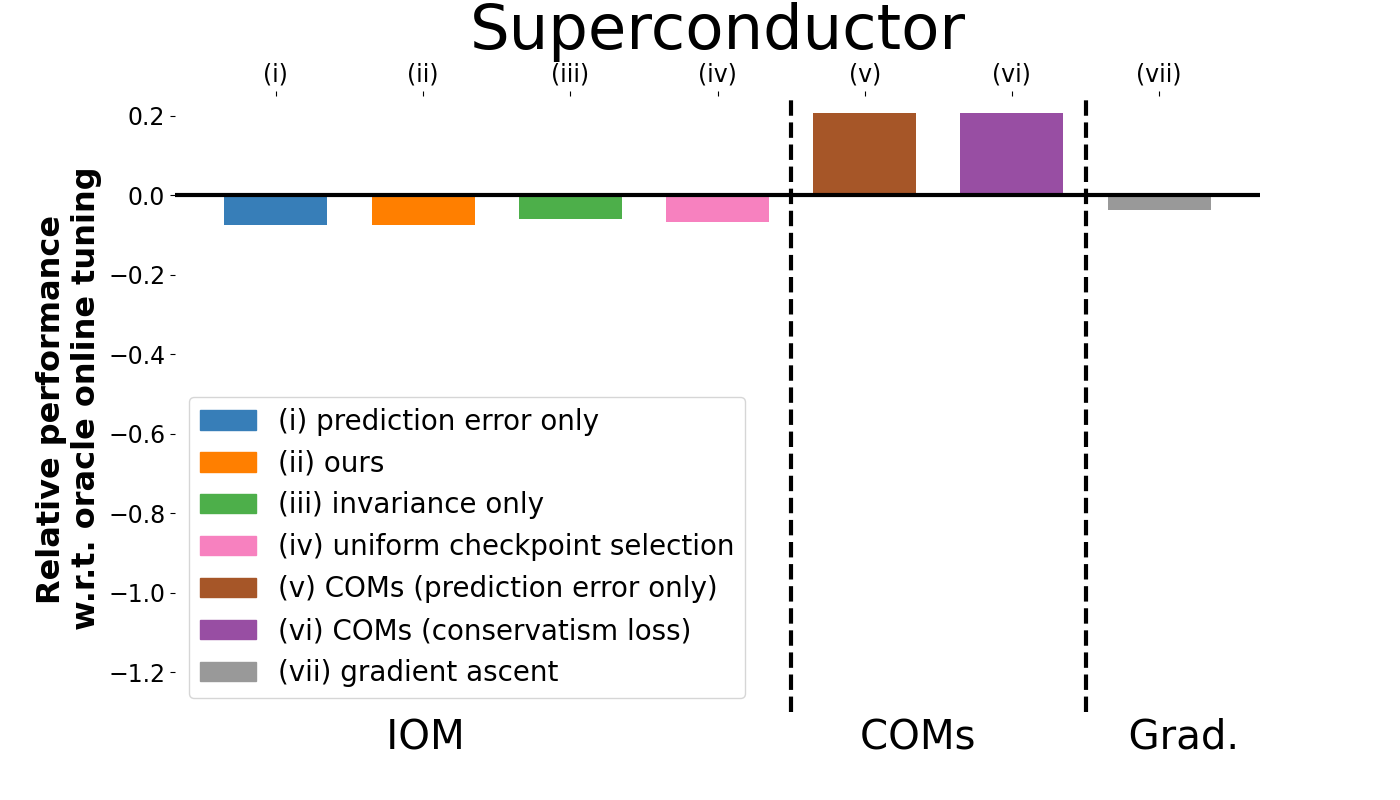}}
    
    \caption{Suboptimality gaps of various offline tuning strategies for \methodname\ when compared in each individual task.}
    \label{fig:mmd_all}
    \vspace{-0.15in}
\end{figure*}

\begin{figure*}
    \vspace{-0.05in}
    \centering
    {\includegraphics[width=0.48\textwidth]{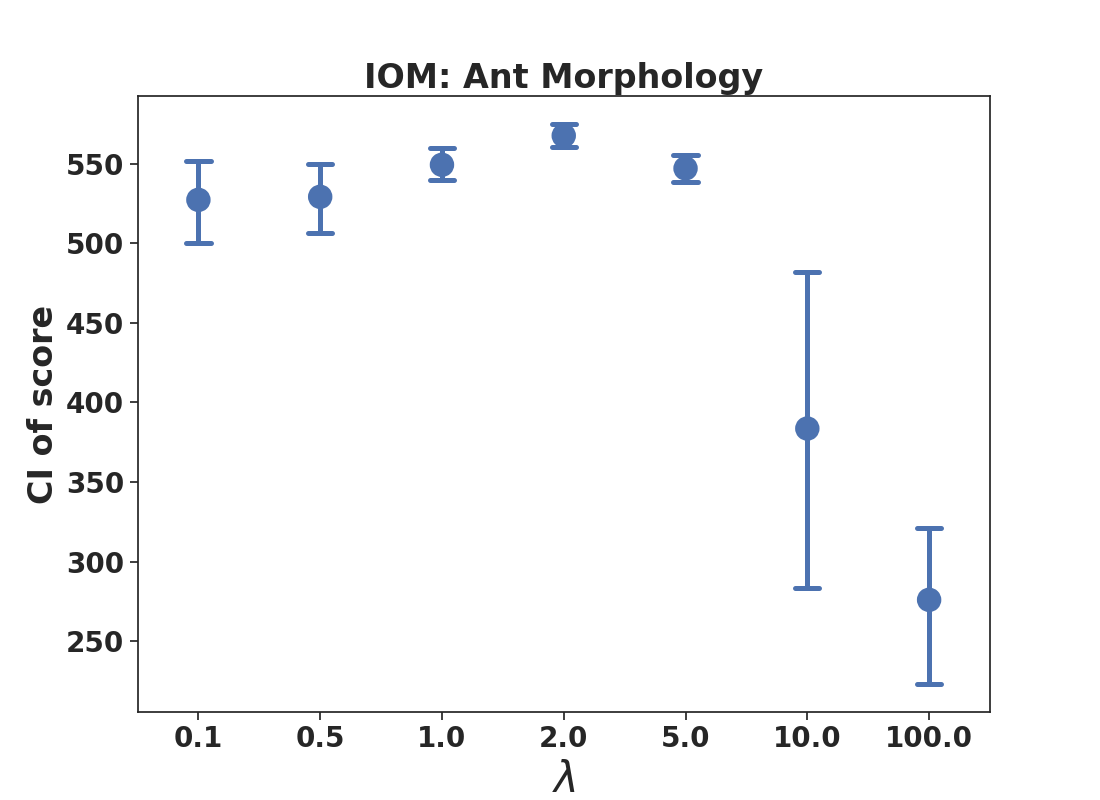}} 
    {\includegraphics[width=0.48\textwidth]{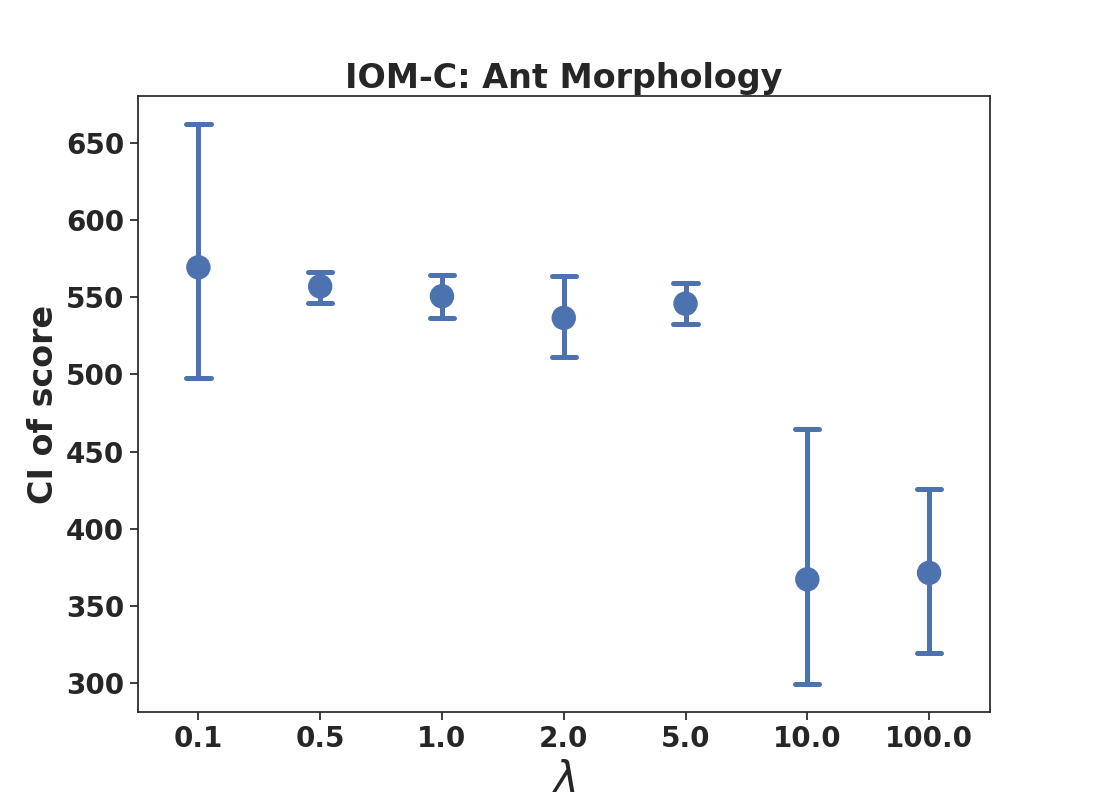}} 
    {\includegraphics[width=0.48\textwidth]{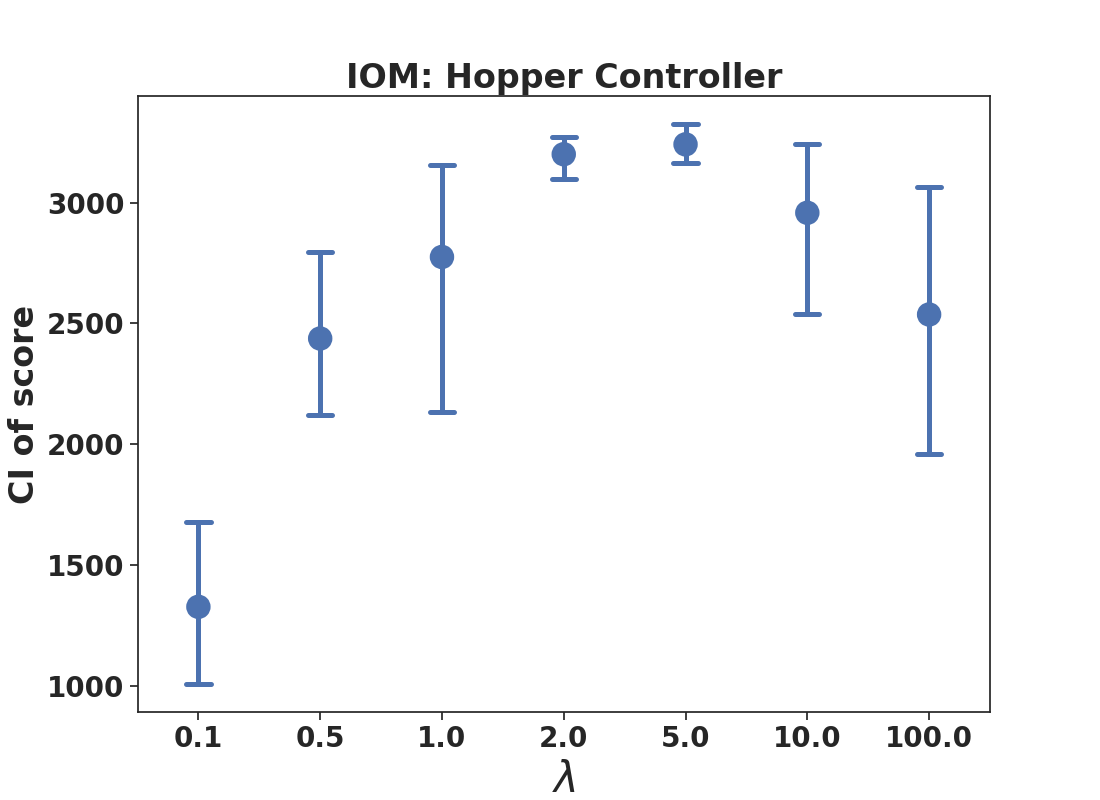}} 
    {\includegraphics[width=0.48\textwidth]{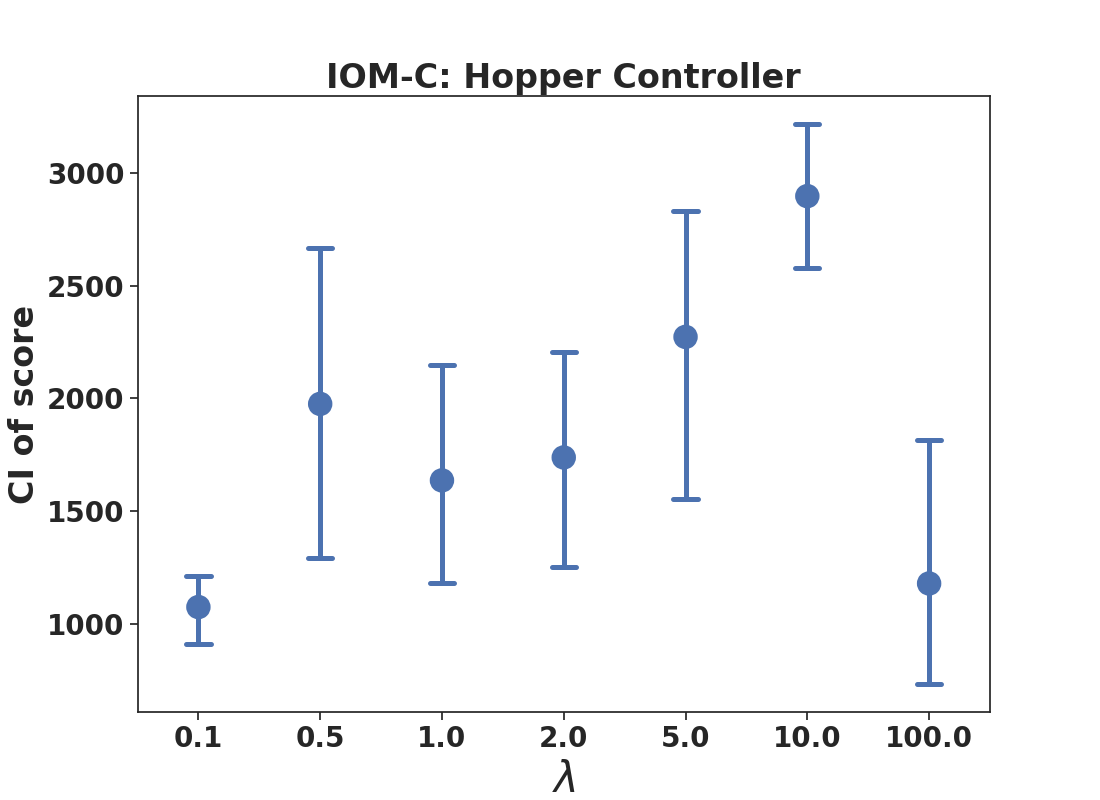}}
    {\includegraphics[width=0.48\textwidth]{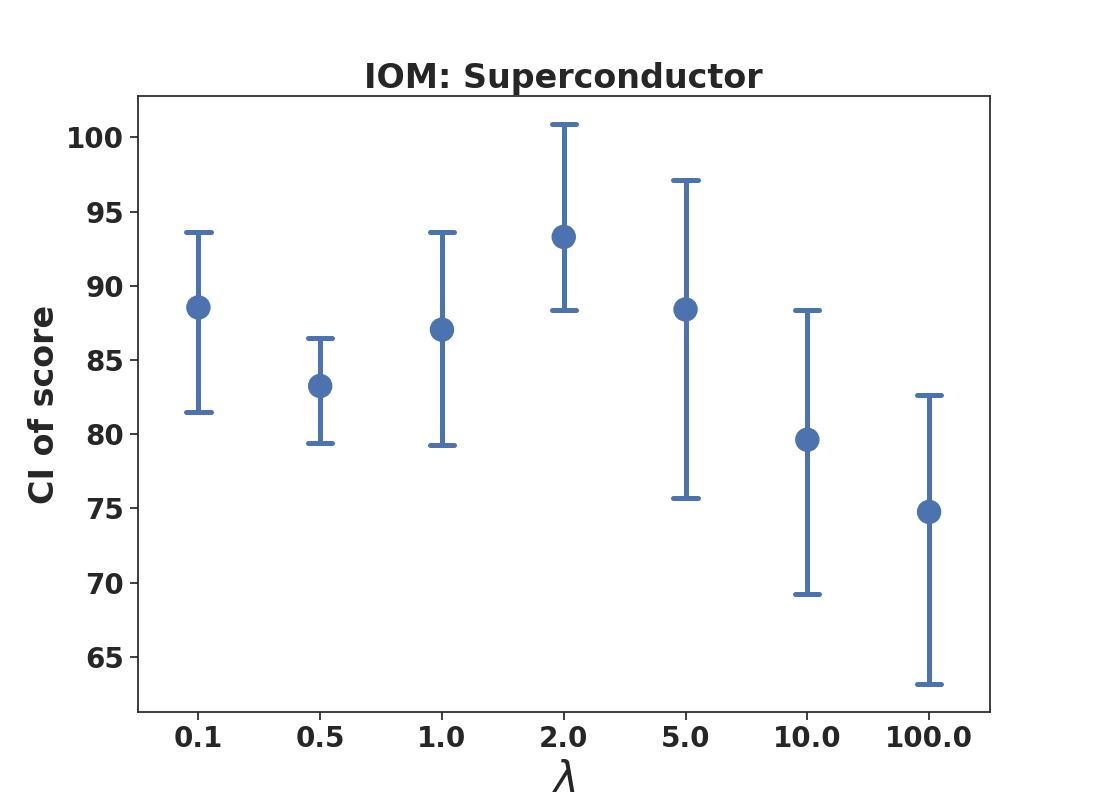}} 
    {\includegraphics[width=0.48\textwidth]{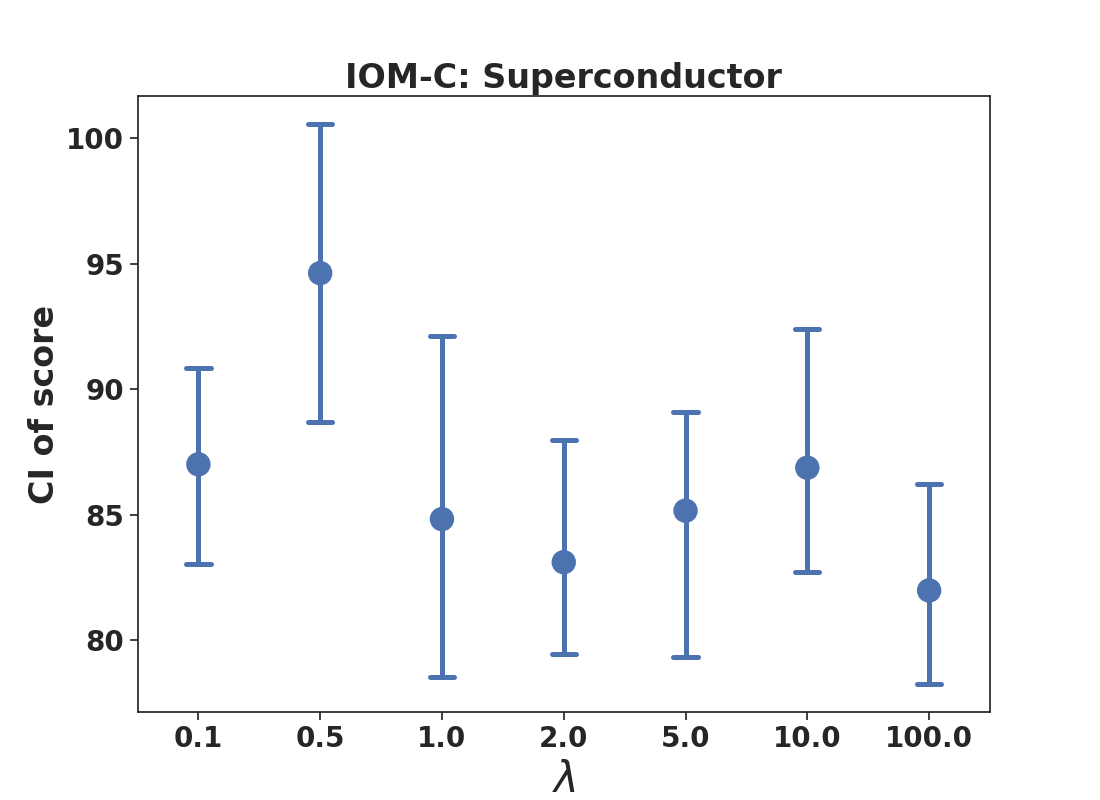}}
    {\includegraphics[width=0.48\textwidth]{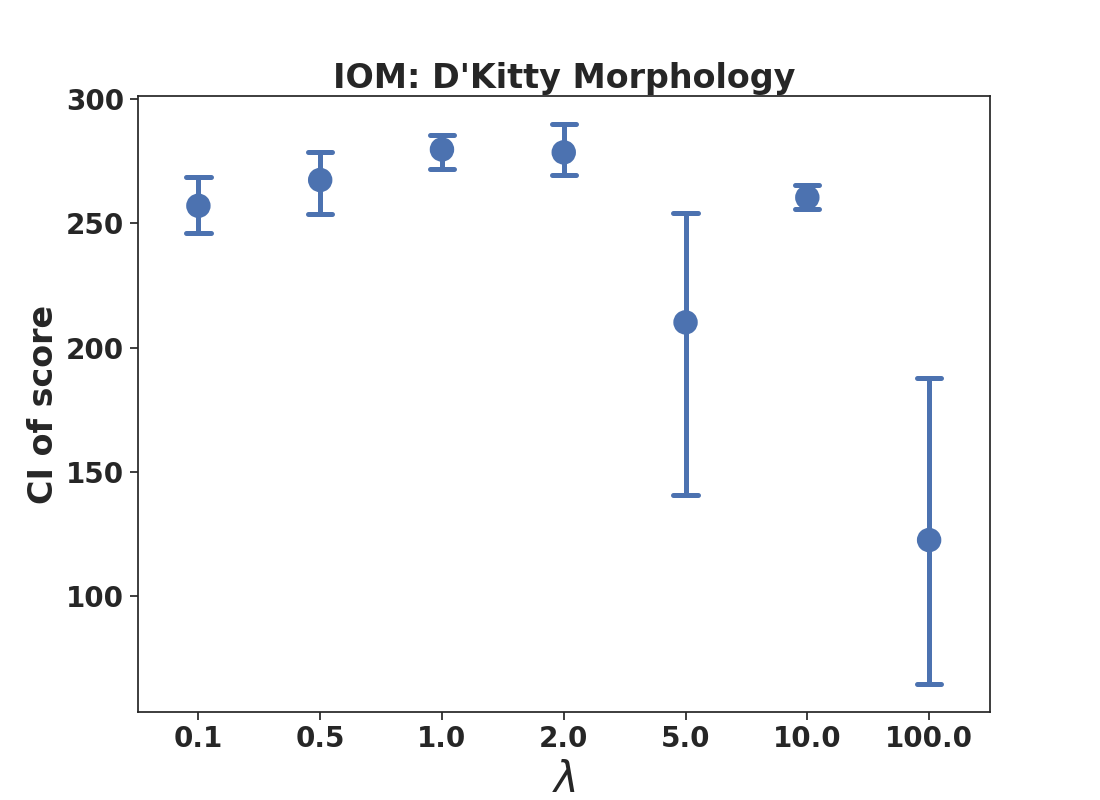}} 
    {\includegraphics[width=0.48\textwidth]{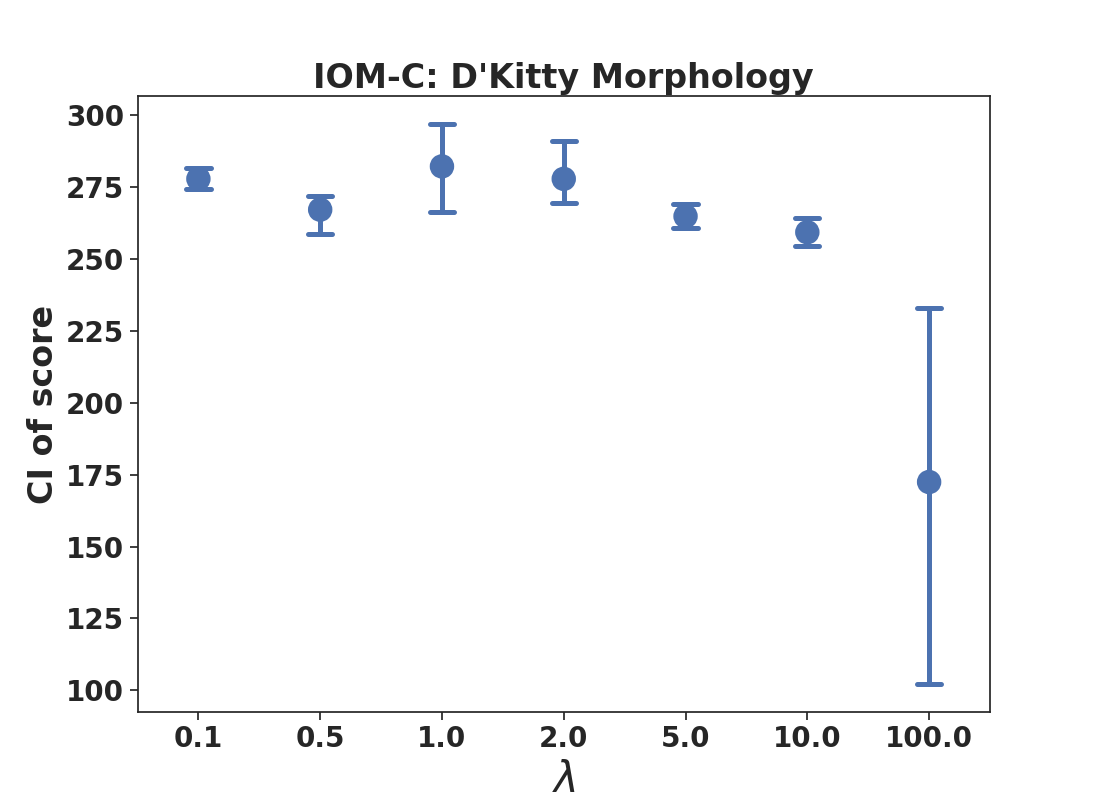}}
    \caption{Score of final optimized $\bx^*$ (with 2 standard deviations) of \methodname\ and \conmethodname\ under different invariance regularizer $\lambda$}
    \label{fig:all_scores}
    \vspace{-0.15in}
\end{figure*}

\newpage

\section{{Additional Results}}
\label{app: additional}

{\underline{\textbf{Computational cost of tuning \methodname\ via our offline workflow.}}} 
{In order to tune \methodname\ via our offline workflow we first run \methodname\ with multiple different hyperparameters. Concretely, in our experiments, we use 7 different hyperparameters, and then apply our offline tuning strategy, which only utilizes the loss values of the various runs, without incurring any significant computational cost of its own.}

{Thus, our hyperparameter tuning strategy does not incur additional computational cost. When it comes to the cost of running \methodname, we expect that beyond implementation differences, the cost of running \methodname\ should not be significantly worse than COMs: the only additional component trained by \methodname\ is the discriminator that is used to compute the $\chi^2$-divergence on the representation, which in our implementation only adds 2 more fully connected layers.}

{\underline{\textbf{COMs lead to worse training prediction error and more non-smooth functions compared to \methodname.}} Next we attempt to gain insights into what might explain why \methodname\ outperforms COMs in several of our tasks. We suspect that this is because applying a conservatism regularizer on the learned function value itself, like COMs~\citep{trabucco2021conservative}, does in fact distort the learned objective function more than \methodname. To understand if this is the case empirically, we plot two quantities: \textbf{(a)} the value of the training prediction error: $\mathbb{E}_{\mathbf{x}, y \sim \mathcal{D}}\left[ (\widehat{f}_\theta(\mathbf{x}) - y)^2 \right]$ for the best hyperparameter for COMs and \methodname\ in Figure~\ref{fig:distortion}, and \textbf{(b)} the ``smoothness'' of the learned function w.r.t. input, measured via the norm of the gradient of $f_\theta$ with respect to $\mathbf{x}$, i.e., $\mathbb{E}_{\mathbf{x} \sim \mathcal{D}}\left[ ||\nabla_x f_\theta(\phi(\bx)||_2^2 \right]$ in Figure~\ref{fig:grad_norm}. Observe that in both the Ant Morphology and Hopper Controller tasks, \methodname\ is able to minimize the prediction error better than COMs, and the gradient norm for COMs is strictly higher than \methodname\ indicating that conservatism on learned function leads to more non-smooth functions and reduces the ability of the learned function to fit the offline dataset.}

\begin{figure*}[ht]
    \vspace{-0.05in}
    \centering
    {\includegraphics[width=0.43\textwidth]{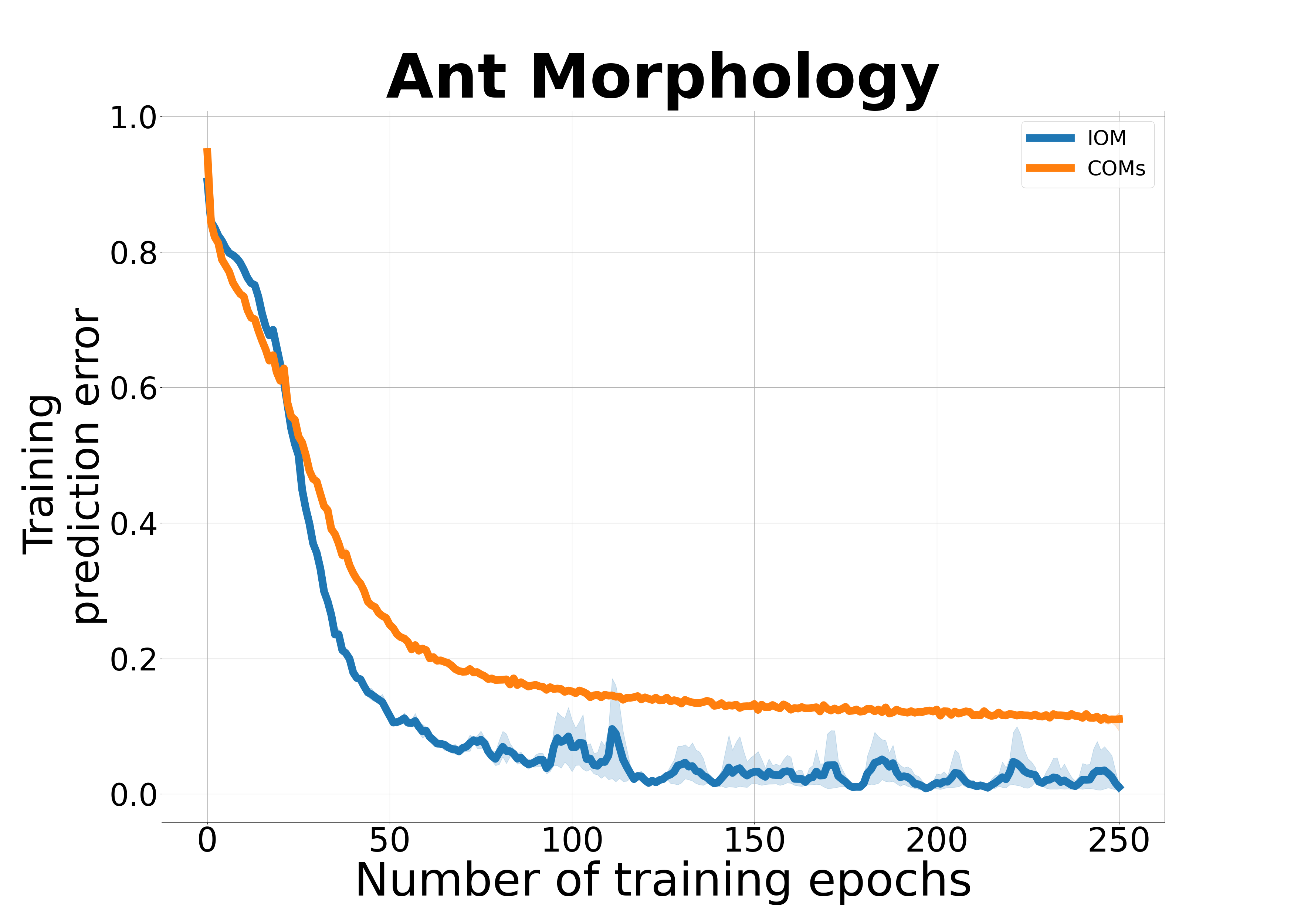}} 
    {\includegraphics[width=0.43\textwidth]{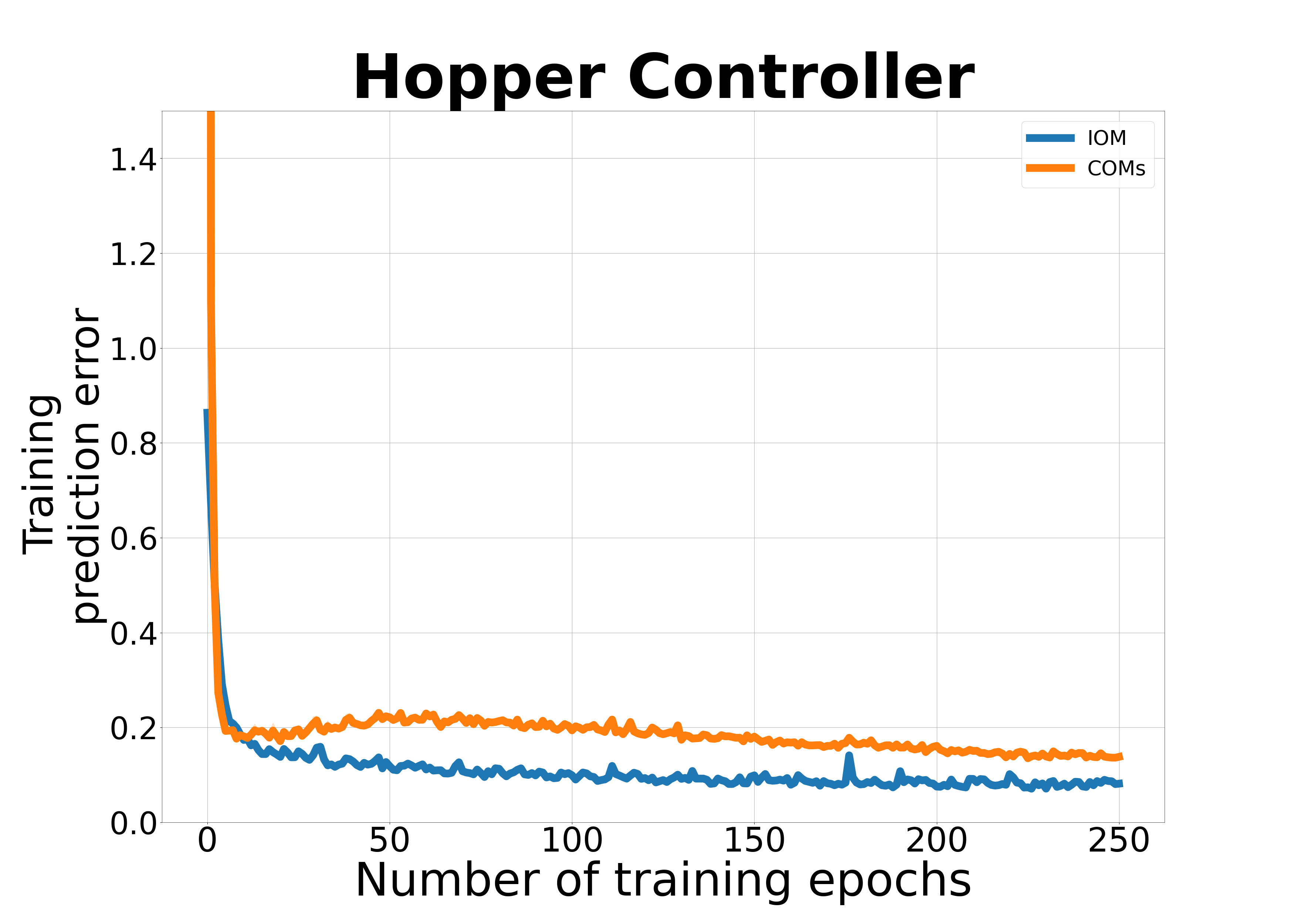}} 
    
    \caption{\footnotesize {\textbf{Comparison of prediction error between \methodname\ and COMs.} Note that COMs attains a worse prediction error than \methodname.}}
    \label{fig:distortion}
    \vspace{-0.15in}
\end{figure*}

\begin{figure*}[ht]
    \vspace{-0.05in}
    \centering
    {\includegraphics[width=0.43\textwidth]{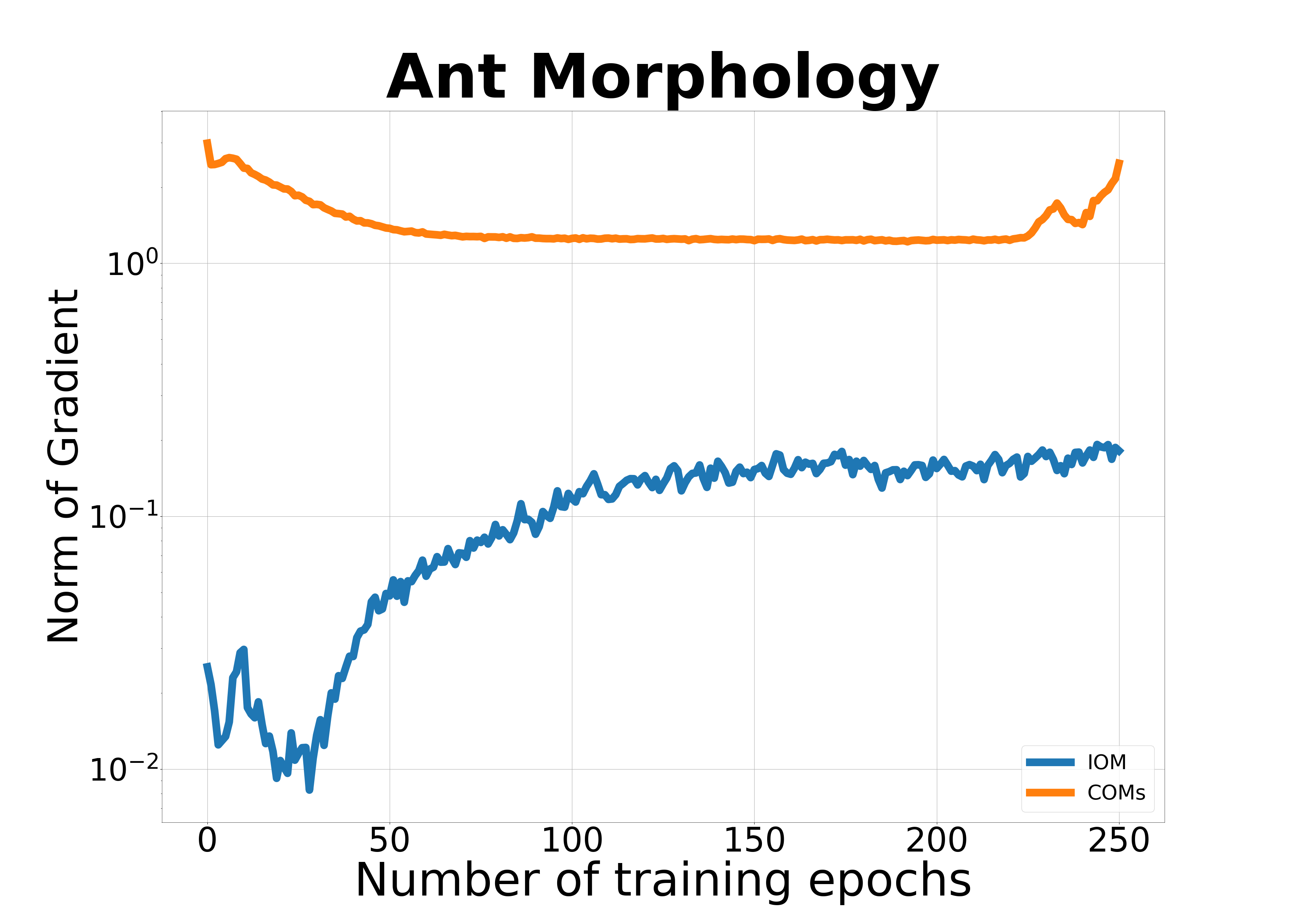}} 
    {\includegraphics[width=0.43\textwidth]{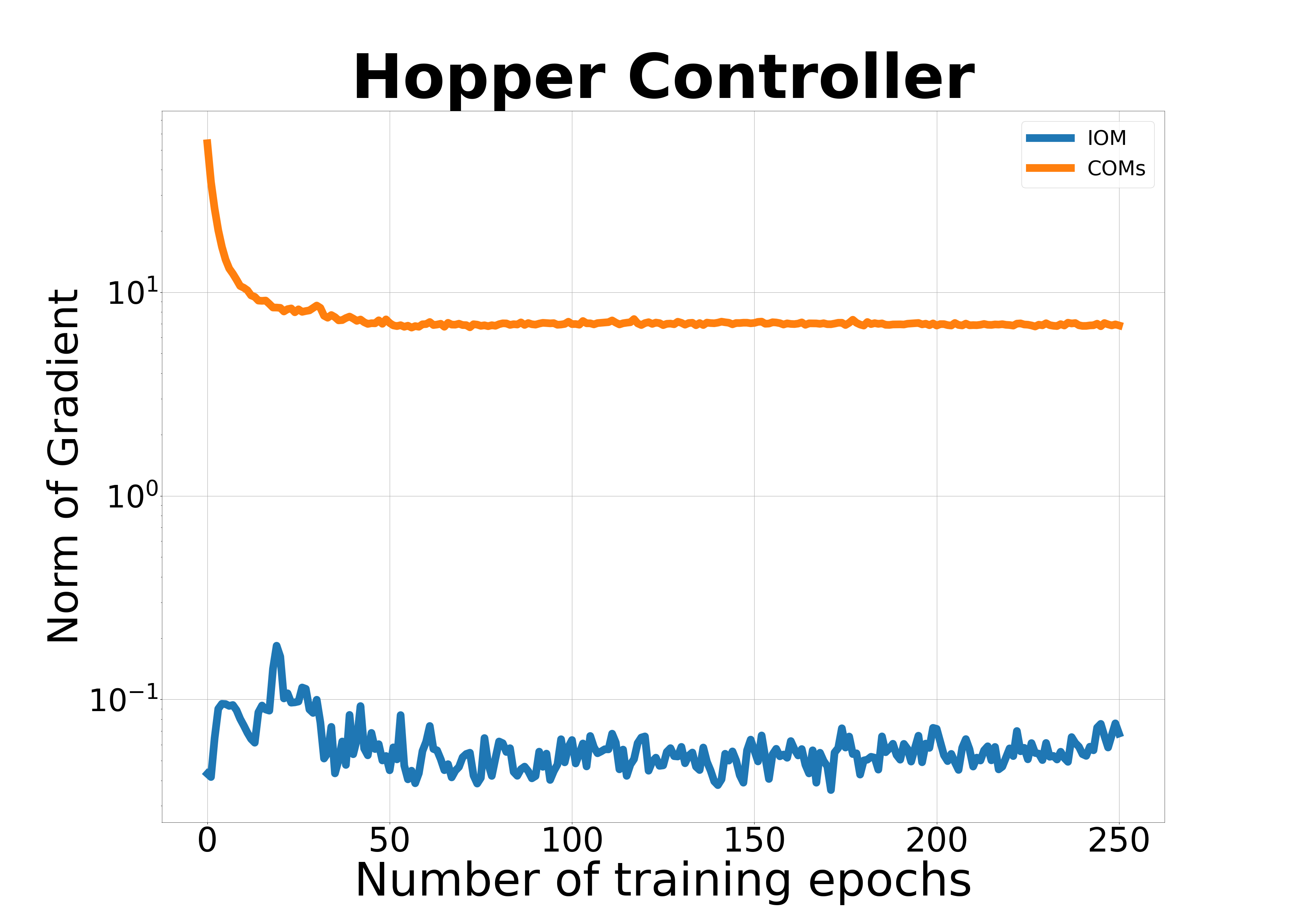}} 
    \caption{\footnotesize {\textbf{Comparison of gradient norm between \methodname\ and COMs during training (log scale).} Note that the gradient norm for COMs is clearly higher, indicating that COMs learn a more non-smooth function than \methodname.}}
    \label{fig:grad_norm}
    \vspace{-0.15in}
\end{figure*}

\pagebreak

\section{{Intuition Behind How \methodname\ Works}}
\label{app:iom_mechanism_intuition}

\begin{figure}[h]
\centering
\includegraphics[width=0.99\linewidth]{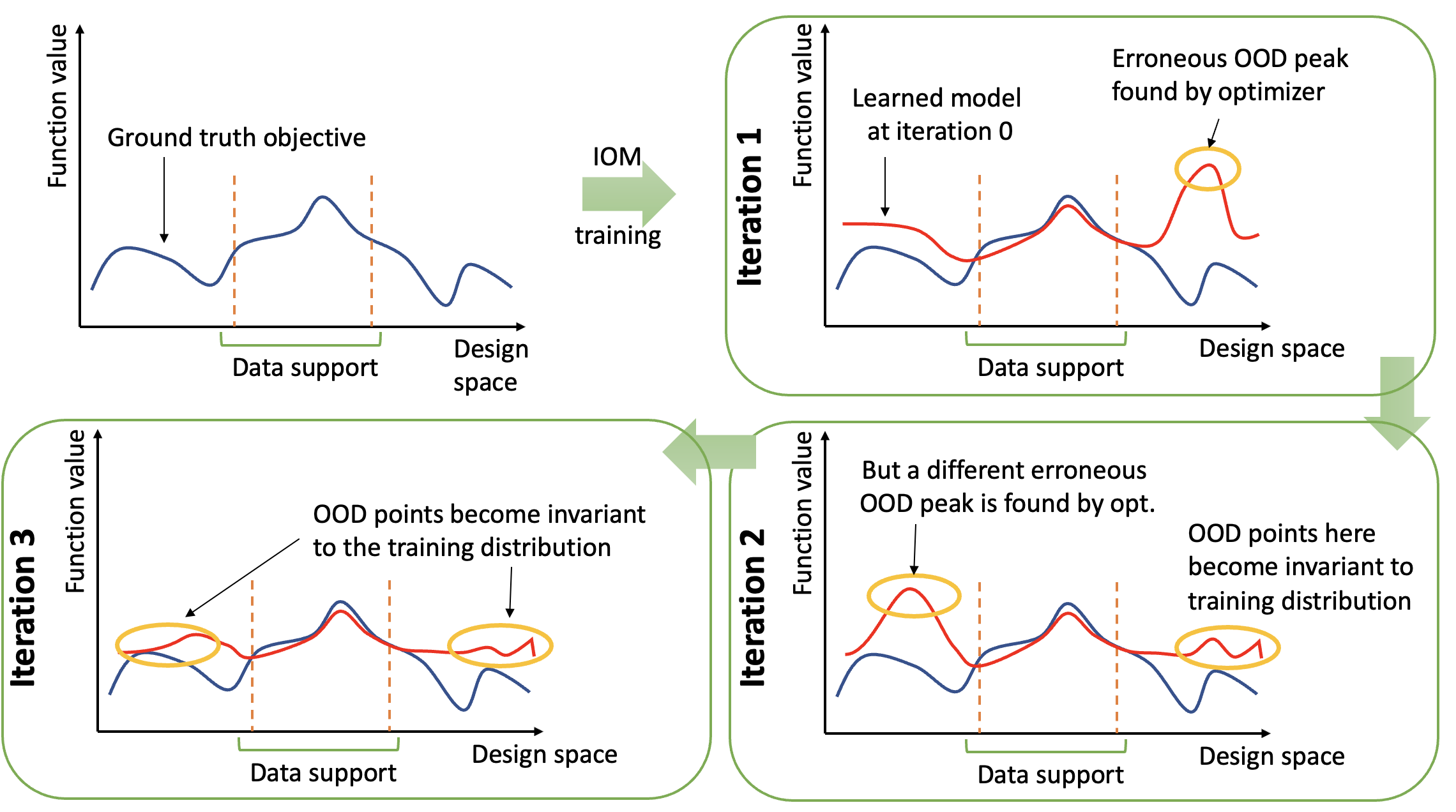}
\caption{\label{fig:intuition_fig} \footnotesize {\textbf{Figure illustrating the intuition behind \methodname.} When we apply the invariance regularizer prescribed by \methodname\ during training, this invariance regularizer would attempt to make the distribution of the representation of the out-of-distribution (OOD) designs found by the optimizer match the distribution of the representations in the training dataset, which would make these points take on a function value that matches the average dataset value. By running this training multiple times, the \methodname\ training procedure would gradually suppress the value of all out-of-distribution erroneous peaks in the learned function, and will enable us to find an optimal design that lies within the training distribution (though not necessarily in the training dataset.)}}
\end{figure}

\end{document}